\DeclareMathOperator*{\argmax}{arg\,max}
\DeclareMathOperator*{\localargmax}{local\,argmax}
\providecommand{\nb}{\nabla}
\theoremstyle{plain}
\newtheorem{theorem}{Theorem}[section]
\newtheorem{proposition}[theorem]{Proposition}
\newtheorem{conjecture}[theorem]{Conjecture}
\newtheorem{lemma}[theorem]{Lemma}
\newtheorem{corollary}[theorem]{Corollary}
\theoremstyle{definition}
\newtheorem{definition}[theorem]{Definition}
\newtheorem{intuition}[theorem]{Intuition}
\newtheorem{example}[theorem]{Example}
\newtheorem{assumption}[theorem]{Assumption}
\theoremstyle{remark}
\newtheorem{remark}[theorem]{Remark}
\definecolor{green_better}{rgb}{0.0, 0.5, 0.0}
\definecolor{blue_better}{rgb}{0.0, 0.2901960784313726, 0.6784313725490196}
\newcommand{\chen}[1]{\textcolor{cyan}{[Chen: #1]}}
\icmltitlerunning{Self-Correcting Self-Consuming Loops for Generative Model Training}
\begin{document}

\twocolumn[
\icmltitle{Self-Correcting Self-Consuming Loops for Generative Model Training}
\icmlsetsymbol{equal}{*}

\begin{icmlauthorlist}
\icmlauthor{Nate Gillman}{brown}
\icmlauthor{Michael Freeman}{brown}
\icmlauthor{Daksh Aggarwal}{brown}
\icmlauthor{Chia-Hong Hsu}{brown}\\
\icmlauthor{Calvin Luo}{brown}
\icmlauthor{Yonglong Tian}{google}
\icmlauthor{Chen Sun}{brown,google}

\end{icmlauthorlist}

\icmlaffiliation{brown}{Brown University}
\icmlaffiliation{google}{Google DeepMind}

\icmlcorrespondingauthor{Nate Gillman}{\url{nate_gillman@brown.edu}}
\icmlcorrespondingauthor{Chen Sun}{\url{chensun@brown.edu}}

\icmlkeywords{Machine Learning, Generative Modeling, Self-Consuming Loops, Data Contamination, Deep Learning, Artificial Intelligence, Human Motion Synthesis}

\vskip 0.3in
]

\printAffiliationsAndNotice{}

\begin{abstract}
As synthetic data becomes higher quality and proliferates on the internet, machine learning models are increasingly trained on a mix of human- and machine-generated data. 
Despite the successful stories of using synthetic data for representation learning, using synthetic data for generative model training creates ``self-consuming loops'' which may lead to training instability or even collapse, unless certain conditions are met. 
Our paper aims to stabilize self-consuming generative model training. 
Our theoretical results demonstrate that by introducing an idealized correction function, which maps a data point to be more likely under the true data distribution, self-consuming loops can be made \textit{exponentially} more stable. 
We then propose self-correction functions, which rely on expert knowledge (e.g. the laws of physics programmed in a simulator), and aim to approximate the idealized corrector automatically and at scale.
We empirically validate the effectiveness of self-correcting self-consuming loops on the challenging human motion synthesis task, and observe that it successfully avoids model collapse, even when the ratio of synthetic data to real data is as high as 100\%.
\end{abstract}
\vspace{-1.5em}

\section{Introduction}

\begin{figure}[t]
\vspace{-1em}
\begin{center}
\includegraphics[width=0.95\columnwidth]{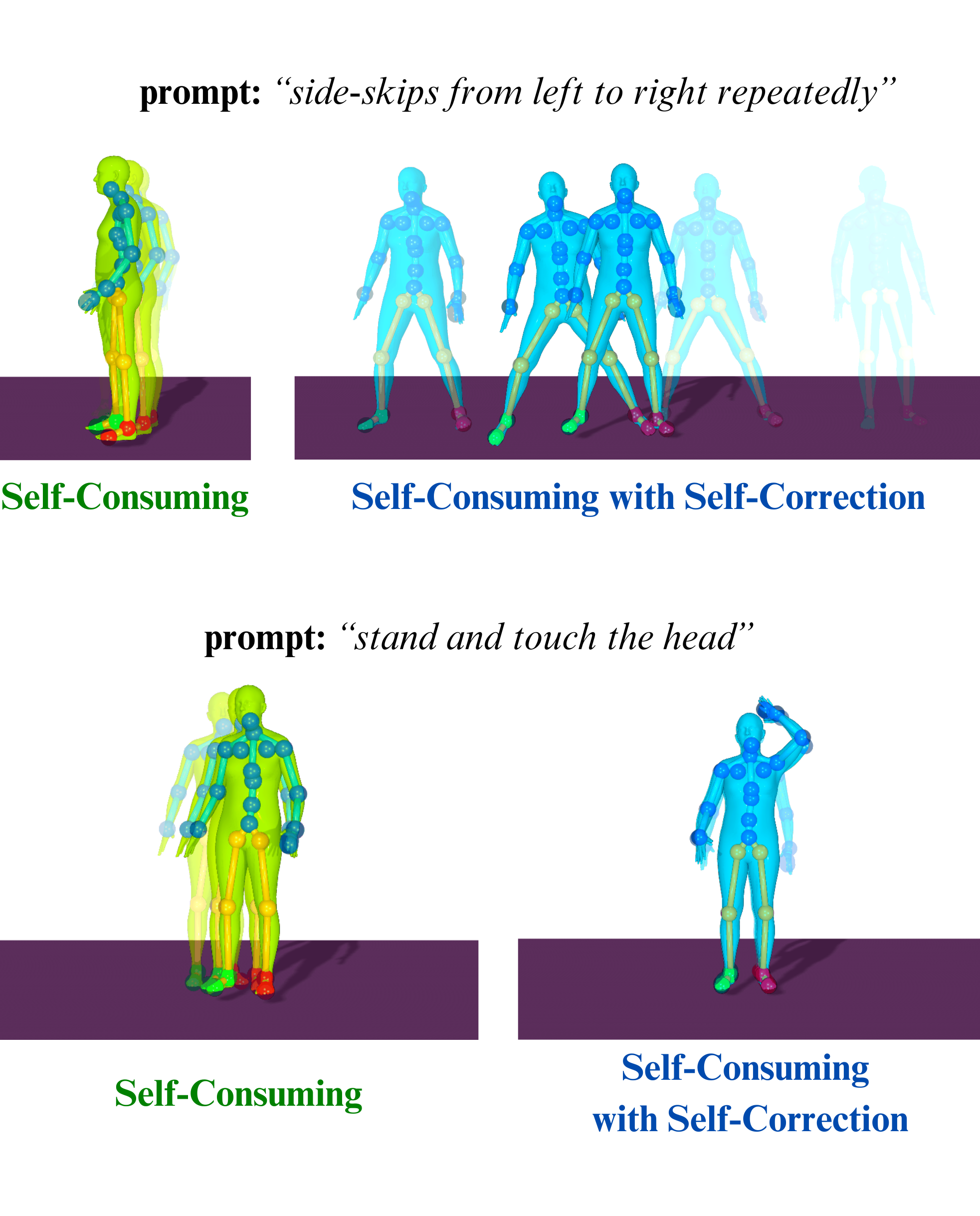}
\vspace{-1em}
\caption{What happens after iteratively training a text-conditioned generative model for human motion synthesis for 50 generations? We simulate a self-consuming loop by creating synthetic data with the latest generative model, and mixing them with the original data to continue training the next generative model. We observe that by self-correcting the synthetic data with a physics simulator, the model can successfully avoid \textbf{\color{green_better}collapse} and generate \textbf{\color{blue_better}high-quality human motion}. 
Faded poses represent poses from further back in time. Our paper provides theoretical and empirical justification for the self-correcting self-consuming loop.}
\label{fig:human_teaser}
\vspace{-2em}
\end{center}
\end{figure}

Generative models have been used to synthesize training data for various learning tasks, to varying degrees of success. For example, for the tasks of image classification and contrastive representation learning, recent work~\cite{azizi2023synthetic,tian2023learning} finds that using data synthesized from generative models rivals using real data.
Unfortunately, there is a gloomier outlook when attempting to generalize this framework to generative model training. 

On one hand, there is evidence to suggest that training a generative model with its own outputs in a self-consuming manner will lead to collapse~\cite{alemohammad2023self}. 
For example, after 50 iterations of self-consuming training, a human motion diffusion model~\cite{tevet2023human} collapses and fails to follow the text prompts or the laws of physics (see the two examples on the left of Figure~\ref{fig:human_teaser}).

On the other hand, evidence suggests that such a framework could avoid collapse, but only when a ``moderate'' amount of synthetic data is used~\cite{bertrand2023stability}. 
Worse still, this self-consuming scenario might happen without us knowing, and without us being able to quantify how much synthetic data is being used during training, due to the wide spread of AI generated content on the internet.

Intuitively, model collapse might be delayed or avoided by incorporating higher quality human generated data~\cite{alemohammad2023self}, or by manually fixing the ``mistakes'' in machine created data. 
Considering the size of datasets used in practice~\cite{schuhmann2022laion}, neither of these options is a scalable solution.

In this paper, we aim to provide a theoretical analysis of how certain operations would avoid collapse in self-consuming loops, without any assumptions on the ``moderateness'' of synthetic data corruption. 
We introduce the mathematical abstraction of a \emph{self-correction operation}.
This operation maps synthesized data that are sampled from the generative model to data that are better representatives from the target probability distribution that the model is attempting to approximate.
Instead of training on a combination of real data and synthesized data, we propose training on a combination of real data and synthesized \emph{and then self-corrected} data.
Note that injecting fresh human generated data can be viewed as a special case of this operation.

Our \textbf{main theoretical findings} (Theorem~\ref{thm:2_shortened}):
\vspace{-3mm}
\begin{enumerate}[(1)]
    \setlength\itemsep{0.0em} 
    \item The self-consuming model with self-correction is \textit{exponentially} more stable than the self-consuming model without any self-correction.
    \item The self-correction procedure guarantees less unwanted variance during self-consuming model training. 
\end{enumerate}
\vspace{-3mm}

In our theoretical study, we assume that correction is \emph{ideal} in order to obtain rigorous performance guarantees.
In our empirical study, we evaluate whether the same conclusions hold for \emph{noisy} self-correction functions. We propose to automate this ``self-correction'' process by relying on programmed expert knowledge rather than a human-in-the-loop, such that the function can be applied at scale. We focus on the human motion synthesis task~\cite{Guo_2022_CVPR}, and implement the self-correction function with a physics simulator-based imitation model \cite{Luo2021DynamicsRegulatedKP}.
Our \textbf{empirical results} confirm that our theoretical findings hold in practice:
\vspace{-3mm}
\begin{enumerate}[(1)]
    \setlength\itemsep{0.1em} 
    \item As illustrated in Figure~\ref{fig:human_teaser},
    the self-correcting self-consuming model generates higher-quality human motion than the one without any self-correction.
    \item The self-correction function allows self-consuming loops to avoid collapse even at a high synthetic data to real data ratio (e.g. 100\%).
\end{enumerate}
\vspace{-3mm}

Our theory and experiments suggest that self-correction should stabilize self-consuming model training for any generative modeling task for which there exists a high quality ``self-correction'' function. We have released all the code associated with this paper.\footnote{Project page: \href{https://nategillman.com/sc-sc.html}{https://nategillman.com/sc-sc.html}}

\section{Related Work}

\subsection{Learning Representations with Synthetic Data}

Real curated datasets are costly to obtain, so there has been much interest in generating synthetic data as training data for various vision tasks. \citet{azizi2023synthetic} demonstrates that text-to-image diffusion models such as Imagen~\citep{saharia2022photorealistic}  can generate synthetic examples that augment the ImageNet dataset for better image classification. 
\citet{he2022synthetic} studies how synthetic data from text-to-image models, when used exclusively, can be used as training data for image recognition tasks. 
Similarly, \citet{tian2023learning} finds that using synthetic outputs from a text-to-image model results in contrastive models whose downstream performance rivals that of CLIP \cite{radford2021learning} on visual recognition tasks, including dense prediction.
And the work in \citet{jahanian2021generative} explored methods for multi-view representation learning by using the latent space of the generative models to generate multiple ``views'' of the synthetic data.
The above works collectively provide evidence that \emph{some} representation learning tasks, when trained on synthetic data from some \emph{given} generative models, yield excellent results.

\subsection{Training Generative Models on Synthetic Data}

Another line of reseach investigates the use of synthetic data for training \emph{generative} models. 
\citet{shumailov2023curse} and \citet{Martinez2023towards} show that the use of model generated content in generative model training results in model degradation, likely because self-consuming loops remove low-density areas from the estimated probability manifold.
\citet{alemohammad2023self} formalize three different kinds of self-consuming generative models: the fully synthetic loop, the synthetic augmentation loop, and the fresh data loop.
In all of these loops, they iteratively re-train the model from scratch for every new generation. They empirically find that only the fresh data loop avoids model degradation.

Another recent work \cite{bertrand2023stability} considers the problem of iteratively fine-tuning in the context of synthetic augmentation loops.
They find that self-consuming augmentation loops do not necessarily collapse, so long as the synthetic augmentation percentage is sufficiently low. 
The authors use techniques from the field of performative stability~\citep{perdomo2020performative} to prove the existence of a convergence phenomenon in the space of model parameters. Our paper differs from prior work as we conduct analysis on self-consuming generative model training when the synthetic data can be optionally \textit{corrected}. The correction can be performed with a human-in-the-loop, or by incorporating learned or programmed expert knowledge, as explored for natural language~\cite{saunders2022self,welleck2022generating,wu2023self} and human motion~\citep{yuan2023physdiff,xu2023interdiff}. We validate our theory with a practical self-correcting operations designed for image generation and human motion synthesis tasks.

\begin{algorithm*}[t]
   \caption{Iterative Fine-tuning of a Generative Model \textcolor{blue_better}{With Correction}}
   \label{alg:physics_augmentation_loop}
\begin{algorithmic}
   \STATE {\bfseries Input:} $\mathcal D_{\text{real}} := \{x_i\}_{i=1}^n$, $\mathcal A$, $\mathcal A_{\text{ft}}$, \textcolor{blue_better}{$\pi_\gamma$} \textcolor{gray}{// ground truth data, learning procedure, fine-tuning procedure,} \textcolor{blue_better}{correction function} 
   \STATE {\bfseries Parameters:} $T$, $\lambda$, $\textcolor{blue_better}{\gamma}$  \textcolor{gray}{// number of retraining iterations, proportion of generated data,} \textcolor{blue_better}{correction strength} 
   
   \STATE $p_{\theta_0} \leftarrow \mathcal A(\mathcal D_{\text{real}})$ \textcolor{gray}{// learn generative model from scratch on true data}
   
   \FOR{$t = 1$ {\bfseries to} $T$}
   
   \STATE $\mathcal D_{\text{synth}} \leftarrow \{\textcolor{blue_better}{\pi_\gamma}(\tilde x_i)\}_{i=1}^{\lfloor \lambda \cdot n\rfloor}$, with  $\tilde{x}_i\sim p_{\theta_{t-1}}$ \textcolor{gray}{// sample $\lfloor \lambda\cdot n\rfloor$ synthetic data points, \textcolor{blue_better}{pass through correction function}}
   
   \STATE $p_{\theta_{t}} \leftarrow \mathcal A_{\text{ft}}(\mathcal D_{\text{real}}\cup \mathcal D_{\text{synth}}; p_{\theta_{t-1}}$) \textcolor{gray}{// fine-tune previous generation using augmented dataset}
   
   \ENDFOR
   
   \STATE {\bfseries Return} $[p_{\theta_0}, p_{\theta_1}, p_{\theta_2},\dots, p_{\theta_T}]$
\end{algorithmic}
\end{algorithm*}

\section{Overall Training Procedure}

We describe our proposed procedure in concise language in Algorithm~\ref{alg:physics_augmentation_loop}, and we explain it in more detail here.
We train the zero'th generation from scratch on the ground truth dataset $\mathcal D_{\mathrm{real}}:=\{x_i\}_{i=1}^n$, and we stop training when the model is close to convergence.
For all the following generations, we fine-tune the previous generation's latest checkpoint on a combination of the ground truth dataset $\mathcal D_{\mathrm{real}}$, as well as $\lfloor\lambda\cdot n\rfloor$ synthetic data points which are generated from the previous generation's latest checkpoint, and then passed through the \emph{correction function} $\pi_\gamma$.


The correction function $\pi_\gamma$ is parameterized by the \emph{correction strength} $\gamma\in\mathbb R_{\ge 0}$, which controls how much influence the correction function has on the input data points towards increasing a given point's likelihood with respect to the target distribution.
The other main hyperparameter $\lambda\in\mathbb R_{\ge 0}$ is the \emph{synthetic augmentation percent}, and it controls the ratio of synthetic data to real data in each iteration of fine-tuning.
When $\gamma=0$, we recover iterative re-training with synthetic augmentation considered in \cite{bertrand2023stability}.
And if we choose the synthetic augmentation percent to be $\lambda=0$, then each generation simply corresponds to fine-tuning the model on the same dataset that it was trained on initially.

We now use \emph{iterative fine-tuning} interchangeably with the more general term self-consuming loop. We also consider the idealized correction function for our theoretical analysis, and a broader family of practical \emph{correction functions} for different data types.

\section{Theoretical Analysis}\label{sec:math}

\subsection{Preliminaries}

We mostly follow the notation from \cite{bertrand2023stability}, except for introducing the correction function $\pi_\gamma$.
Let us denote by $p_{\mathrm{data}}$ the ground truth probability distribution that we want to train a generative model to estimate.
Suppose we have some dataset $\mathcal D_{\mathrm{real}}=\{x_i\}_{i=1}^n$ sampled from $p_{\mathrm{data}}$.
We write $\hat p_{\mathrm{data}}=(1/n)\sum_{i=1}^n\delta_{x_i}$. More generally, we use a hat to denote the empirical distribution over finitely many samples from the corresponding distribution.

Suppose that we have a class of generative models parameterized by $\Theta\subset\mathbb R^d$.
We denote by $p_\theta$ a probability distribution in this class with model parameters $\theta\in \Theta$.
We define the optimal model parameters within this class to be

\vspace{-.3em}
\begin{equation}\label{eq:theta_star_defn_body}
\theta^\star = \argmax_{\theta'\in\Theta}\mathbb E_{x\sim p_{\mathrm{data}}}[\log p_{\theta'}(x)],
\end{equation}
where we break ties by minimizing $\|\theta^\star\|$.
Typically, such optimal parameters yield a model $p_{\theta^\star}$ which closely approximates the oracle ground truth distribution $p_{\mathrm{data}}$, but doesn't equal it exactly; accordingly, we define the Wasserstein-2 distance between the distributions to be
\begin{equation}\label{eq:epsilon_defn}
\varepsilon:=d_W(p_{\theta^\star}, p_{\mathrm{data}}).
\end{equation}
The model weights for the first generation are naturally defined according to the optimization
\begin{equation}\label{eq:theta_0_body}
    \theta_0^n:=
    \argmax_{\theta'\in\Theta}[
\mathbb E_{x\sim \hat p_{\mathrm{data}}}[\log p_{\theta'}(x)]].
\end{equation}
This corresponds to training on the finite subset $\mathcal D_{\mathrm{real}}$.
Next, let us suppose that the model weights from generation $t$ are denoted $\theta_t^n$.
We will formalize a procedure for updating these weights for the next generation to obtain $\theta_{t+1}^n$.
For this, we need to define our correction function, and then we will use it to define the weight update.

\begin{definition}
For any probability distribution, and for any $\gamma\in\mathbb R_{\ge 0}$, we define the \emph{correction of strength $\gamma$} of distribution $p_\theta$ to be the distribution
\begin{equation}\label{eq:correction_mapping}
\pi_\gamma p_\theta(x):=\frac{{p_\theta}(x)+\gamma p_{\theta^\star}(x)}{1+\gamma},
\end{equation}
where $p_{\theta^\star}$ is defined in \eqref{eq:theta_star_defn_body}.
For any augmentation percentage $\lambda\ge 0$, we define the \emph{weight update} mapping to be
\begin{align}\label{eq:hat_h_weight_update}
\pi_\gamma\mathcal G_\lambda^n(\theta)
&:=\localargmax_{\theta'\in\Theta}\hat{\mathcal  H}(\theta,\theta')\\
&:=\localargmax_{\theta'\in\Theta}\Big[
\mathbb E_{x\sim \hat p_{\mathrm{data}}}[\log p_{\theta'}(x)]]\nonumber\\
&\qquad\qquad\qquad\,\,\,\,\,\,\,\,\,\,\,+\lambda \mathbb E_{x\sim {\widehat{\pi_\gamma p_{\theta}}}}[\log p_{\theta'}(x)]\Big],\nonumber
\end{align}
where $\hat p_{\mathrm{data}}$ and $\widehat{\pi_\gamma p_{\theta}}$ are empirical distributions of size $n$ and $\lfloor\lambda\cdot n\rfloor$ respectively.
\end{definition}

To continue our discussion from before, our iterative weight update is defined as $\theta_{t+1}^n:=\pi_\gamma\mathcal G_\lambda^n(\theta_t^n)$.

Note that we use an global maximization in \eqref{eq:theta_0_body} when defining the initial parameters $\theta_0^n$, but we use a local maximization when computing our parameter update in \eqref{eq:hat_h_weight_update}.
This difference is analogous to the differences between how model weights update during initial training, where parameter updates are more global, and during fine-tuning, where parameter updates are more local.

\subsubsection{Understanding the correction $\pi_\gamma p_\theta(x)$}

For $\gamma=0$, the correction mapping in \eqref{eq:correction_mapping} simplifies to $\pi_0 p_\theta=p_\theta$, which is just the original distribution; this corresponds to no correction at all.
For $\gamma = 1$, it is $\pi_1 p_\theta=(p_\theta + p_{\theta^\star})/2$.
And for $\gamma=\infty$, it is $\pi_\infty p_\theta=p_{\theta^\star}$, which corresponds to the optimal distribution.
So as $\gamma$ increases from $0$ to $\infty$, the distribution $\pi_\gamma p_\theta$ has a likelihood profile that matches $p_\theta$ less, and $p_{\theta^\star}$ more.
As $p_{\theta^\star}$ is the optimal model in our generative model class, this means that as $\gamma$ increases from $0$ to $\infty$, we have that $\pi_\gamma p_\theta(x)$ is a PDF which better represents the target likelihood that we want to estimate through training the generative model.

In our theoretical formulation, we consider correction functions that correct the probability distribution $p_\theta$, rather than the more intuitive (and practical) case of a correction function that corrects individual points that the distribution is defined over.
In Appendix~\ref{appendix:self_correction_pointwise}, we specify sufficient conditions under which a pointwise correction function is guaranteed to correspond to a distribution-wise correction function of the same form as those which we consider in our theoretical study and therefore can enjoy the theoretical stability guarantees we prove.
We also provide a concrete example of a projection function, in the Gaussian case, which provably satisfies those conditions.
We conduct a series of experiments on this toy example in Section~\ref{sec:experiments_gaussian}.

\subsubsection{Understanding the weight update $\pi_\gamma \mathcal G_\lambda^n(\theta)$}

The weight update $\pi_\gamma G_\lambda^n(\theta)$ in \eqref{eq:hat_h_weight_update} is a formalization of the intended output of fine-tuning $p_{\theta}$ on $\mathcal D_{\mathrm{real}}\cup\mathcal D_{\mathrm{synth}}$, where $\mathcal D_{\mathrm{real}}=\{x_i\}_{i=1}^n$ is the ground truth dataset of size $n$, and $\mathcal D_{\mathrm{synth}}=\{\tilde x_i:\tilde x_i\sim\widehat{\pi_\gamma p_\theta}\}_{i=1}^{\lfloor \lambda\cdot n\rfloor}$ is the synthesized-and-corrected dataset of size $\lfloor \lambda\cdot n\rfloor$.
In other words, in an ideal run of stochastic gradient descent fine-tuning, the model weights $\theta$ should update to $\pi_\gamma \mathcal G_\lambda^n(\theta)$, as defined in \eqref{eq:hat_h_weight_update}, when trained on $\mathcal D_{\mathrm{real}}\cup \mathcal D_{\mathrm{synth}}$.

Intuitively, the weight update $\theta\mapsto\pi_\gamma \mathcal G_\lambda^n(\theta)$ avoids the loss of variance in the generated data by ensuring that at each step, the model is trained on synthetic data which is likelier to have been sampled from the diverse target distribution. 
This positive phenomenon is more pronounced when the correction strength $\gamma$ is larger.

\subsection{Assumptions}

In order to prove our main result, we need some regularity assumptions about the learning procedure.
Informally speaking, we will assume that 
    the class of generative models that we consider is smoothly parameterized by its model weights; 
    the loss landscape is concave near the ideal model weights;
    and the class of generative models does an increasingly good job approximating the target data distribution as the dataset size increases.
We formally quantify and state these hypotheses in Assumption~\ref{assumption:body_of_paper}.

\begin{assumption}\label{assumption:body_of_paper}
The following are true.
\begin{enumerate}
\setlength\itemsep{0.0em} 
    \item There exists some $L>0$ such that, for all $\theta$ sufficiently close to $\theta^\star$, the mapping $x\mapsto\nb_\theta^2 \log p_\theta(x)$ is $L$-Lipschitz.

    \item The mapping $\theta\mapsto\mathbb E_{x\sim p_{\mathrm{data}}}[\log p_\theta(x)]$ is continuously twice differentiable locally around $\theta^\star$, and there exists some $\alpha>0$ such that $\mathbb{E}_{x\sim p_{\text{data}}}\left[\nabla_\theta^2 \log p_{\theta}(x)\right]|_{\theta^\star} \preceq -\alpha I_d \prec 0.$

    \item There exist $a,b,\varepsilon_{\text{OPT}}\ge 0$ and a neighborhood $U$ of $\theta^\star$ such that, for any $\delta\in(0,1)$, with probability $1-\delta$ over the samplings, we have\footnote{The map $\pi_\gamma G_\lambda^\infty$ is defined similarly to $\pi_\gamma G_\lambda^n$ in \eqref{eq:hat_h_weight_update}, but with $\hat p_{\mathrm{data}}$ replaced with $p_{\mathrm{data}}$, and with $\widehat{\pi_\gamma p_{\theta}}$ replaced with $\pi_\gamma p_{\theta}$. See Appendix~\ref{appendix:our_math} for more details. This estimate is identical to the analogous Assumption 3 used in \cite{bertrand2023stability}, with the only difference being it is applied to our iterative fine-tuning update function. See  Appendix~\ref{appendix:assumption_3} for further discussion.}
    \begin{equation}\label{eq:tau_defn}
        \|\pi_\gamma \mathcal G_\lambda ^n(\theta)-\pi_\gamma \mathcal G_\lambda ^\infty(\theta)\|
        \le \varepsilon_{\text{OPT}}+\frac{a}{\sqrt n}\sqrt{\log\frac{b}{\delta}}.
    \end{equation}
    for all $\theta \in U$ and $n\in\mathbb N$. Denote this bound by $\tau_n(\delta)$.

\end{enumerate}
\end{assumption}

In Assumption~\ref{assumption:body_of_paper} (2), the notation ``$\preceq$'' corresponds to the Loewner order on symmetric matrices: we write that $A\preceq B$ if $B-A$ is positive semi-definite, and $A\prec B$ if $B-A$ is positive definite.
In particular, Assumption~\ref{assumption:body_of_paper} (2) implies that the matrix $\mathbb{E}_{x\sim p_{\text{data}}}\left[\nabla_\theta^2 \log p_{\theta}(x)\right]|_{\theta^\star}$ is negative definite, and its largest eigenvalue is at most $-\alpha$.
And Assumption~\ref{assumption:body_of_paper} (3) mirrors the main assumption in \cite{bertrand2023stability}; it is motivated by generalization bounds in deep learning, see e.g. \cite{jakubovitz2019generalization,ji2021understanding}.
The interested reader can consult Appendix \ref{appendix:assumption_3} for more details on this assumption.

\subsection{Iterative Fine-Tuning with Correction}

We now have the language to state our main result, which essentially says that if the initial parameters $\theta_0$ are sufficiently close to the optimal model parameters $\theta^{\star}$, and if the augmentation percentage $\lambda$ is sufficiently small, then under iterative fine-tuning with correction, we can expect our subsequent model parameters to stay close to $\theta^{\star}$.

\begin{theorem}[Stability of Iterative Fine-Tuning with Correction]\label{thm:2_shortened}
Fix an augmentation percentage $\lambda\in\mathbb R_{>0}$ and a correction strength $\gamma\in\mathbb R_{\ge 0}$.
Suppose we have an iterative fine-tuning procedure defined by the rule  $\theta_{t+1}^n=\pi_\gamma\mathcal G_\lambda^n(\theta_t^n)$, and suppose that Assumption~\ref{assumption:body_of_paper} holds.
Define the constant
\begin{align*}
    \rho(\lambda)&:=\rho(\lambda;\alpha,\varepsilon,L)
    :=\frac{\lambda(\alpha+\varepsilon L)}{\alpha-\lambda (\alpha+\varepsilon L)}
\end{align*}
and fix any $\delta\in(0,1)$.
If $\theta_0$ is sufficiently close to $\theta^\star$, and if $\lambda\left(1+\frac{\varepsilon L}{\alpha}\right)<\frac{1+\gamma}{2+\gamma}$, then $\rho(\lambda)/(1+\gamma)<1$, and it follows that the stability estimate holds with probability $1-\delta$:
\begin{align}\label{eq:thm_2_bound}
    \|\theta_{t}^n&-\theta^\star\|\\
    &\le 
    \tau_n(\delta/t)
    \sum_{i=0}^t
    \left(\frac{\rho(\lambda)}{1+\gamma}\right)^i
    + \left(\frac{\rho(\lambda)}{1+\gamma}\right)^t
    \|\theta_0^n-\theta^\star\|\nonumber
\end{align}
for all $t > 0$.
\end{theorem}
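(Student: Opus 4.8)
The plan is to recognize $\theta^\star$ as a fixed point of the \emph{population} update $\pi_\gamma\mathcal G_\lambda^\infty$, to show this population map is a contraction with rate $\rho(\lambda)/(1+\gamma)$ on a neighborhood of $\theta^\star$, and then to feed the finite-sample fluctuation from Assumption~\ref{assumption:body_of_paper}~(3) into the resulting recursion. This mirrors the architecture of \cite{bertrand2023stability}; the genuinely new content is tracking how the correction $\pi_\gamma$ improves the contraction constant by the factor $1/(1+\gamma)$.

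\emph{Step 1: $\theta^\star$ is a fixed point of $\pi_\gamma\mathcal G_\lambda^\infty$.} Setting $\theta=\theta^\star$ in \eqref{eq:correction_mapping} gives $\pi_\gamma p_{\theta^\star}=p_{\theta^\star}$, so the population objective at $\theta=\theta^\star$ is $\theta'\mapsto \mathbb E_{x\sim p_{\mathrm{data}}}[\log p_{\theta'}(x)]+\lambda\,\mathbb E_{x\sim p_{\theta^\star}}[\log p_{\theta'}(x)]$. The first term is maximized over $\Theta$ at $\theta'=\theta^\star$ by the definition \eqref{eq:theta_star_defn_body}, and the second is maximized at $\theta'=\theta^\star$ by Gibbs' inequality (nonnegativity of $\mathrm{KL}(p_{\theta^\star}\,\|\,p_{\theta'})$). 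Hence $\theta^\star$ maximizes the sum, and the branch of $\localargmax$ singled out by the implicit function theorem (applicable because the $\theta'$-Hessian of the objective is negative definite at $(\theta^\star,\theta^\star)$, as Step 2 shows) satisfies $\pi_\gamma\mathcal G_\lambda^\infty(\theta^\star)=\theta^\star$.

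\emph{Step 2: the population map contracts with rate $\rho(\lambda)/(1+\gamma)$.} Let $F(\theta,\theta'):=\nabla_{\theta'}\big(\mathbb E_{x\sim p_{\mathrm{data}}}[\log p_{\theta'}(x)]+\lambda\,\mathbb E_{x\sim\pi_\gamma p_\theta}[\log p_{\theta'}(x)]\big)$, so that $\theta'=\pi_\gamma\mathcal G_\lambda^\infty(\theta)$ solves $F(\theta,\theta')=0$ and $\nabla_\theta\pi_\gamma\mathcal G_\lambda^\infty=-(\partial_{\theta'}F)^{-1}(\partial_\theta F)$. The key observation is that in $\pi_\gamma p_\theta=\tfrac1{1+\gamma}p_\theta+\tfrac{\gamma}{1+\gamma}p_{\theta^\star}$ only the first summand depends on $\theta$, so the log-derivative identity gives $\partial_\theta F=\tfrac{\lambda}{1+\gamma}\,\mathbb E_{x\sim p_\theta}[\nabla_{\theta'}\log p_{\theta'}(x)\,\nabla_\theta\log p_\theta(x)^\top]$, which carries an extra factor $\tfrac1{1+\gamma}$ relative to the $\gamma=0$ case of \cite{bertrand2023stability}, whereas $\partial_{\theta'}F=\mathbb E_{x\sim p_{\mathrm{data}}}[\nabla_{\theta'}^2\log p_{\theta'}(x)]+\lambda\,\mathbb E_{x\sim\pi_\gamma p_\theta}[\nabla_{\theta'}^2\log p_{\theta'}(x)]$ obeys the same estimate as there, since $\pi_\gamma p_\theta$ is a convex combination of $p_\theta$ and $p_{\theta^\star}$ and therefore the averaged Hessian inherits the operator-norm bound of its endpoints. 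Concretely, using Assumption~\ref{assumption:body_of_paper}~(1)--(2), the definition $\varepsilon=d_W(p_{\theta^\star},p_{\mathrm{data}})$, Kantorovich--Rubinstein to transfer the $L$-Lipschitz Hessian estimate across $W_2$-close distributions, and the information-matrix equality at the fixed point, one obtains on a small enough neighborhood of $\theta^\star$ (absorbing lower-order terms) the bounds $\lambda_{\min}(-\partial_{\theta'}F)\ge \alpha-\lambda(\alpha+\varepsilon L)$ and $\|\partial_\theta F\|\le \tfrac{\lambda(\alpha+\varepsilon L)}{1+\gamma}$, hence $\|\nabla_\theta\pi_\gamma\mathcal G_\lambda^\infty\|\le \rho(\lambda)/(1+\gamma)$. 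The hypothesis $\lambda\big(1+\tfrac{\varepsilon L}{\alpha}\big)<\tfrac{1+\gamma}{2+\gamma}$ is precisely the rearrangement of $\rho(\lambda)<1+\gamma$ (and it also forces $\alpha-\lambda(\alpha+\varepsilon L)>0$, so $\rho(\lambda)$ is well-defined and positive), so $\rho(\lambda)/(1+\gamma)<1$; integrating the Jacobian bound along the segment from $\theta$ to $\theta^\star$ gives $\|\pi_\gamma\mathcal G_\lambda^\infty(\theta)-\theta^\star\|\le \tfrac{\rho(\lambda)}{1+\gamma}\|\theta-\theta^\star\|$ on that neighborhood.

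\emph{Step 3: iterate.} For one step, the triangle inequality together with Steps 1--2 gives $\|\theta_{t+1}^n-\theta^\star\|\le \|\pi_\gamma\mathcal G_\lambda^n(\theta_t^n)-\pi_\gamma\mathcal G_\lambda^\infty(\theta_t^n)\|+\|\pi_\gamma\mathcal G_\lambda^\infty(\theta_t^n)-\theta^\star\|\le \tau_n(\delta/t)+\tfrac{\rho(\lambda)}{1+\gamma}\|\theta_t^n-\theta^\star\|$, where the first term is Assumption~\ref{assumption:body_of_paper}~(3) at confidence $\delta/t$. A union bound over the $t$ steps keeps the total failure probability at $\delta$, and unrolling this linear recursion yields \eqref{eq:thm_2_bound}, with $\sum_{i=0}^t(\rho(\lambda)/(1+\gamma))^i$ collecting the propagated sampling errors and $(\rho(\lambda)/(1+\gamma))^t\|\theta_0^n-\theta^\star\|$ the geometrically decaying initialization error. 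I expect the main obstacle to be the bookkeeping that confines the entire trajectory to the neighborhood $U$ on which the Jacobian bound and Assumption~\ref{assumption:body_of_paper}~(3) are valid: this requires an induction showing that ``$\theta_0$ sufficiently close to $\theta^\star$'' --- together with $\tau_n$ small, which the contraction makes self-reinforcing since the iterates are ultimately trapped in a ball of radius $\approx \tau_n/(1-\rho(\lambda)/(1+\gamma))$ --- implies $\theta_t^n\in U$ for all $t$, and care that the Jacobian estimate of Step 2 holds uniformly over $U$ rather than only at $\theta^\star$.
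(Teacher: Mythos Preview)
Your overall architecture --- fixed point, contraction via the Jacobian of the population map, then a recursion absorbing the finite-sample error from Assumption~\ref{assumption:body_of_paper}(3) --- matches the paper's, and you correctly isolate the mechanism by which the factor $1/(1+\gamma)$ appears: only the $\tfrac{1}{1+\gamma}p_\theta$ summand of $\pi_\gamma p_\theta$ depends on $\theta$, so $\partial_\theta F$ picks up that factor while $\partial_{\theta'}F$ does not. Steps~1 and~3 are essentially the paper's Proposition~\ref{prop:fixed_point} and the recursion in the proof of Theorem~\ref{thm:2} (the paper uses Bernoulli's inequality $(1-\delta/t)^t\ge 1-\delta$ rather than a union bound, but the effect is identical).

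The gap is in Step~2. The two displayed bounds $\lambda_{\min}(-\partial_{\theta'}F)\ge \alpha-\lambda(\alpha+\varepsilon L)$ and $\|\partial_\theta F\|\le \tfrac{\lambda(\alpha+\varepsilon L)}{1+\gamma}$ are reverse-engineered to yield $\rho(\lambda)/(1+\gamma)$ but are not derivable from the assumptions. At the fixed point $\partial_\theta F=\tfrac{\lambda}{1+\gamma}B$ with $B=-C$ the (negative) Fisher information of $p_{\theta^\star}$, and nothing in Assumption~\ref{assumption:body_of_paper} controls $\|C\|$: the hypothesis $A\preceq -\alpha I_d$ bounds the \emph{smallest}-magnitude eigenvalue of $A$, not the largest, and $\|C-A\|\le L\varepsilon$ only gives $\|C\|\le\|A\|+L\varepsilon$, which can be arbitrarily large. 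Hence bounding $\|(\partial_{\theta'}F)^{-1}\|$ and $\|\partial_\theta F\|$ separately cannot recover $\rho(\lambda)$. The paper instead factors the Jacobian at $\theta^\star$ as $(I+\lambda A^{-1}C)^{-1}\cdot\tfrac{\lambda}{1+\gamma}A^{-1}C$ (Lemma~\ref{spectral}), so that only the \emph{product} $A^{-1}C$ needs to be controlled: then $\|A^{-1}C\|=\|I+A^{-1}(C-A)\|\le 1+\|A^{-1}\|\,\|C-A\|\le 1+\varepsilon L/\alpha$, and a Neumann series for $(I+\lambda A^{-1}C)^{-1}$ gives the Jacobian norm bound $\tfrac{1}{1+\gamma}\cdot\tfrac{\lambda\|A^{-1}C\|}{1-\lambda\|A^{-1}C\|}\le \rho(\lambda)/(1+\gamma)$. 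This factorization is the step your sketch is missing.
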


We prove Theorem~\ref{thm:2_shortened} in Appendix~\ref{appendix:our_math}.

\begin{remark}
If we apply Theorem~\ref{thm:2_shortened} with correction strength $\gamma=0$, then the iterative fine-tuning procedure trains successively on a combination of raw synthetic data that has not been corrected using a correction function and ground truth data.
This is exactly the case considered in \cite{bertrand2023stability}.
Accordingly, the bound in \eqref{eq:thm_2_bound}, applied with $\gamma=0$, exactly recovers their result.
\end{remark}

\begin{corollary}\label{corollary:main_theorem}
Under the assumptions from Theorem~\ref{thm:2_shortened}, iterative fine-tuning with any amount of correction outperforms iterative fine-tuning without correction--in the sense that it is exponentially more stable, and it results in better model weights.
\end{corollary}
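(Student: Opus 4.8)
## Proof Proposal for Corollary~\ref{corollary:main_theorem}

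The plan is to derive Corollary~\ref{corollary:main_theorem} directly from Theorem~\ref{thm:2_shortened} by instantiating the stability bound \eqref{eq:thm_2_bound} at two values of the correction strength and comparing the results term by term. First I would fix an augmentation percentage $\lambda$ in the regime $\lambda\left(1+\frac{\varepsilon L}{\alpha}\right)<\frac12$, which is precisely the hypothesis of Theorem~\ref{thm:2_shortened} when $\gamma=0$. Since the map $\gamma\mapsto\frac{1+\gamma}{2+\gamma}$ is strictly increasing on $[0,\infty)$ and equals $\frac12$ at $\gamma=0$, this same $\lambda$ automatically satisfies $\lambda\left(1+\frac{\varepsilon L}{\alpha}\right)<\frac{1+\gamma}{2+\gamma}$ for every $\gamma\ge 0$; in particular the theorem applies simultaneously to the uncorrected loop ($\gamma=0$) and to any corrected loop ($\gamma>0$), and as a bonus this already shows that turning on correction only enlarges the provably stable range of $\lambda$. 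Writing $r_0:=\rho(\lambda)$ and $r_\gamma:=\rho(\lambda)/(1+\gamma)$ for the two contraction factors, we then have $0<r_\gamma<r_0<1$ whenever $\gamma>0$.

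Next I would separate the two halves of the claim. For \textbf{exponentially more stable}, I compare the initialization-dependent term in \eqref{eq:thm_2_bound}: because $r_\gamma^t=(1+\gamma)^{-t}\,r_0^t$, the contribution of the initial error $\|\theta_0^n-\theta^\star\|$ in the corrected bound equals the corresponding uncorrected contribution multiplied by $(1+\gamma)^{-t}$, i.e.\ it is suppressed by a factor decaying geometrically in the number of generations $t$ --- which is exactly the asserted exponential gain in stability. For \textbf{less variance} and \textbf{better model weights}, I compare the stochastic term: the function $r\mapsto\sum_{i=0}^{t}r^i=\frac{1-r^{t+1}}{1-r}$ is strictly increasing on $(0,1)$ (each summand $r^i$ is), so $\sum_{i=0}^{t}r_\gamma^i<\sum_{i=0}^{t}r_0^i$ and the coefficient multiplying the sampling fluctuation $\tau_n(\delta/t)$ is strictly smaller once correction is used. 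Sending $t\to\infty$, the steady-state guarantee becomes $\limsup_{t}\|\theta_t^n-\theta^\star\|\le\frac{\tau_n}{1-r_\gamma}<\frac{\tau_n}{1-r_0}$, where the residual dependence of $\tau_n$ on $\delta/t$ is only logarithmic and hence irrelevant to the inequality; so the limiting weights of the corrected loop are provably closer to $\theta^\star$.

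I do not expect a substantive technical obstacle, since the corollary is essentially a monotonicity reading of the master bound \eqref{eq:thm_2_bound}: the two care-points are (i) keeping the admissible range of $\lambda$ consistent across the two values of $\gamma$, handled by the monotonicity of $\gamma\mapsto\frac{1+\gamma}{2+\gamma}$ above, and (ii) stating honestly that ``better model weights'' refers to a comparison of the \emph{guaranteed upper bounds} --- made precise either term-by-term for each fixed $t$ or in the $t\to\infty$ limit --- rather than a pathwise comparison of two coupled runs. I would finish by noting that both conclusions degrade gracefully at the endpoints: at $\gamma=0$ the two bounds coincide, recovering the result of \cite{bertrand2023stability}, and as $\gamma\to\infty$ we get $r_\gamma\to 0$, matching the intuition that perfect correction collapses the loop straight back onto $\theta^\star$.
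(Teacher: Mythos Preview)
Your proposal is correct and follows essentially the same approach as the paper: apply Theorem~\ref{thm:2_shortened} at $\gamma=0$ and at $\gamma>0$ and observe that the right-hand side of \eqref{eq:thm_2_bound} is strictly smaller in the latter case. Your version is considerably more detailed than the paper's three-sentence proof---in particular, you explicitly verify via the monotonicity of $\gamma\mapsto\frac{1+\gamma}{2+\gamma}$ that any $\lambda$ admissible for $\gamma=0$ remains admissible for all $\gamma>0$, and you unpack the term-by-term comparison that the paper leaves implicit---but the underlying argument is the same.
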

\begin{proof}[Proof of Corollary~\ref{corollary:main_theorem}]
We apply Theorem~\ref{thm:2_shortened} with $\gamma=0$, which corresponds to no correction, as well as with $\gamma>0$, which corresponds to any amount of correction.
For any $\gamma>0$, we notice that the RHS of \eqref{eq:thm_2_bound} is strictly smaller than when $\gamma=0$.
This guarantees better stability as $t\to\infty$, as well as model weights $\theta_t^n$ closer to $\theta^\star$.
\end{proof}


\begin{example}
If we apply Theorem~\ref{thm:2_shortened} with correction strength $\gamma\to\infty$, then the bound \eqref{eq:thm_2_bound} in Theorem~\ref{thm:2_shortened} limits to $\tau_n(\delta/t)$.
This implies that the practical iterate $\theta_t^n$ approaches the ideal model paramaters, and is at worst some constant away, that depends on error from the optimization procedure, as well as statistical error from using finitely many ground truth data samples $n$.
\end{example}

Note that Theorem~\ref{thm:2_shortened} relies on the assumption that the initial model parameters $\theta_0$ are sufficiently close to the ideal model parameters $\theta^\star$, and also that the augmentation percentage $\lambda$ is sufficiently small.
We hypothesize that these assumptions can be relaxed in the case where a correction function participates in the iterative fine-tuning procedure--intuitively, the correction function should compensate for errors that arise from $\theta_0^n$ being worse, as well as errors that arise from incorporating more synthetic data.
We frame this in the following conjecture.

\begin{conjecture}\label{conjecture:body}
In the case of iterative fine-tuning with correction, we may relax how close the initial model parameters $\theta_0^n$ need to be to the optimal model parameters $\theta^\star$, as well as choose a larger synthetic augmentation percentage $\lambda$, while still retaining the improved stability estimate \eqref{eq:thm_2_bound}.
\end{conjecture}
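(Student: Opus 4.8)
The plan is to reduce the statement to a contraction property of the \emph{population} update $\Phi:=\pi_\gamma\mathcal G_\lambda^\infty$ near $\theta^\star$, and then to transfer it to the sampled iterates $\theta_t^n$ by absorbing the finite-sample error through $\tau_n$. The first step is to verify that $\theta^\star$ is an exact fixed point of $\Phi$. Since $\pi_\gamma p_{\theta^\star}=\tfrac{p_{\theta^\star}+\gamma p_{\theta^\star}}{1+\gamma}=p_{\theta^\star}$, the population objective $\mathcal H(\theta^\star,\theta')$ (i.e.\ \eqref{eq:hat_h_weight_update} with the empirical measures replaced by their population counterparts) equals $\mathbb E_{x\sim p_{\mathrm{data}}}[\log p_{\theta'}(x)]+\lambda\,\mathbb E_{x\sim p_{\theta^\star}}[\log p_{\theta'}(x)]$, and $\theta'=\theta^\star$ maximizes both terms simultaneously---the first by \eqref{eq:theta_star_defn_body}, the second because cross-entropy from the realizable distribution $p_{\theta^\star}$ is minimized at $\theta'=\theta^\star$---so $\Phi(\theta^\star)=\theta^\star$.

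The core of the proof is a bound on $D\Phi(\theta^\star)$. Writing the correction as a mixture, $\pi_\gamma p_\theta=\tfrac1{1+\gamma}p_\theta+\tfrac{\gamma}{1+\gamma}p_{\theta^\star}$, the only dependence of $\mathcal H(\theta,\theta')$ on $\theta$ is through the coefficient $\tfrac{\lambda}{1+\gamma}$ on $\mathbb E_{x\sim p_\theta}[\log p_{\theta'}(x)]$. Applying the implicit function theorem to $\nabla_{\theta'}\mathcal H(\theta,\theta')=0$---legitimate since Assumption~\ref{assumption:body_of_paper}(2) makes $\nabla^2_{\theta'\theta'}\mathcal H(\theta^\star,\theta^\star)$ negative definite---gives $D\Phi(\theta^\star)=-[\nabla^2_{\theta'\theta'}\mathcal H]^{-1}[\nabla^2_{\theta\theta'}\mathcal H]$ at $(\theta^\star,\theta^\star)$. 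Here the first factor is $-(A+\lambda B)$ with $A:=-\mathbb E_{p_{\mathrm{data}}}[\nabla_\theta^2\log p_{\theta^\star}]\succeq\alpha I$ and $B:=-\mathbb E_{p_{\theta^\star}}[\nabla_\theta^2\log p_{\theta^\star}]=I(\theta^\star)\succeq 0$ the Fisher information (the weights $\tfrac{\lambda}{1+\gamma}$ and $\tfrac{\lambda\gamma}{1+\gamma}$ sum to $\lambda$, so this matrix is identical to the $\gamma=0$ case), and, by the score-function identity $\nabla_\theta\mathbb E_{p_\theta}[f]=\mathbb E_{p_\theta}[f\,(\nabla_\theta\log p_\theta)^\top]$, the second factor is $\tfrac{\lambda}{1+\gamma}B$. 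Hence $D\Phi(\theta^\star)=\tfrac1{1+\gamma}\,\lambda(A+\lambda B)^{-1}B$---exactly $\tfrac1{1+\gamma}$ times the Jacobian appearing in \cite{bertrand2023stability}. I would then bound the latter by $\rho(\lambda)$ along their lines: Assumption~\ref{assumption:body_of_paper}(1) together with Kantorovich--Rubinstein gives $\|A-B\|\le L\,d_W(p_{\theta^\star},p_{\mathrm{data}})=L\varepsilon$, after which a generalized-eigenvalue estimate using $A\succeq\alpha I$, $B\succeq0$ yields $\|\lambda(A+\lambda B)^{-1}B\|\le\rho(\lambda)$. Thus $\|D\Phi(\theta^\star)\|\le\rho(\lambda)/(1+\gamma)$, and a short algebraic check shows $\rho(\lambda)/(1+\gamma)<1$ precisely when $\lambda(1+\varepsilon L/\alpha)<\tfrac{1+\gamma}{2+\gamma}$. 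By continuity of $D\Phi$ near $\theta^\star$ (from the smoothness in Assumption~\ref{assumption:body_of_paper}(1)--(2)), this operator-norm bound survives on a neighborhood $V$ of $\theta^\star$, so $\Phi$ contracts on $V$ with ratio $\rho(\lambda)/(1+\gamma)$.

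To finish, I would run the recursion. For $\theta_t^n$ in the common neighborhood $U\cap V$, split $\|\theta_{t+1}^n-\theta^\star\|\le\|\pi_\gamma\mathcal G_\lambda^n(\theta_t^n)-\Phi(\theta_t^n)\|+\|\Phi(\theta_t^n)-\theta^\star\|$; Assumption~\ref{assumption:body_of_paper}(3), invoked at generation $t$ with confidence $1-\delta/t$, bounds the first summand by $\tau_n(\delta/t)$, and the contraction bounds the second by $\tfrac{\rho(\lambda)}{1+\gamma}\|\theta_t^n-\theta^\star\|$. An induction on $t$---base case $\theta_0^n$ close to $\theta^\star$, inductive step using that the fixed point $\tau_n(\delta/t)/(1-\rho(\lambda)/(1+\gamma))$ of the scalar recursion lies inside $U\cap V$ once $\|\theta_0^n-\theta^\star\|$ and $\tau_n$ are small enough---keeps every iterate in $U\cap V$, while a union bound over the $t$ generations keeps the overall failure probability at most $\delta$. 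Unrolling $e_{t+1}\le\tfrac{\rho(\lambda)}{1+\gamma}e_t+\tau_n(\delta/t)$ then reproduces \eqref{eq:thm_2_bound} verbatim.

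I expect the main obstacle to be the probabilistic bookkeeping rather than the calculus: since $\theta_t^n$ is random and the synthetic data at generation $t{+}1$ is drawn conditionally on it, one has to set up the filtration so that the \emph{uniform}-in-$\theta\in U$ guarantee of Assumption~\ref{assumption:body_of_paper}(3) can be evaluated at the random point $\theta_t^n$, and the ``no iterate ever leaves $U\cap V$'' induction must be carried on the intersection of all the good events. A secondary subtlety is making the bound on $\|D\Phi(\theta^\star)\|$---rather than on merely its spectral radius, which is the strictly smaller $\tfrac{\lambda(\alpha+\varepsilon L)}{\alpha+\lambda(\alpha+\varepsilon L)}$---rigorous and uniform over a whole neighborhood with the same constant $\rho(\lambda)/(1+\gamma)$; the gap between $\rho(\lambda)$ and that spectral radius is exactly the slack that makes this possible. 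For $\gamma=0$ everything collapses to the argument of \cite{bertrand2023stability}; the only genuinely new ingredient is the elementary observation that writing $\pi_\gamma p_\theta$ as a convex combination puts a clean factor $\tfrac1{1+\gamma}$ in front of the cross-term of $\mathcal H$ while leaving its $\theta'$-curvature untouched, which is what upgrades their rate $\rho(\lambda)$ to $\rho(\lambda)/(1+\gamma)$.
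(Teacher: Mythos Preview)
The statement you are attempting to prove is a \emph{conjecture} in the paper, not a theorem: the paper does not prove it, but instead supports it with the empirical results of Section~\ref{sec:experiments} and explicitly says that Theorem~\ref{thm:2_shortened} ``represents partial progress towards this conjecture.'' What you have written is, in substance, a correct proof sketch of Theorem~\ref{thm:2_shortened} itself (matching the paper's own proof in Appendix~\ref{appendix:our_math} almost step for step: fixed point, implicit-function Jacobian with the clean $1/(1+\gamma)$ factor, Kantorovich--Rubinstein bound $\|A-B\|\le L\varepsilon$, contraction on a neighborhood, and the recursive/union-bound argument over generations). That is the partial progress the paper already claims---in particular, your observation that $\rho(\lambda)/(1+\gamma)<1$ iff $\lambda(1+\varepsilon L/\alpha)<(1+\gamma)/(2+\gamma)$ is exactly the ``twice as large $\lambda$'' remark the paper makes right after stating the conjecture.

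The genuine gap is that you do not go beyond Theorem~\ref{thm:2_shortened}. The conjecture asserts two relaxations: a larger admissible $\lambda$ \emph{and} a weaker closeness requirement on $\theta_0^n$. Your argument still confines $\theta_0^n$ to the same implicit-function-theorem neighborhood $U\cap V$ that the $\gamma=0$ proof uses; nothing in your sketch shows that this neighborhood is strictly larger when $\gamma>0$. (A smaller contraction ratio suggests a larger basin of attraction, but the radius of $U\cap V$ is dictated by where the IFT applies and where the Hessian stays negative definite, neither of which you have shown improves with $\gamma$.) So your proposal establishes exactly what the paper already proves and labels partial progress; it does not settle the conjecture, and the paper does not either.
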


We provide empirical evidence for Conjecture~\ref{conjecture:body} in Section~\ref{sec:experiments} on the human motion synthesis task.
In fact, Theorem~\ref{thm:2_shortened} represents partial progress towards this conjecture.
Namely, according to Theorem~\ref{thm:2_shortened}, for large correction strength $\gamma$, we can effectively choose a synthetic augmentation percentage that is twice as large as we would be able to without any correction, and still be able to meet the assumptions of the theorem.
This is because $\lim_{\gamma\to\infty}\frac{1+\gamma}{2+\gamma}=1$, which is twice as large as the bound when $\gamma=0$.

\section{Toy Example: Gaussian}\label{sec:experiments_gaussian}


We first assume oracle knowledge of the ground truth distribution, and use a toy example to directly demonstrate the impact of the correction strength $\gamma$ on model performance and stability as stated in Theorem~\ref{thm:2_shortened} and Corollary~\ref{corollary:main_theorem}. Our ground truth distribution is a 2-dimensional isotropic Gaussian centered at the origin, i.e., $\theta^\star=((0,0),I_2)$,  and our correction is ``distribution-wise'' in this idealized scenario. We consider the more practical setting, where we don't have oracle knowledge of the target distribution a priori, and where the data correction is ``point-wise'', in the empirical studies in the following two sections.
Further, in Appendix~\ref{appendix:self_correction_pointwise}, we show that, in theory, sufficiently well-behaved pointwise correction functions indeed correspond to distribution-wise correction functions.

\begin{figure}[t]
\begin{center}
\centerline{\includegraphics[width=0.95\columnwidth]{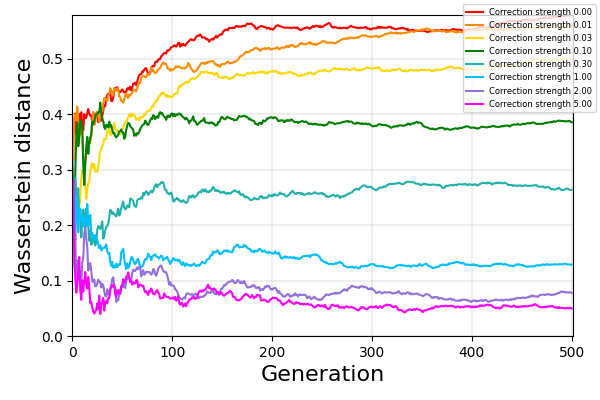}}
\vspace{-1em}
\caption{Empirical results from our Gaussian toy example.
The graph demonstrates that increasing the correction strength $\gamma$, with a fixed augmentation ratio of $\lambda=0.5$, improves performance and stability after self-consuming iterations.}
\label{fig:toy_example_w2}
\vspace{-3em}
\end{center}
\end{figure}

Concretely, our ground truth dataset contains $50$ points sampled from the target distribution, which are used to estimate $\theta_0^{50}=(\mu_0,\Sigma_0)\in\mathbb R^6$. We fix our synthetic augmentation percentage to be $\lambda=0.5$, and inductively synthesize a new dataset $\mathcal D_{\mathrm{synth}}=\{y_i\sim\mathcal N(\mu_t,\Sigma_t)\}_{i=1}^{25}$.
We implement a correction function to map $\mathcal D_{\mathrm{synth}}$, which was sampled from $p_{\theta_t^{50}}$, to a dataset $\mathcal D_{\mathrm{corrected}}$, which is likelier to have been sampled from the target density $p_{\theta^\star}$.
We do this by sampling $\mathcal D_{\mathrm{corrected}}$ from the \emph{middle density} corresponding to a given correction strength $\gamma$: 
\begin{equation}\label{eq:middle_density}
\pi_\gamma \hat p_{\theta_t^{50}}(x)
:=\frac{\hat p_{\theta_t^{50}}(x)+\gamma p_{\theta^\star}(x)}{1+\gamma},
\end{equation}
where $\hat p_{\theta_t^{50}}$ is the empirical PDF obtained from $\mathcal D_{\mathrm{synth}}$. 

We logarithmically accrue synthetic data points to simulate the case of fine-tuning.
We obtain the updated model parameters $\theta_{t+1}^{50}$ by computing the sample mean and covariance on this augmented dataset.
In Figure~\ref{fig:toy_example_w2}, we present the Wasserstein distance between the origin-centered isotropic Gaussian target distribution and the distribution defined by the parameters $\theta_t^{50}$ at each iteration $t$.
Our results illustrate how increasing the correction strength $\gamma$ adds stability and results in convergence near better Wasserstein scores in later generations, in accordance with Theorem~\ref{thm:2_shortened}.
The experiments also demonstrate how even a very small increase in $\gamma$ can improve performance over the baseline, in accordance with our claim of exponential improvement in Corollary~\ref{corollary:main_theorem}.

\section{Toy Example: MNIST}\label{sec:experiments_mnist}

Our proof uses the optimal target PDF $p_{\theta^\star}$ to define the correction function $\pi_\gamma$.
This is empirically validated by the Gaussian toy experiment, which assumes knowing the true target distribution.
In practice, the correction function only depends on the ability to map synthesized data to data which is \textit{likelier} to have been sampled from the ground truth distribution.
Crucially, \textit{this can be achieved without having a complete description of the target distribution. }
For example, with our human motion experiments, we will demonstrate that point-wise correction based on the laws of physics is one proxy approach to make a sample more likely, without knowing the true target distribution.

One has the freedom to explore alternative approaches to data correction for more general data types, such as images. 
For example, one simple heuristic is to identify the ``anchor'' or ``exemplar'' images, which are intuitively representative and likely. 
The correction function can then be implemented as mapping or morphing synthesized data towards its nearest anchor, to make the synthesized data more representative and likely. 
In this section, we implement this approach on MNIST and study its performance.

\begin{figure}[t]
\begin{center}
\centerline{\includegraphics[width=0.7\columnwidth]{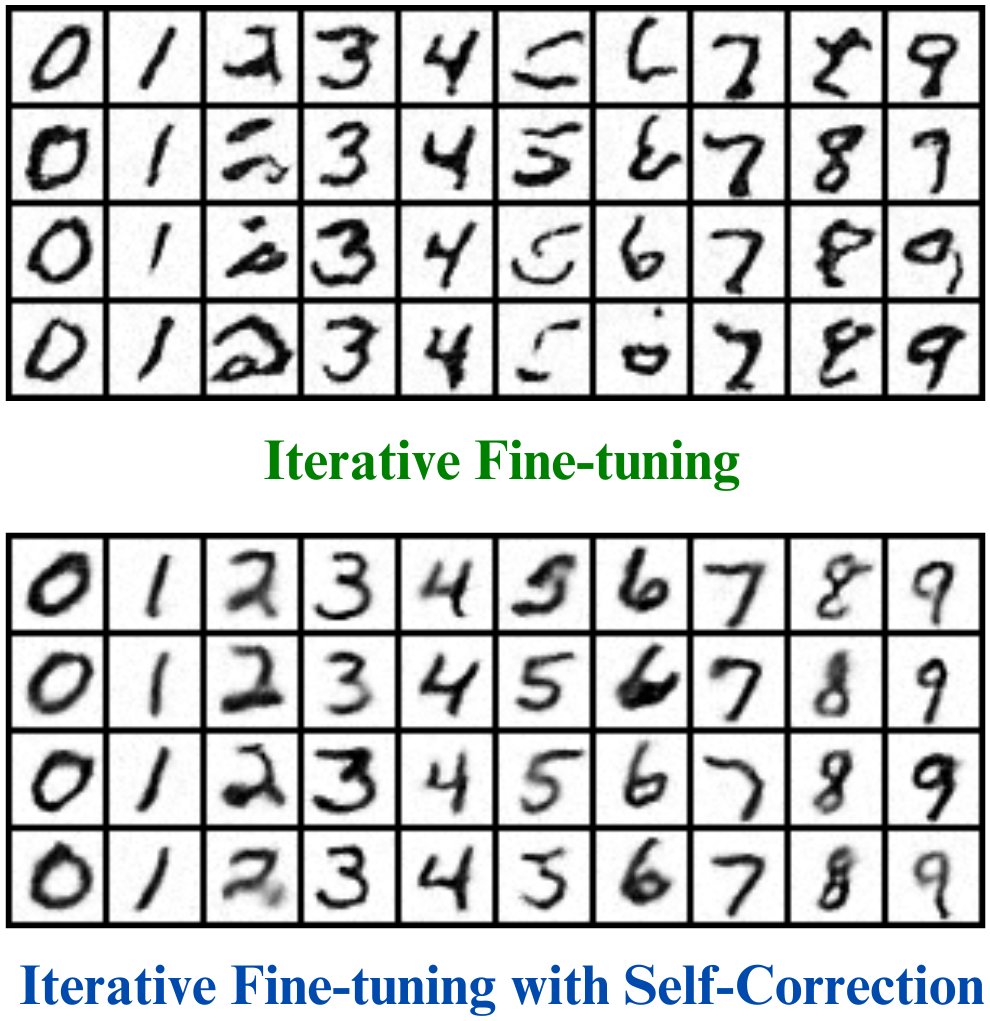}}
\vspace{-1em}
\caption{Empirical results from our MNIST toy example.
These synthesized images demonstrate that after 50 self-consuming iterations at 150\% augmentation percentage, the model which is trained using \textbf{\color{blue_better}iterative fine-tuning with self-correction} is able to generate higher quality samples than the model trained using \textbf{\color{green_better}iterative fine-tuning} without any self-correction.}
\label{fig:toy_example_mnist}
\vspace{-3em}
\end{center}
\end{figure}

For our MNIST~\cite{lecun1998gradient} experiments, we train a diffusion model~\cite{ho2020denoising} for class-conditional image generation, using a train split of size $n=12000$.
For our iterative fine-tuning experiments, we train the model for 20 epochs, then synthesize $\lambda\cdot 12000/10$ images for each digit, and then augment the ground truth dataset with these to train on for the next generation; every following generation follows the same procedure, but only trains for a single epoch.
We vary our experiments over augmentation percentages $\lambda\in\{0.2, 0.5, 1.0, 1.5\}$.
To define our self-correction operation, we first compute $K$-means clusters over the training split for each digit.
Our iterative fine-tuning with self-correction experiments use the same setup described above, except instead of training on the synthesized images, we train on the synthesized and then corrected images, where ``correcting'' an image means finding the nearest centroid in the $K$ centroids for that digit that we computed at the start of training.
We swept the values $K\in\{1,2,4,\dots,1024\}$, and we found that any reasonably large $K$ results in the same general trend where self-correction improves the metrics and stability. We report our results for $K=16$, which performs the best.

We present images synthesized using our trained models in Figure~\ref{fig:toy_example_mnist}.
These synthesized images demonstrate that iterative fine-tuning eventually generates many low quality and illegible digits, and this problem is solved by applying our self-correction operation.
Further experiment details, including graphs of the FID metrics for each generation that provide rigorous evidence for this trend across augmentation percentages, can be found in Appendix~\ref{appendix:MNIST}.
Our empirical results demonstrate that applying self-correction improves performance during iterative fine-tuning for our MNIST image generation task across self-consuming generations, and this relative performance is amplified when the augmentation percentage is larger.
The behavior that we observe is consistent with our theoretical results in Section~\ref{sec:math}, as well as our human motion experiments in Section~\ref{sec:experiments}.

\section{Human Motion Synthesis}\label{sec:experiments}

\begin{figure*}[ht]
\begin{center}
\centerline{\includegraphics[width=2.1\columnwidth]{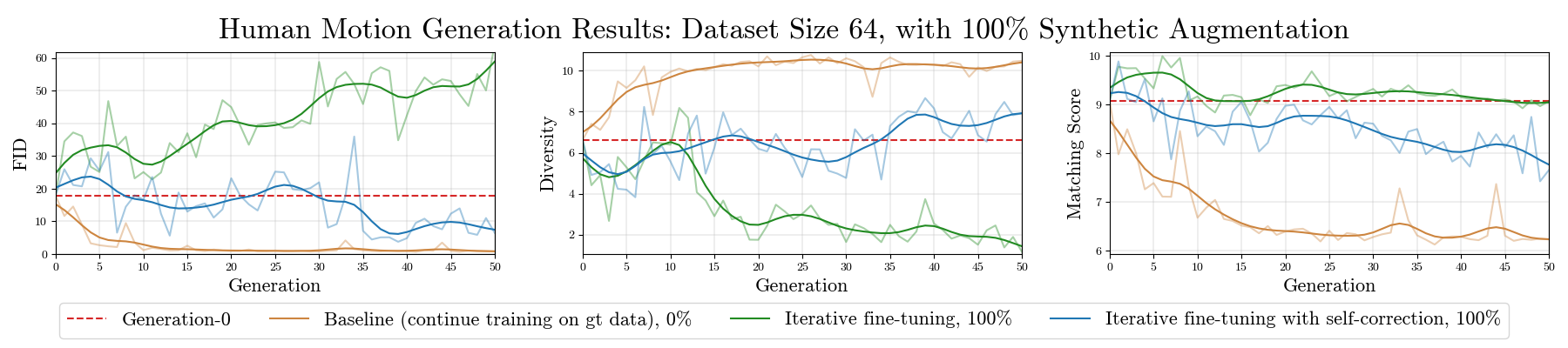}}
\centerline{\includegraphics[width=2.1\columnwidth]{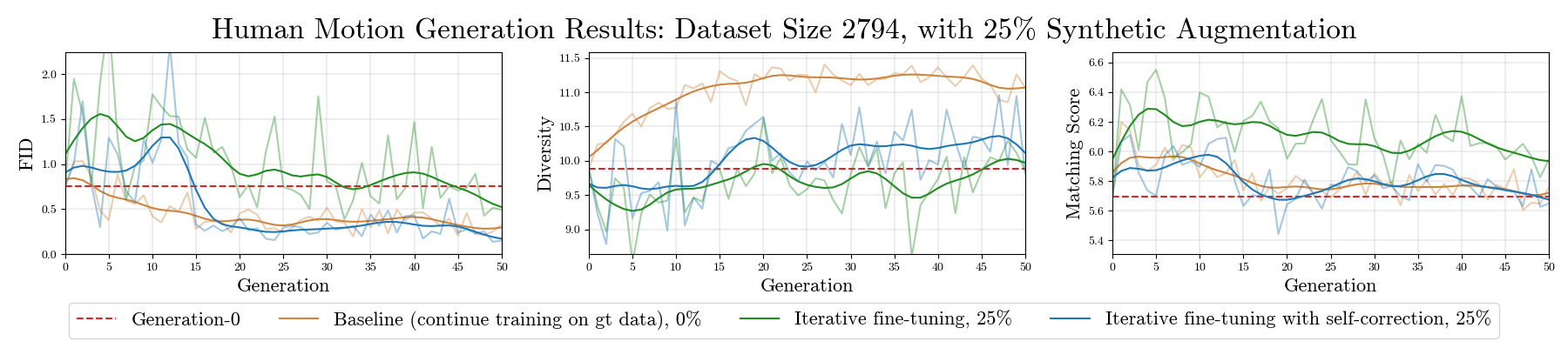}}
\vspace{-1em}
\caption{Results from our human motion experiments on iterative fine-tuning with self-correction.
These graphs show evaluation metrics for the last checkpoint for every generation.
This is the checkpoint used for sampling in the iterative fine-tuning experiments, and it is also the checkpoint where training is resumed with this new partially synthesized dataset.
We can see that \textbf{\color{blue_better}with self-correction}, the iterative fine-tuning procedure more stably converges to a better FID score, and more quickly.
When the dataset size is smaller ($n=64$, above) we can see that iterative fine-tuning \textbf{\color{green_better}with no self-correction} has a flat Matching score, as well as diverging FID and Diversity scores, indicating model collapse.
And when the dataset size is larger ($n=2794$, below), there is less collapse for iterative fine-tuning with no self-correction, although the variance of the FID score is worse, as is the average FID across generations. 
In both cases, we see that iterative fine-tuning \textbf{\color{blue_better}with self-correction} outperforms iterative fine-tuning \textbf{\color{green_better}with no self-correction}, and is competitive with the \textbf{\color{orange}baseline} after many generations.
}
\vspace{-2em}
\label{fig:human_motion_metrics_body_of_paper}
\end{center}
\end{figure*}

Theorem~\ref{thm:2_shortened} states that, in theory, iterative fine-tuning with correction should be more stable than iterative fine-tuning without correction.
Crucially, the stability estimates that we prove rely on the dataset size, the synthetic augmentation percentage, how expressible the generative model class is, and having an idealized correction function. To validate how our theory works beyond toy examples, we conduct a case study on human motion synthesis with diffusion models~\cite{tevet2023human}.
We believe this is a natural setting to test our iterative fine-tuning with correction framework, because synthesizing natural motions is a challenging problem, but there is a natural and intuitive way to automatically correct them at scale--namely, using a physics simulator.

\subsection{Generative Model}

For our generative model, we use the Human Motion Diffusion Model (MDM) \cite{tevet2023human}.
This is a classifier-free diffusion-based generative model for the text-to-motion generation task, where the model receives as input a description of a motion sequence (e.g. ``get down on all fours and crawl across the floor''), and outputs a sequence of skeleton poses which attempt to embody that prompt. Synthesizing human motion is challenging not only for the diverse and compositional text prompts, but also due to failure of physics obeying-ness (e.g. feet skating, floating, penetrating a surface), which is not explicitly enforced by deep generative models.

\subsection{Physics Simulator as Self-Correction Function}

For our self-correction function, we use Universal Humanoid Control (UHC)~\citep{Luo2021DynamicsRegulatedKP}, which is an imitation policy that operates inside the MuJoCo physics simulator~\citep{mujoco}.
Given an input sequence of humanoid skeleton poses, UHC attempts to imitate the motion sequence, constrained by the laws of physics imposed by the physics simulator, and it outputs a new motion sequence that is the closest possible approximation it can replace it with.
For example, if an input motion sequence violates the laws of physics by having a foot penetrate through the floor, then the motion sequence output by UHC will attempt to remove that physically impossible artifact while maintaining the semantic integrity of the original input motion. We use VPoser~\citep{SMPL-X:2019} and SMPL~\citep{SMPL:2015} to translate joint representations between the human motion generator and the physics simulator.

The physics simulator allows us to self-correct a synthesized motion automatically. Our underlying assumption is that by enforcing the physics obeying-ness (via the simulator) and closeness to the synthesized motion (via the imitation objective), the self-correction function would act as similar as an idealized corrector as possible.

\begin{figure*}[ht]
\begin{center}
\centerline{\includegraphics[width=2.1\columnwidth]{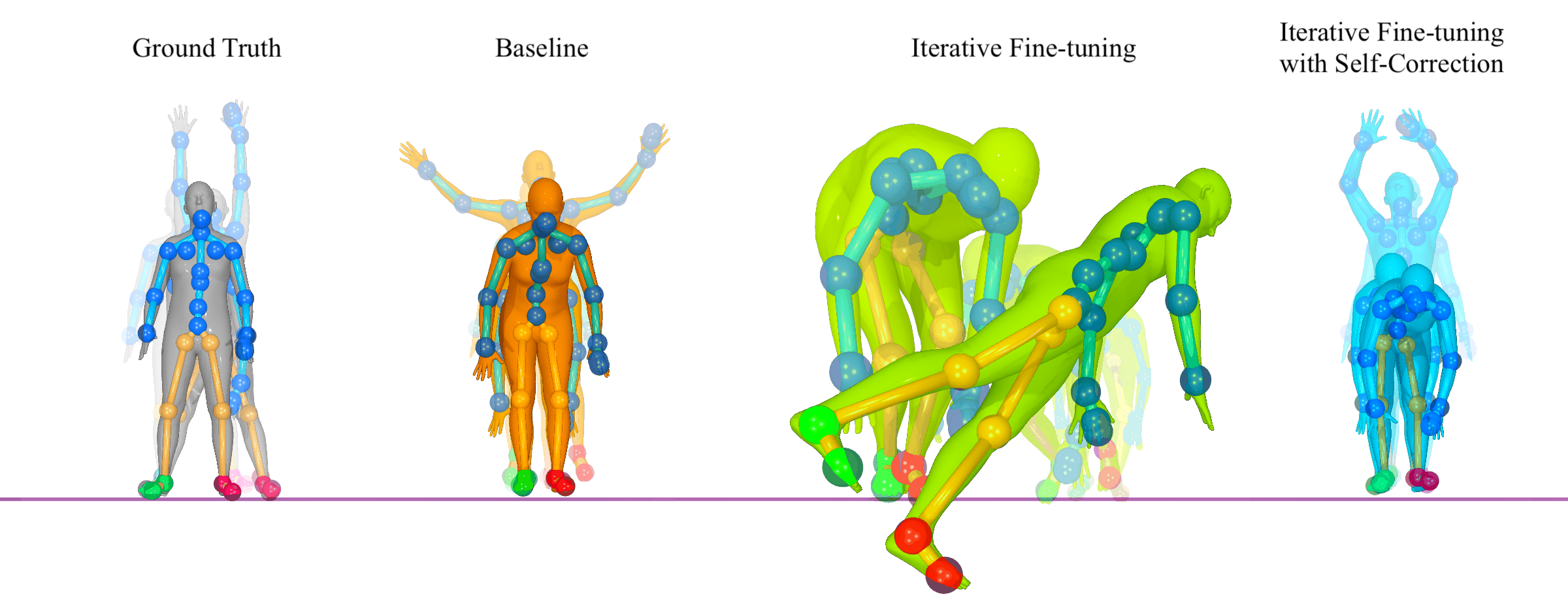}}
\vspace{-1em}
\caption{How does the self-correction operation affect iterative fine-tuning, qualitatively?
Here we present some visualizations.
The prompt which describes the ground truth motion, and which we use to generate the three other motions, is: \textit{``a person stands with feet wide, stretches both hands up over his head and then swings down by the waist and hangs arms down before standing up''}.  
We can see that the \textbf{\color{green_better}iterative fine-tuning model} produces a motion where the human moves closer to the camera than the others; this is evidence of model collapse, as moving feet is irrelevant to the prompt.
Additionally, this motion produces single frames that suddenly snap to a physically impossible position--note the leg penetration through the ground plane. 
These negative artifacts do not exist in the motions synthesized from the \textbf{\color{gray}ground truth}, \textbf{\color{orange}baseline model}, or \textbf{\color{blue_better}iterative fine-tuning with self-correction model}.
Lastly, we note that the iterative fine-tuning motion depicted here is semantically similar to crawling. 
We observe in our experiments with smaller dataset sizes that the iterative fine-tuning model generates less diverse outputs than the baseline model and the iterative fine-tuning with self-correction model, and that this crawling pattern appears more often in the latter.
Each snapshot is taken at exactly frame 105 of their respective videos. 
The two motions on the right come from models that were iteratively fine-tuned for 50 generations, with a train set of size $n=64$, and a synthetic augmentation percentage of $25\%$.
For all pictures of the human, the camera is fixed at the same position, and for consistency the image is not resized.}
\label{fig:qualitative_human_motion_picture_3}
\vspace{-2em}
\end{center}
\end{figure*}

\subsection{Experimental setup}

We preprocess the MoVi~\citep{ghorbani2021movi} subset of HumanML3D~\citep{Guo_2022_CVPR} using the official code implementation of HumanML3D. We filter out movements involving interactions with chairs, as UHC by default does not handle human-object interactions.
We take as our train split the train split from HumanML3D, intersected with our filtered subset of MoVi, and likewise for the test split.
This procedure yields a train set of size $n=2794$ and a test set of size $546$. We further randomly select a smaller training set of $n\in\{64, 128, 256\}$ examples, to simulate the more challenging scenario when the initial generative model is sub-optimal (due to data scarcity). The smaller data also enables us to explore larger synthetic augmentation percentage due to compute constraints.
From here, the iterative re-training procedure follows Algorithm~\ref{alg:physics_augmentation_loop}.
We spell it out in this concrete experimental setup.

We first train on the ground truth train split until the model is nearly converged, using all the default hyperparameters from MDM.
We evaluate and save this last checkpoint from generation $0$.
From here, for each generation $t\in\{1,2,\dots,50\}$, we run three sets of experiments.
\vspace{-1em}
\begin{enumerate}[A.]
\setlength\itemsep{0.0em}
	\item \emph{Baseline}: fine-tune the latest checkpoint from generation $t-1$ for $m$ batches on ground truth dataset $\mathcal D_{\mathrm{real}}$.
	
	\item \emph{Iterative fine-tuning:} fine-tune the latest checkpoint  from generation $t-1$ on $\mathcal D_{\mathrm{real}}\cup \mathcal D_{\mathrm{synth},t-1}$ for $m$ batches.
    Here, $\mathcal D_{\mathrm{synth},t-1}$ is a synthetic dataset of size $\lfloor\lambda\cdot n\rfloor$ generated from the checkpoint for generation $t-1$, using randomly chosen prompts from the train split.

	\item \emph{Iterative fine-tuning with self-correction:} fine-tune the latest checkpoint  from generation $t-1$ on $\mathcal \mathcal D_{\mathrm{real}}\cup \mathrm{UHC}(\mathcal D_{\mathrm{synth},t-1})$ for $m$ batches.
    Here, $\mathrm{UHC}(\mathcal D_{\mathrm{synth},t-1})$ denotes a synthetic dataset of size $\lfloor\lambda\cdot n\rfloor$ generated from the latest checkpoint for generation $t-1$, using randomly chosen prompts from the train split, which is then corrected by UHC.

\end{enumerate}

\vspace{-3mm}
We experiment with synthetic augmentation percentages $\lambda\in\{0.05, 0.10, 0.15, 0.20, 0.25\}$ on the larger dataset; we set the number of batches seen during generation $0$ to be $3125$, and the number of batches seen for each later generation to be $m=625$. 
Separately, we experiment with synthetic augmentation percentages $\lambda\in\{0.25, 0.50, 0.75, 1.00\}$ on the smaller datasets; we set the number of batches seen during generation $0$ to be $78*k$ for dataset size $64*k$, and the number of batches seen for each later generation $t>0$ to be $m=16$. 
We choose to control how many data points the model sees across each generation, rather than controlling some other quantity like the number of epochs, as this allows each experiment to compare against its baseline in a controlled way, which in turn allows them to compare against each other in a controlled way.

We compute every evaluation one time for each checkpoint using the evaluation script provided in the original MDM codebase.
Regardless of the train split size, we perform sampling for evaluation using all 546 motion sequences from the test split, since the FID score is sensitive to generated dataset size.
We use the same hyperparameters as those used for MDM, including batch size $64$, AdamW \cite{loshchilov2018decoupled} with learning rate $1e-4$, and classifier-free guidance parameter $2.5$.
And for UHC we used the \texttt{uhc\_explicit} model for imitation.

\vspace{-4mm}
\subsection{Quantitative Analysis of Results}\label{subsec:quantative_analysis}

\noindent For each of these experiments we report the metrics from MDM, as used by \cite{Guo_2022_CVPR}:
        FID measures how similar the distribution of generated motions is to the ground truth distribution;
        Diversity measures the variance of the generated motions;
	and Matching Score measure how well the generated motions embody the given text prompt.
In Figure~\ref{fig:human_motion_metrics_body_of_paper} we present results from experiments on our $64$-size dataset with $100\%$ synthetic augmentation, as well as our $2794$-size dataset with $25\%$ synthetic augmentation.

Our experimental results confirm our theoretical results, that iterative fine-tuning with self-correction outperforms iterative fine-tuning without self-correction, in the sense that the graphs are generally more stable across generations, and approach better evaluation metric values.
In particular, Theorem~\ref{thm:2_shortened} and Corollary~\ref{corollary:main_theorem} claim that any amount of idealized self-correction will improve the stability bound during iterative fine-tuning.
Our results in Figure~\ref{fig:human_motion_metrics_body_of_paper} demonstrate that the FID score is lower and more stable across generations when applying self-correction, and generally higher and less stable than the baseline, where there is no self-consuming training at all.
We conduct experiments across multiple seeds, and we find empirically that this general phenomenon holds consistently, where the self-correction technique consistently yields improved training dynamics over iterative fine-tuning with no correction.
Graphs from these runs can be found in Appendix~\ref{appendix:multiple_seeds}.

Our experimental results also provide empirical evidence for Conjecture~\ref{conjecture:body}.
Observe that in the baseline experiments in Figure~\ref{fig:human_motion_metrics_body_of_paper}, the FID score decreases across generations, which indicates that the initial model parameters $\theta_0^n$ are not that close to the optimal model parameters $\theta^\star$; additionally, the augmentation percentages considered in the graph are $25\%$ and $100\%$.
Conjecture~\ref{conjecture:body} claims that performing self-correction during iterative fine-tuning improves performance, even when the initial model weights are sub-optimal and simultaneously the synthetic augmentation percentage is large.
This claim is confirmed by Figure~\ref{fig:human_motion_metrics_body_of_paper}.
We direct the curious reader to Appendix~\ref{appendix:more_human_motion_graphs}, where we present graphs for all of the above listed training set sizes and augmentation percentages, providing additional empirical evidence for Theorem~\ref{thm:2_shortened}, Corollary~\ref{corollary:main_theorem}, and Conjecture~\ref{conjecture:body}.

\vspace{-2mm}
\subsection{Qualitative Analysis of Results}\label{subsec:qualitative_analysis}

We visually inspect the generated human motion sequences in order to analyze what concrete effect the self-correction has on iterative fine-tuning.
We find that the correctness and diversity of synthesized motions are improved by the self-correction procedure, in agreement with our quantitative analysis in Subsection~\ref{subsec:quantative_analysis}.
We present snapshots of our synthesized motions in Figure~\ref{fig:qualitative_human_motion_picture_3}, and we analyze the motions in the caption.
In short, we find that physics-disobeying artifacts such as floor penetration or floating become more pronounced without the self-correction. 
We also find that in the model without self-correction, the humanoid sometimes performs movements completely unrelated to the prompt; our model with self-correction fixes these negative phenomena.
We direct the curious reader to Appendix~\ref{appendix:human_motion_qualitative}, where we present more examples from our qualitative analysis, as well as our project webpage, where we provide side-by-side video comparisons.

\vspace{-2mm}

\section{Conclusion}

\vspace{-1mm}
Our paper investigates the learning of generative models when the training data includes machine-generated contents. We investigate how self-correction functions, which automatically correct synthesized data points to be more likely under the true data distribution, can stabilize self-consuming generative model training. Our theoretical results show that self-correction leads to exponentially more stable model training and smaller variance, which we illustrate with a Gaussian toy example. We then demonstrate how physics simulators can serve as a self-correction function for the challenging human motion synthesis task, where models trained with our self-correcting self-consuming loops generate higher quality motions, and manage to avoid collapse even at a high synthetic data to real data ratio.
Future work includes exploring self-correcting functions for more diverse applications, such as language modeling and text-to-image generation, and investigating when self-consuming training may lead to overall better generative models.

\section*{Acknowledgments}

We would like to thank Stephen H. Bach, Quentin Bertrand, Carsten Eickhoff, Gauthier Gidel, Jeff Hoffstein, Zhengyi Luo, Singh Saluja, and Ye Yuan for useful discussions. 
We would also like to thank the anonymous reviewers.
This work is supported by the Samsung Advanced Institute of Technology, Honda Research Institute, and a Richard B. Salomon Award for Chen Sun.
Our research was conducted using computational resources at the Center for Computation and Visualization at Brown University.

\section*{Impact Statement}
This paper presents work whose goal is to provide theoretical analysis and practical tools to address the data contamination issue caused by machine-generated content. There are many potential societal consequences of our work, none which we feel must be specifically highlighted here.

\bibliography{bib}
\bibliographystyle{icml2024}

\newpage
\appendix
\onecolumn

\section{Mathematical Theory: The Proof of Theorem \ref{thm:2_shortened}}\label{appendix:our_math}

In this appendix, we provide a full account of the mathematical details of the theorems and their proofs appearing in the main body of the paper.
Our proof technique has the same framework as \cite{bertrand2023stability} because our theoretical analysis generalizes theirs to the case where you have a self-correction function in the self-consuming loop.

\subsection{Mathematical Setup and Notation}
\begin{definition}
Define the optimal model parameters to be
\begin{equation}\label{eq:theta_star}
\theta^\star\in\argmax_{\theta'\in\Theta}\mathbb E_{x\sim p_{\mathrm{data}}}[\log p_{\theta'}(x)],
\end{equation}
chosen so that $\|\theta^\star\|$ has minimal norm within this set.
Let $\theta$ be any model parameters.
Then the \emph{correction of strength $\gamma$} of distribution $p_\theta$ towards $p_{\theta^\star}$ is a new distribution, denoted $\pi_\gamma p_\theta$, defined according to the rule
\[\pi_\gamma p_\theta(x):=\frac{{p_\theta}(x)+\gamma p_{\theta^\star}(x)}{1+\gamma}.\]
This is illustrated in Figure~\ref{fig:projection_gaussian}.
Let $\theta_t$ be the parameters of the model trained after $t$ generations.
We define the \emph{iterative fine-tuning with correction update mapping} to be
\begin{align}\label{eq:pi-G-infinity}
\pi_\gamma\mathcal G_\lambda^\infty(\theta)
&:=\localargmax_{\theta'\in\Theta}\mathcal H(\theta,\theta')
:=
\localargmax_{\theta'\in\Theta}[
\mathbb E_{x\sim p_{\mathrm{data}}}[\log p_{\theta'}(x)]]+\lambda \mathbb E_{x\sim\pi_\gamma p_{\theta}}[\log p_{\theta'}(x)]]\\
\pi_\gamma\mathcal G_\lambda^n(\theta)
&:=\localargmax_{\theta'\in\Theta}\hat{\mathcal H}(\theta,\theta')
:=\localargmax_{\theta'\in\Theta}[
\mathbb E_{x\sim \hat p_{\mathrm{data}}}[\log p_{\theta'}(x)]]+\lambda \mathbb E_{x\sim {\widehat{\pi_\gamma p_{\theta}}}}[\log p_{\theta'}(x)]].
\end{align}
Notice that in the finite case, we're optimizing by taking samples from an empirical distribution.
In contrast, in the infinite case, there is zero statistical error, since the parameter update is done with access to an infinite sampling budget at each generation $t$.
The finite case is the more practical case, when we have some statistical error (so we only have access to finite sampling at each generation). Since the parameter space of the generative model class might be limited, there might be a small difference between the distribution corresponding to the optimal parameters and the target distribution $p_{\mathrm{data}}$; we capture this difference via the Wasserstein-2 distance and denote
\begin{equation}\label{eq:wasserstein_defn}
    \varepsilon:=d_W(p_{\theta^\star}, p_{\mathrm{data}}).
\end{equation}
Let
\begin{align}\label{eq:defn_of_H}
    \mathcal{H}_1(\theta') := \mathbb E_{x\sim p_{\mathrm{data}}}[\log
     p_{\theta'}(x)], \qquad 
     \mathcal{H}_2(\theta, \theta') := \mathbb E_{x\sim \pi_\gamma p_{\theta}}[\log p_{\theta'}(x)].
\end{align}
and note that $\mathcal H(\theta,\theta')
   =\mathcal H_1(\theta')
       + \mathcal \lambda \mathcal H_2(\theta,\theta')$.
\end{definition}

We first establish that the correction map is truly a mapping of probability distributions as well as some of its elementary properties. 
\begin{lemma}\label{lem:physics_projection_fun_facts}
The correction map has the following properties.
    \begin{enumerate}
    
        \item $\pi_\gamma p_\theta$ is a probability distribution.
        
        \item Strengths $0, 1,\infty$ correspond to $p_\theta$, the average of $p_\theta$ and $p_{\theta^\star}$, and $p_{\theta^\star}$, respectively.
        
        \item For any $x\in\mathbb R^n$, if $\gamma>1$, then 
        \[\|\pi_\gamma p_\theta(x)-p_{\theta^\star}(x)\|
        \le 
        \|\pi_\gamma p_\theta(x)-p_{\theta}(x)\|,\] 
        and if $\gamma<1$, then the inequality is flipped.
        In other words, $\pi_\gamma p_\theta$ is a better estimate of the ideal distribution $p_{\theta^\star}$ than $p_{\theta}$ is, precisely when the projection strength is more than $1$.
    \end{enumerate}
\end{lemma}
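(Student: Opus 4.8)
The plan is to verify the three claims of Lemma~\ref{lem:physics_projection_fun_facts} in order, as each follows from elementary manipulations of the definition $\pi_\gamma p_\theta(x) = \frac{p_\theta(x) + \gamma p_{\theta^\star}(x)}{1+\gamma}$.

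\textbf{Part 1.} To show $\pi_\gamma p_\theta$ is a probability distribution, I would check nonnegativity and normalization. Nonnegativity is immediate: $p_\theta(x) \ge 0$, $p_{\theta^\star}(x) \ge 0$, and $\gamma \ge 0$, so the numerator is nonnegative, and the denominator $1 + \gamma > 0$. For normalization, I integrate: $\int \pi_\gamma p_\theta(x)\,dx = \frac{1}{1+\gamma}\left(\int p_\theta(x)\,dx + \gamma \int p_{\theta^\star}(x)\,dx\right) = \frac{1}{1+\gamma}(1 + \gamma) = 1$, using linearity of the integral and the fact that $p_\theta$ and $p_{\theta^\star}$ are themselves densities.

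\textbf{Part 2.} This is just substitution into the defining formula: $\gamma = 0$ gives $\pi_0 p_\theta(x) = \frac{p_\theta(x)}{1} = p_\theta(x)$; $\gamma = 1$ gives $\pi_1 p_\theta(x) = \frac{p_\theta(x) + p_{\theta^\star}(x)}{2}$, the average; and for $\gamma \to \infty$, writing $\pi_\gamma p_\theta(x) = \frac{1}{1+\gamma}p_\theta(x) + \frac{\gamma}{1+\gamma}p_{\theta^\star}(x)$ and taking the limit gives $p_{\theta^\star}(x)$ since $\frac{1}{1+\gamma} \to 0$ and $\frac{\gamma}{1+\gamma} \to 1$.

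\textbf{Part 3.} The key observation is that $\pi_\gamma p_\theta(x)$ lies on the line segment between $p_\theta(x)$ and $p_{\theta^\star}(x)$, as a scalar convex combination. Explicitly, one computes $\pi_\gamma p_\theta(x) - p_{\theta^\star}(x) = \frac{p_\theta(x) + \gamma p_{\theta^\star}(x) - (1+\gamma)p_{\theta^\star}(x)}{1+\gamma} = \frac{p_\theta(x) - p_{\theta^\star}(x)}{1+\gamma}$, and similarly $\pi_\gamma p_\theta(x) - p_\theta(x) = \frac{\gamma(p_{\theta^\star}(x) - p_\theta(x))}{1+\gamma} = \frac{-\gamma(p_\theta(x) - p_{\theta^\star}(x))}{1+\gamma}$. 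Taking absolute values (these are scalars at a fixed $x$), we get $\|\pi_\gamma p_\theta(x) - p_{\theta^\star}(x)\| = \frac{1}{1+\gamma}\|p_\theta(x) - p_{\theta^\star}(x)\|$ and $\|\pi_\gamma p_\theta(x) - p_\theta(x)\| = \frac{\gamma}{1+\gamma}\|p_\theta(x) - p_{\theta^\star}(x)\|$. The first is $\le$ the second precisely when $1 \le \gamma$, with the inequality flipped when $\gamma < 1$ (and when $p_\theta(x) = p_{\theta^\star}(x)$ both sides vanish and either direction holds trivially). I do not anticipate a genuine obstacle here; the only mild subtlety is being careful that in Part~3 the quantities are pointwise scalar values of densities, so ``$\|\cdot\|$'' is just the absolute value, and the factor $\frac{1}{1+\gamma}$ versus $\frac{\gamma}{1+\gamma}$ is what drives the comparison — this is the one place to state the computation cleanly rather than hand-wave.
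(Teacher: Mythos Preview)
Your proposal is correct and follows essentially the same approach as the paper: both verify Part~1 by integrating the convex combination, dispatch Part~2 by direct substitution, and handle Part~3 by computing $\pi_\gamma p_\theta(x)-p_{\theta^\star}(x)=\frac{1}{1+\gamma}(p_\theta(x)-p_{\theta^\star}(x))$ and $\pi_\gamma p_\theta(x)-p_\theta(x)=\frac{-\gamma}{1+\gamma}(p_\theta(x)-p_{\theta^\star}(x))$ and comparing the prefactors. The paper presents Part~3 as a single chain of (in)equalities whereas you compute both sides separately, but this is a cosmetic difference only.
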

\begin{proof}
For the first point, $\pi_\gamma p_\theta$ is a probability distribution because it is a convex combination of probability distributions.
For example, we can compute that
\begin{align*}
    \int_{\mathbb R^d} \pi_\gamma p_\theta dx
    &=\frac{1}{1+\gamma}\int_{\mathbb R^d} {p_\theta}(x)dx+ \frac{\gamma}{1+\gamma}\int_{\mathbb R^d} {p_{\theta^\star}}(x)dx
    =\frac{1}{1+\gamma}\cdot 1+\frac{\gamma}{1+\gamma}\cdot 1
    =1.
\end{align*}
The second point follows immediately from the definition of $\pi_\gamma p_\theta$.
For the third point, we can estimate that
\begin{align*}
    \|\pi_\gamma p_\theta(x)-p_{\theta^\star}(x)\|
    &=\left\|\frac{p_\theta(x)+\gamma p_{\theta^\star}(x)}{1+\gamma}
    -\frac{p_{\theta^\star}(x)(1+\gamma)}{1+\gamma}\right\|\\
    &=\frac{1}{1+\gamma}\cdot\|p_{\theta}(x)-p_{\theta^\star}(x)\|\\
    &\le\frac{\gamma}{1+\gamma}\cdot\|p_{\theta^\star}(x)-p_{\theta}(x)\|\\
    &=\left\|\frac{p_\theta(x)+\gamma p_{\theta^\star}(x)}{1+\gamma}
    -\frac{p_{\theta}(x)(1+\gamma)}{1+\gamma}\right\|\\
    &=\|\pi_\gamma p_\theta(x)-p_{\theta}(x)\|
\end{align*}
when $\gamma>1$.
The inequality flips when $\gamma< 1$.
\end{proof}

Intuitively, it is clear that we cannot hope to prove general results about generative models without assuming something about the mapping $\theta\mapsto p_{\theta}$. We now state the two assumptions we require in order to make our theoretical arguments; note that they are precisely the same assumptions made in \cite{bertrand2023stability}. The first assumption is a local Lipschitzness property that we will exploit via the Kantorovich-Rubenstein duality:
\begin{assumption}\label{assumption:1}
For $\theta$ close enough to $\theta^\star$, the mapping $x\mapsto\nabla_\theta\nabla_\theta \log p_\theta(x)$ is $L$-Lipschitz.
\end{assumption}

The second assumption is a local regularity and concavity condition:
\begin{assumption}\label{assumption:2}
The mapping $\theta\mapsto\mathbb E_{x\sim p_{\mathrm{data}}}[\log p_\theta(x)]$ is continuously twice differentiable locally around $\theta^\star$ and $\mathbb{E}_{x\sim p_{\text{data}}}\left[\nabla_\theta\nabla_\theta \log p_{\theta}(x)\right]_{\theta^\star} \preceq -\alpha I_d \prec 0.
$
\end{assumption}

We next show the existence and uniqueness of $\pi_{\gamma}\mathcal{G}_{\lambda}^{\infty}(\infty)$ locally around $\theta^{\star}$.

\begin{proposition}[The Local Maximum Likelihood Solution is Unique]\label{prop:implicit_function_theorem_application}
The following are true:
\begin{enumerate}[A.]
    \item 
    There exists an open neighborhood $U\subset\mathbb R^d$ containing $\theta^\star$ and a continuous function $g:U\to\mathbb R^d$ such that $g(\theta^\star)=\theta^\star$, and 
\begin{equation}
    \nabla_{\theta'}\mathcal H(\theta,\theta')|_{\theta,g(\theta)} = 0
\end{equation}
for every $\theta\in U$.

    \item Given optimal model parameters $\theta^\star$ as in \eqref{eq:theta_star} that follow Assumptions \ref{assumption:1} and \ref{assumption:2}, we have that, if $\varepsilon L < \alpha$, then for all $\lambda>0$ and $\theta$ in a small enough neighborhood $U$ around $\theta^\star$, there exists a unique local maximizer $\pi_\gamma\mathcal G_\lambda^\infty(\theta)$ in $U$.
\end{enumerate}
\end{proposition}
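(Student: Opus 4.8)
The plan is to prove Proposition~\ref{prop:implicit_function_theorem_application} via the implicit function theorem applied to the stationarity condition $\nabla_{\theta'}\mathcal H(\theta,\theta')=0$, exactly mirroring the argument in \cite{bertrand2023stability} but carrying the correction map $\pi_\gamma$ through. For part A, I would first observe that $\mathcal H(\theta,\theta')=\mathcal H_1(\theta')+\lambda\mathcal H_2(\theta,\theta')$ and that $\theta^\star$ is a critical point of $\mathcal H(\theta^\star,\cdot)$: indeed $\nabla_{\theta'}\mathcal H_1(\theta^\star)=0$ by definition of $\theta^\star$ in \eqref{eq:theta_star}, and $\nabla_{\theta'}\mathcal H_2(\theta^\star,\theta')|_{\theta'=\theta^\star}=\mathbb E_{x\sim\pi_\gamma p_{\theta^\star}}[\nabla_{\theta'}\log p_{\theta'}(x)]|_{\theta^\star}$; since $\pi_\gamma p_{\theta^\star}=p_{\theta^\star}$ (Lemma~\ref{lem:physics_projection_fun_facts}, item 2, with $\theta=\theta^\star$), this is the score expectation under $p_{\theta^\star}$, which I would argue vanishes — or more carefully, one shows $\nabla_{\theta'}\mathcal H(\theta^\star,\theta')|_{\theta^\star}=0$ using that $\theta^\star$ maximizes $\mathcal H_1$ and the extra $\lambda\mathcal H_2(\theta^\star,\cdot)$ term is (up to the Wasserstein gap $\varepsilon$) also stationary there. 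Then I would invoke the implicit function theorem on the map $F(\theta,\theta'):=\nabla_{\theta'}\mathcal H(\theta,\theta')$ at $(\theta^\star,\theta^\star)$, which requires $\partial_{\theta'}F(\theta^\star,\theta^\star)=\nabla^2_{\theta'}\mathcal H(\theta^\star,\theta^\star)$ to be invertible.

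The crux of the argument, and the main obstacle, is establishing that $\nabla^2_{\theta'}\mathcal H(\theta^\star,\theta^\star)$ is negative definite (hence invertible), which is also what powers part B. The plan is to write
\[
\nabla^2_{\theta'}\mathcal H(\theta^\star,\theta^\star)
=\nabla^2_{\theta'}\mathcal H_1(\theta^\star)
+\lambda\,\nabla^2_{\theta'}\mathcal H_2(\theta^\star,\theta^\star).
\]
By Assumption~\ref{assumption:2}, $\nabla^2_{\theta'}\mathcal H_1(\theta^\star)=\mathbb E_{x\sim p_{\mathrm{data}}}[\nabla^2_\theta\log p_\theta(x)]|_{\theta^\star}\preceq -\alpha I_d$. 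For the second term, $\nabla^2_{\theta'}\mathcal H_2(\theta^\star,\theta^\star)=\mathbb E_{x\sim\pi_\gamma p_{\theta^\star}}[\nabla^2_\theta\log p_\theta(x)]|_{\theta^\star}=\mathbb E_{x\sim p_{\theta^\star}}[\nabla^2_\theta\log p_\theta(x)]|_{\theta^\star}$ since $\pi_\gamma p_{\theta^\star}=p_{\theta^\star}$. I would then control the difference between this and $\mathbb E_{x\sim p_{\mathrm{data}}}[\nabla^2_\theta\log p_\theta(x)]|_{\theta^\star}$ using the Kantorovich--Rubinstein duality: since $x\mapsto\nabla^2_\theta\log p_\theta(x)$ is $L$-Lipschitz near $\theta^\star$ (Assumption~\ref{assumption:1}) and $d_W(p_{\theta^\star},p_{\mathrm{data}})=\varepsilon$, the operator-norm difference of the two Hessian-expectations is at most $\varepsilon L$. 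Hence $\nabla^2_{\theta'}\mathcal H_2(\theta^\star,\theta^\star)\preceq(-\alpha+\varepsilon L)I_d$, and combining,
\[
\nabla^2_{\theta'}\mathcal H(\theta^\star,\theta^\star)
\preceq \big(-\alpha+\lambda(-\alpha+\varepsilon L)\big)I_d
\preceq -\big(\alpha-\lambda\varepsilon L\big)I_d,
\]
which is negative definite as soon as $\varepsilon L<\alpha$ (the term $-\lambda\alpha$ only helps). Here I should be careful that the Wasserstein/Lipschitz bound is applied entry-wise or to the matrix-valued map in operator norm, and that "$L$-Lipschitz" is interpreted accordingly; this bookkeeping about vector- versus matrix-valued Lipschitz constants is the fiddly part to get exactly right.

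With negative definiteness in hand, part A follows: the implicit function theorem gives an open neighborhood $U$ of $\theta^\star$ and a $C^1$ (in particular continuous) map $g:U\to\mathbb R^d$ with $g(\theta^\star)=\theta^\star$ and $\nabla_{\theta'}\mathcal H(\theta,g(\theta))=0$ for all $\theta\in U$. For part B, I would then argue that $g(\theta)$ is in fact a strict local maximizer of $\theta'\mapsto\mathcal H(\theta,\theta')$: by continuity of the Hessian (Assumption~\ref{assumption:2} gives $C^2$ regularity of $\mathcal H_1$ locally, and $\mathcal H_2$ is smooth in $\theta'$ under Assumption~\ref{assumption:1}), and since $\nabla^2_{\theta'}\mathcal H(\theta^\star,\theta^\star)\prec 0$, we can shrink $U$ so that $\nabla^2_{\theta'}\mathcal H(\theta,\theta')\prec 0$ for all $(\theta,\theta')$ in a neighborhood of $(\theta^\star,\theta^\star)$; then $g(\theta)$, being a critical point with negative definite Hessian, is the unique local maximizer $\pi_\gamma\mathcal G_\lambda^\infty(\theta)$ in $U$. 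Uniqueness on $U$ comes from the implicit function theorem's local uniqueness of the solution branch together with strict concavity on the shrunken neighborhood. I would close by noting the statement holds for \emph{all} $\lambda>0$ because the sign condition $\varepsilon L<\alpha$ did not require $\lambda$ to be small — smallness of $\lambda$ only enters later, in Theorem~\ref{thm:2_shortened}, for the contraction estimate.
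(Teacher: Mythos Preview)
Your proposal is correct and follows essentially the same route as the paper: apply the implicit function theorem to $\nabla_{\theta'}\mathcal H$ at $(\theta^\star,\theta^\star)$, with invertibility coming from negative definiteness of the Hessian via Assumption~\ref{assumption:2} together with a Kantorovich--Rubinstein bound of size $\varepsilon L$ on the $\mathcal H_2$ piece, and then for Part~B use that negative definiteness persists nearby so $g(\theta)$ is the unique local maximizer. Two small notes: commit to the score-identity argument for stationarity of $\mathcal H_2(\theta^\star,\cdot)$ at $\theta^\star$---since $\pi_\gamma p_{\theta^\star}=p_{\theta^\star}$ this is \emph{exact} (no $\varepsilon$ slack), so drop your hedge; and the paper handles Part~B by directly bounding $\nabla^2_{\theta'}\mathcal H(\theta,g(\theta))$ for general $\theta\in U$ (picking up an extra $\tfrac{L}{1+\gamma}d_W(p_\theta,p_{\theta^\star})$ term and a quantitative size condition on $U$) rather than invoking continuity at $\theta^\star$, but your continuity argument is sufficient for the proposition as stated.
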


\begin{proof}
\textbf{We first prove part A.}
It suffices to apply the Implicit Function Theorem to the map
\begin{align}\label{eq:derivative_of_H_at_maximum}
    \mathbb R^{2d}\to\mathbb R^d:(\theta,\theta')\mapsto \nabla_{\theta'}\mathcal H(\theta,\theta')|_{\theta,\theta'}
\end{align}
in an open neighborhood of $(\theta^\star,\theta^\star)$.
To do this, we need to show the following:
\begin{enumerate}[i)]

    \item The map vanishes at $(\theta^\star,\theta^\star)$, i.e.
    \begin{equation}
        \nabla_{\theta'}\mathcal H(\theta,\theta')|_{\theta^\star,\theta^\star} = 0.
    \end{equation}

    \item The Jacobian matrix at $(\theta^\star,\theta^\star)$ is invertible, i.e.,
    \begin{equation}\label{eq:Jacobian_of_H}
        \nabla_{\theta'}\nabla_{\theta'}\mathcal H(\theta,\theta')|_{\theta^\star,\theta^\star}\qquad \text{is invertible.}
    \end{equation}
    
\end{enumerate}

\textbf{We first prove i).}
Recall from the definition \eqref{eq:pi-G-infinity} that $\pi_\gamma\mathcal G_\lambda^\infty(\theta)=\argmax_{\theta'\in\Theta}\mathcal H(\theta,\theta')$.
This means that for any $\theta$, $\pi_\gamma\mathcal G_\lambda^\infty(\theta)$ is the choice of $\theta'$ which maximizes $\mathcal H(\theta,\theta')$.
In particular, for $\theta=\theta^\star$, we have that $\theta'=\pi_\gamma\mathcal G_\lambda^\infty(\theta^\star)$ is the choice which maximizes $\mathcal H(\theta^\star,\theta')$.
But $\pi_\gamma\mathcal G_\lambda^\infty(\theta^\star)=\theta^\star$ by Proposition~\ref{prop:fixed_point}.
This implies that its derivative is zero at $\theta'=\theta^\star$, meaning $\nabla_{\theta'}\mathcal H(\theta,\theta')|_{\theta^\star,\theta^\star}=0$, as needed.

\textbf{Now we prove ii).}
In order to show that the matrix \eqref{eq:Jacobian_of_H} is invertible, it suffices to show it is close to another matrix which is invertible.
A natural choice is the matrix
\begin{equation}\label{eq:close_to_Jacobian}
M = (1+\lambda)\nabla_{\theta'}\nabla_{\theta'}\mathbb{E}_{x\sim p_{\text{data}}} [\log p_{\theta'}(x)]|_{\theta^\star}.
\end{equation}
First of all, note that this matrix indeed exists; by Assumption 2 \ref{assumption:2}, we know the map $\theta' \mapsto \mathbb{E}_{x\sim p_{\text{data}}} [\log p_{\theta'}(x)]$ is continuously twice differentiable locally near $\theta^{\star}$.
We can estimate that the matrices \eqref{eq:Jacobian_of_H} and \eqref{eq:close_to_Jacobian} are indeed close as follows:
\begin{align*}
    \|\nabla_{\theta'}\nabla_{\theta'}&\mathcal H(\theta,\theta')|_{\theta^\star,\theta^\star}
    - (1+\lambda)\nabla_{\theta'}\nabla_{\theta'}\mathbb{E}_{x\sim p_{\text{data}}} [\log p_{\theta'}(x)]_{\theta^\star}\|\\
    &=\|\nabla_{\theta'}\nabla_{\theta'}[\mathbb E_{x\sim p_{\mathrm{data}}}\log p_{\theta'}(x)+\lambda \mathbb E_{x\sim \pi_\gamma p_{\theta}}p_{\theta'}(x)]|_{\theta^\star,\theta^\star}
    - (1+\lambda)\nabla_{\theta'}\nabla_{\theta'}\mathbb{E}_{x\sim p_{\text{data}}} [\log p_{\theta'}(x)]_{\theta^\star}\|\\
    &=\lambda \|[\nabla_{\theta'}\nabla_{\theta'}\mathbb E_{x\sim \pi_\gamma p_{\theta}}\log p_{\theta'}(x)]|_{\theta^\star,\theta^\star}
    - \nabla_{\theta'}\nabla_{\theta'}\mathbb{E}_{x\sim p_{\text{data}}} [\log p_{\theta'}(x)]_{\theta^\star}\|\\
    & = \lambda\|[\nabla_{\theta'}\nabla_{\theta'} \mathbb{E}_{x\sim p_{\theta^\star}}\log p_{\theta'}(x)]_{\theta^\star}
    -[\nabla_{\theta'}\nabla_{\theta'}\mathbb{E}_{x\sim p_{\text{data}}} \log p_{\theta'}(x)]_{\theta^\star}\|\\
    & = \lambda\|[ \mathbb{E}_{x\sim p_{\theta^\star}}\nabla_{\theta'}\nabla_{\theta'}\log p_{\theta'}(x)]_{\theta^\star}
    -[\mathbb{E}_{x\sim p_{\text{data}}}\nabla_{\theta'}\nabla_{\theta'} \log p_{\theta'}(x)]_{\theta^\star}\|\\
    &\le L\lambda\mathbb E_{(x,x')\sim p_{\theta^\star}\times p_{\mathrm{data}}}
    \|[\nabla_{\theta'}\nabla_{\theta'}\log p_{\theta'}(x)]_{\theta^\star}
    -\nabla_{\theta'}\nabla_{\theta'}\log p_{\theta'}(x)]_{\theta^\star}\|\\
    &\le\lambda\varepsilon L
\end{align*}
where 
    the first equality follows from the definition of $\mathcal H$ in \eqref{eq:defn_of_H};
    the second equality follows from some cancellation;
    the third equality follows the fact that the derivatives are constant with respect to $\theta$, and $\pi_\gamma p_{\theta^\star}=p_{\theta^\star}$ by Lemma \ref{lem:physics_projection_fun_facts};
    we exchange the derivative and the expectation in equation 4 using the Dominated Convergence Theorem, since Assumption 1 \ref{assumption:1} says that $x\mapsto\nabla_\theta\nabla_\theta\log p_\theta(x)$ is $L$-Lipschitz;
    the fifth estimate follows from Kantorovich-Rubinstein Duality; 
    and the final estimate is the definition of Wasserstein distance \eqref{eq:wasserstein_defn}.
    
Finally, we verify $M$ is indeed invertible.
Assumption 2 \ref{assumption:2} implies that the largest eigenvalue of $M$ is at most $-(1+\lambda)\alpha$.
Therefore, since all eigenvalues of $M$ are nonzero, $M$ is invertible.
We can now apply the implicit function theorem to \eqref{eq:derivative_of_H_at_maximum}, and part A follows immediately.

\textbf{Next, we prove part B.}
Let $d_U = \sup_{\theta\in U} d_W(p_{\theta^{\star}},p_{\theta})$. To verify that $g(\theta)$ is a local maximizer of \eqref{eq:derivative_of_H_at_maximum}, it suffices to show that $\nb_{\theta'}\nb_{\theta'} \mathcal{H}(\theta, g(\theta)) \prec 0$. By Assumption 2 \ref{assumption:2}, we know $\nb_{\theta'}\nb_{\theta'}\mathcal{H}_1(\theta^{\star}) \prec -\alpha I_{d}$ and since $\theta' \mapsto \nb_{\theta'}\nb_{\theta'}\mathcal{H}_1(\theta')$ is \textit{continuously} twice differentiable locally near $\theta^{\star}$, we also have $\nb_{\theta'}\nb_{\theta'}\mathcal{H}_1(g(\theta)) \prec - \alpha I_{d}$. Thus, we have
\begin{align*}
    \nb_{\theta'}\nb_{\theta'} \mathcal{H}(\theta, g(\theta)) & = \nb_{\theta'}\nb_{\theta'} \mathcal{H}_1(g(\theta')) + \lambda \nb_{\theta'}\nb_{\theta'} \mathcal{H}_2(\theta, g(\theta))\\
    & = (1+\lambda) \nb_{\theta'}\nb_{\theta'} \mathcal{H}_1(g(\theta)) + \lambda(\nb_{\theta'}\nb_{\theta'} \mathcal{H}_2(\theta, g(\theta))- \nb_{\theta'}\nb_{\theta'} \mathcal{H}_1(g(\theta)))\\
    &\preceq -\alpha(1+\lambda) I_{d}+ \lambda L \left(\frac{1}{1+\gamma}d_W(p_{\theta}, p_{\theta^{\star}}) + \varepsilon\right)I_{d}, 
\end{align*}
where the last step follows from Kantorovich-Rubsenstein duality:
\begin{align*}
    \|\nb_{\theta'}\nb_{\theta'} &\mathcal{H}_2(\theta,\theta')- \nb_{\theta'}\nb_{\theta'} \mathcal{H}_1(\theta') \|\\ 
    &\leq  \|\nb_{\theta'}\nb_{\theta'} \mathcal{H}_2(\theta, \theta')- \nb_{\theta'}\nb_{\theta'} \mathcal{H}_2(\theta^{\star}, \theta') \| +  \|\nb_{\theta'}\nb_{\theta'} \mathcal{H}_2(\theta^{\star}, \theta')-\nb_{\theta'}\nb_{\theta'} \mathcal{H}_1(\theta')\|\\
    & = \|\int_{\mathbb{R}^d} \nb_{\theta'}\nb_{\theta'} \log p_{\theta'}(x) \frac{p_\theta(x) + \gamma p_{\theta^{\star}}(x)}{1+\gamma} \,dx - \int_{\mathbb{R}^d} \nb_{\theta'}\nb_{\theta'} \log p_{\theta'}(x) p_{\theta^{\star}}(x) \,dx\|\\
    & \;\;\; + \|\mathbb{E}_{x\sim p_{\text{data}}} [\log p_{\theta'}(x)] - \mathbb{E}_{x\sim p_{\theta^{\star}}} [\log p_{\theta'}(x)]\|\\
    & \leq   \frac{1}{1+\gamma}\|\int_{\mathbb{R}^d} \nb_{\theta'}\nb_{\theta'} \log p_{\theta'}(x) \left(p_\theta(x) -p_{\theta^{\star}}(x)\right) \,dx \| + L\varepsilon\\
    & = \frac{1}{1+\gamma}\|\mathbb{E}_{x\sim p_{\theta}} [\log p_{\theta'}(x)] - \mathbb{E}_{x\sim p_{\theta^{\star}}} [\log p_{\theta'}(x)] \| + L\varepsilon \\
    &\leq \frac{L}{1+\gamma}d_W(p_{\theta}, p_{\theta^{\star}}) + L\varepsilon\\
    & \leq \frac{L}{1+\gamma}d_U + L\varepsilon
\end{align*}
Thus, to have $\nb_{\theta'}\nb_{\theta'} \mathcal{H}(\theta, g(\theta)) \prec 0$, it is sufficient that
\begin{align*}
    -\alpha(1+\lambda) + \lambda L \left(\frac{1}{1+\gamma}d_U + \varepsilon\right) < 0,
\end{align*}
which is guaranteed for all $\lambda > 0$ by $\alpha > L\varepsilon$ and $d_U \leq \frac{\alpha(1+\gamma)}{\lambda}$. This concludes the proof.
\end{proof}

Further, as we would expect, $\theta^{\star}$ is a fixed point of $\pi_{\gamma} \mathcal{G}_{\lambda}^{\infty}$:
\begin{proposition}[The optimal parametric generative model is a fixed point]\label{prop:fixed_point}
For any given data distribution $p_{\mathrm{data}}$, any $\theta^\star$ as defined by \eqref{eq:theta_star}, and for all $\lambda>0$, we have $\pi_\gamma \mathcal{G}_\lambda ^\infty (\theta^\star)=\theta^\star$.
\end{proposition}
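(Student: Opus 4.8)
The plan is to verify the two defining conditions of $\pi_\gamma\mathcal{G}_\lambda^\infty(\theta^\star)=\theta^\star$: first that $\theta^\star$ is a critical point of $\theta'\mapsto\mathcal{H}(\theta^\star,\theta')$, and second that it is in fact a local maximizer, so that the $\localargmax$ in the definition \eqref{eq:pi-G-infinity} indeed returns $\theta^\star$ (uniqueness of the local maximizer near $\theta^\star$ is already supplied by Proposition~\ref{prop:implicit_function_theorem_application}B, modulo the mild circularity I note below).

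\medskip

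First I would observe that $\pi_\gamma p_{\theta^\star}=p_{\theta^\star}$ by Lemma~\ref{lem:physics_projection_fun_facts} (strength $\gamma$ applied to $p_{\theta^\star}$ returns $p_{\theta^\star}$, since it is a convex combination of $p_{\theta^\star}$ with itself). Hence
\begin{align*}
\mathcal{H}(\theta^\star,\theta')
&=\mathbb{E}_{x\sim p_{\mathrm{data}}}[\log p_{\theta'}(x)]
+\lambda\,\mathbb{E}_{x\sim\pi_\gamma p_{\theta^\star}}[\log p_{\theta'}(x)]\\
&=(1+\lambda)\,\mathbb{E}_{x\sim p_{\mathrm{data}}}[\log p_{\theta'}(x)]
=(1+\lambda)\,\mathcal{H}_1(\theta').
\end{align*}
So the whole question reduces to the behavior of $\mathcal{H}_1(\theta')=\mathbb{E}_{x\sim p_{\mathrm{data}}}[\log p_{\theta'}(x)]$ near $\theta^\star$. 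By the definition \eqref{eq:theta_star} of $\theta^\star$ as the (norm-minimal) global maximizer of exactly this map, $\theta^\star$ is a critical point: $\nabla_{\theta'}\mathcal{H}_1(\theta')|_{\theta^\star}=0$, which gives $\nabla_{\theta'}\mathcal{H}(\theta^\star,\theta')|_{\theta^\star}=0$. Moreover, Assumption~\ref{assumption:2} tells us $\mathcal{H}_1$ is twice continuously differentiable near $\theta^\star$ with $\nabla_{\theta'}^2\mathcal{H}_1(\theta')|_{\theta^\star}\preceq-\alpha I_d\prec 0$, so $\nabla_{\theta'}^2\mathcal{H}(\theta^\star,\theta')|_{\theta^\star}=(1+\lambda)\nabla_{\theta'}^2\mathcal{H}_1(\theta')|_{\theta^\star}\preceq-(1+\lambda)\alpha I_d\prec 0$. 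Thus $\theta^\star$ is a strict local maximizer of $\theta'\mapsto\mathcal{H}(\theta^\star,\theta')$, and by the norm-minimality tie-break in \eqref{eq:theta_star} it is the value selected by $\localargmax$; therefore $\pi_\gamma\mathcal{G}_\lambda^\infty(\theta^\star)=\theta^\star$.

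\medskip

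The only subtlety — and the one place that needs care rather than calculation — is a potential circularity in the logical ordering: Proposition~\ref{prop:implicit_function_theorem_application} invokes Proposition~\ref{prop:fixed_point} (in proving part~A.i), so the present proof must not lean on Proposition~\ref{prop:implicit_function_theorem_application} for the existence/uniqueness of the local maximizer. The argument above avoids this: it establishes directly, from \eqref{eq:theta_star} and Assumption~\ref{assumption:2} alone, that $\theta^\star$ is a strict local maximizer of $\mathcal{H}(\theta^\star,\cdot)$, which is all that is needed to conclude it equals $\pi_\gamma\mathcal{G}_\lambda^\infty(\theta^\star)$ under the $\localargmax$ convention (with ties broken by norm). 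I expect no genuine obstacle here; the main thing to get right is to keep the reduction $\mathcal{H}(\theta^\star,\cdot)=(1+\lambda)\mathcal{H}_1(\cdot)$ front and center and to cite the right pieces (Lemma~\ref{lem:physics_projection_fun_facts} for $\pi_\gamma p_{\theta^\star}=p_{\theta^\star}$, the definition \eqref{eq:theta_star} for criticality and the tie-break, Assumption~\ref{assumption:2} for the negative-definite Hessian) so that the proof is self-contained and non-circular.
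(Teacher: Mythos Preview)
Your reduction $\mathcal{H}(\theta^\star,\theta')=(1+\lambda)\,\mathcal{H}_1(\theta')$ is incorrect. Lemma~\ref{lem:physics_projection_fun_facts} gives $\pi_\gamma p_{\theta^\star}=p_{\theta^\star}$, \emph{not} $p_{\mathrm{data}}$, and in general $p_{\theta^\star}\neq p_{\mathrm{data}}$ (this is precisely the gap measured by $\varepsilon=d_W(p_{\theta^\star},p_{\mathrm{data}})$ in \eqref{eq:wasserstein_defn}). So what you actually have is
\[
\mathcal{H}(\theta^\star,\theta')
=\mathbb{E}_{x\sim p_{\mathrm{data}}}[\log p_{\theta'}(x)]
+\lambda\,\mathbb{E}_{x\sim p_{\theta^\star}}[\log p_{\theta'}(x)],
\]
and the two expectations cannot be merged. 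Consequently your ``criticality'' and ``negative-definite Hessian'' steps, as written, do not go through.

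The argument is salvageable along the lines you intend, but you need to treat the second term separately: $\theta'\mapsto\mathbb{E}_{x\sim p_{\theta^\star}}[\log p_{\theta'}(x)]$ is globally maximized at $\theta'=\theta^\star$ by Gibbs' inequality (equivalently, nonnegativity of $D_{\mathrm{KL}}(p_{\theta^\star}\Vert p_{\theta'})$), so its gradient vanishes there and $\theta^\star$ is a critical point of $\mathcal{H}(\theta^\star,\cdot)$. For the second-order part, the Hessian at $\theta^\star$ is $A+\lambda C$ in the paper's notation, and one checks negative definiteness via $\|C-A\|\le L\varepsilon$ together with $A\preceq-\alpha I_d$ and $\varepsilon L<\alpha$; this is essentially the computation already carried out in the proof of Proposition~\ref{prop:implicit_function_theorem_application}. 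By contrast, the paper's own proof sidesteps all of this: it simply observes that at $\theta=\theta^\star$ the correction is inert, so $\pi_\gamma\mathcal{G}_\lambda^\infty(\theta^\star)=\mathcal{G}_\lambda^\infty(\theta^\star)$, and then cites the corresponding fixed-point result from \cite{bertrand2023stability}. Your self-contained route is a reasonable alternative and does avoid the circularity you flag, but the key displayed identity must be replaced with the correct two-term decomposition and handled as above.
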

\begin{proof}
Unpacking definition \eqref{eq:pi-G-infinity} shows that $\pi_\gamma \mathcal{G}_\lambda ^\infty (\theta^\star) = \mathcal{G}_\lambda ^\infty (\theta^\star)$, and we know by Proposition 4 from \cite{bertrand2023stability} that $\mathcal{G}_\lambda ^\infty (\theta^\star)=\theta^\star$.
\end{proof}

\subsection{Convergence of Iterative Fine-tuning with Correction for Infinite Sampling}

We now have the required setup to state and prove a convergence result for iterative fine-tuning assuming infinite access to underlying probablity distributions. We need the following result, which is a technical lemma that provides a computation of the Jacobian of $\pi_{\gamma}G_{\lambda}^{\infty}$ at $\theta^{\star}$ as well as a spectral bound, both essential for the proof of Theorem \ref{thm:1}.

\begin{lemma} \label{spectral}
We define the matrices
\begin{align}
    A&:=(\nabla_{\theta',\theta'}^2\mathcal H_1(\theta'))|_{\theta^\star}\\
    B&:= \nabla_{\theta,\theta'}^2\mathbb{E}_{x\sim p_{\theta}}[\log p_{\theta'}(x)]\big|_{\theta^\star,\theta^\star} \\
    C&:=\nabla_{\theta',\theta'}^2 \mathbb{E}_{x\sim p_{\theta}}[\log p_{\theta'}(x)]\big|_{\theta^*,\theta^*}
\end{align}
Recall the definition of $\pi_\gamma\mathcal G_\lambda^\infty(\theta)$ from \eqref{eq:pi-G-infinity}.
Since $\gamma$ and $\lambda$ are fixed, denote $\pi \mathcal{G}(\theta) = \pi_{\gamma}\mathcal{G}_{\lambda}^{\infty}(\theta).$ Finally, let $\mathcal{J}(\pi \mathcal{G}(\theta)):=\nabla_\theta \pi_\gamma \mathcal{G}_\lambda^\infty(\theta)|_\theta$ denote the Jacobian of $\pi_\gamma \mathcal{G}_\lambda^\infty(\theta)$.
\begin{enumerate}[I.]
\item There exists an open neighborhood $U \subseteq \Theta$ containing $\theta^{\star}$ such that for all $\theta \in U$, we have 
\begin{align}\label{eq:implicit_differentiation}
    \mathcal{J}(\mathcal{\pi G}(\theta)) =
    &\,\,-\left(\nb^2_{\theta', \theta'} \mathcal{H}(\theta, \pi\mathcal{G}(\theta))\right)^{-1} \cdot \lambda \nb^{2}_{\theta, \theta'} \mathcal{H}_2(\theta, \pi\mathcal{G}(\theta)).
\end{align}

  \item We have that $\nb^{2}_{\theta, \theta'} \mathcal{H}_2(\theta^\star,\theta^\star)=\frac{B}{1+\gamma}$, and $B=-C$, so the Jacobian of $\pi \mathcal{G}$ at $\theta^{\star}$ is
   \begin{equation}\label{eq:jacobian_at_theta_star}
      \mathcal{J}(\pi\mathcal{G}(\theta^\star)) = (I + \lambda A^{-1} C)^{-1} \cdot \frac{\lambda}{1+\gamma} A^{-1} C
  \end{equation}
  \item The spectral norm of $A^{-1} C$ can be bounded as 
  \begin{equation}\label{eq:AinvCbound}
      \|A^{-1}C\| \leq 1+ \frac{L \varepsilon}{\alpha}.
  \end{equation}
\end{enumerate}
\end{lemma}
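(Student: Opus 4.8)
The plan is to prove the three parts in order, each building on the previous.

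\textbf{Part I.} I would apply the Implicit Function Theorem to the equation $\nb_{\theta'}\mathcal H(\theta,\theta')|_{\theta,\pi\mathcal G(\theta)}=0$, which holds on a neighborhood $U$ of $\theta^\star$ by Proposition~\ref{prop:implicit_function_theorem_application}(A) (with $g=\pi\mathcal G$). Differentiating this identity in $\theta$ via the chain rule gives
\[
\nb^2_{\theta,\theta'}\mathcal H(\theta,\pi\mathcal G(\theta)) + \nb^2_{\theta',\theta'}\mathcal H(\theta,\pi\mathcal G(\theta))\cdot\mathcal J(\pi\mathcal G(\theta)) = 0.
\]
The Hessian $\nb^2_{\theta',\theta'}\mathcal H(\theta,\pi\mathcal G(\theta))$ is invertible on $U$ (it is negative definite, shrinking $U$ if needed, by the computation in Proposition~\ref{prop:implicit_function_theorem_application}(B)), so I can solve for $\mathcal J$. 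Finally, since $\mathcal H=\mathcal H_1(\theta')+\lambda\mathcal H_2(\theta,\theta')$ and $\mathcal H_1$ does not depend on $\theta$, the mixed partial $\nb^2_{\theta,\theta'}\mathcal H = \lambda\nb^2_{\theta,\theta'}\mathcal H_2$, which yields \eqref{eq:implicit_differentiation}.

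\textbf{Part II.} First I would compute $\nb^2_{\theta,\theta'}\mathcal H_2(\theta^\star,\theta^\star)$. Since $\mathcal H_2(\theta,\theta')=\mathbb E_{x\sim\pi_\gamma p_\theta}[\log p_{\theta'}(x)] = \frac{1}{1+\gamma}\mathbb E_{x\sim p_\theta}[\log p_{\theta'}(x)] + \frac{\gamma}{1+\gamma}\mathbb E_{x\sim p_{\theta^\star}}[\log p_{\theta'}(x)]$, only the first term depends on $\theta$, so $\nb^2_{\theta,\theta'}\mathcal H_2 = \frac{1}{1+\gamma}\nb^2_{\theta,\theta'}\mathbb E_{x\sim p_\theta}[\log p_{\theta'}(x)] = \frac{B}{1+\gamma}$ at $(\theta^\star,\theta^\star)$. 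Next, the identity $B=-C$: this is the standard fact that $\nb_{\theta',\theta'}^2\mathbb E_{x\sim p_\theta}[\log p_{\theta'}(x)]|_{\theta^\star,\theta^\star} = -\nb_{\theta,\theta'}^2\mathbb E_{x\sim p_\theta}[\log p_{\theta'}(x)]|_{\theta^\star,\theta^\star}$, proved by differentiating $\int p_\theta\,\nb_{\theta'}\log p_{\theta'}\,dx$ in $\theta$ and using $\nb_\theta p_\theta = p_\theta\nb_\theta\log p_\theta$ together with $\nb_{\theta'}\mathbb E_{x\sim p_{\theta^\star}}[\log p_{\theta'}(x)]|_{\theta^\star}=0$ (this is the Fisher-information / score-function identity; it also appears in \cite{bertrand2023stability}). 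Then I substitute into \eqref{eq:implicit_differentiation} at $\theta=\theta^\star$: using $\pi\mathcal G(\theta^\star)=\theta^\star$ (Proposition~\ref{prop:fixed_point}), $\nb^2_{\theta',\theta'}\mathcal H(\theta^\star,\theta^\star) = A + \lambda C = A(I+\lambda A^{-1}C)$, and $\nb^2_{\theta,\theta'}\mathcal H_2(\theta^\star,\theta^\star)=B/(1+\gamma)=-C/(1+\gamma)$, giving $\mathcal J(\pi\mathcal G(\theta^\star)) = -(A(I+\lambda A^{-1}C))^{-1}\cdot\lambda\cdot(-C/(1+\gamma)) = (I+\lambda A^{-1}C)^{-1}\cdot\frac{\lambda}{1+\gamma}A^{-1}C$, which is \eqref{eq:jacobian_at_theta_star}.

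\textbf{Part III.} I would write $A^{-1}C = A^{-1}(C - A) + I$ and bound $\|A^{-1}(C-A)\|$. Note $A = \nb^2_{\theta',\theta'}\mathbb E_{x\sim p_{\mathrm{data}}}[\log p_{\theta'}(x)]|_{\theta^\star} \preceq -\alpha I_d$ by Assumption~\ref{assumption:2}, so $\|A^{-1}\|\le 1/\alpha$. For $\|C-A\|$: both are Hessians in $\theta'$ of $\mathbb E[\log p_{\theta'}(x)]$, one with $x\sim p_{\theta^\star}$ and one with $x\sim p_{\mathrm{data}}$; by Assumption~\ref{assumption:1}, $x\mapsto\nb^2_{\theta'}\log p_{\theta'}(x)$ is $L$-Lipschitz near $\theta^\star$, so by Kantorovich--Rubinstein duality $\|C-A\| = \|\mathbb E_{x\sim p_{\theta^\star}}[\nb^2_{\theta'}\log p_{\theta'}] - \mathbb E_{x\sim p_{\mathrm{data}}}[\nb^2_{\theta'}\log p_{\theta'}]\| \le L\, d_W(p_{\theta^\star},p_{\mathrm{data}}) = L\varepsilon$ (exchanging expectation and derivative via dominated convergence, exactly as in the proof of Proposition~\ref{prop:implicit_function_theorem_application}). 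Hence $\|A^{-1}C\| \le \|A^{-1}(C-A)\| + \|I\| \le L\varepsilon/\alpha + 1$, which is \eqref{eq:AinvCbound}.

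\textbf{Main obstacle.} The routine parts are Parts I and III — these are essentially the implicit-differentiation and Kantorovich--Rubinstein arguments already rehearsed in Proposition~\ref{prop:implicit_function_theorem_application}. The step requiring the most care is the identity $B=-C$ in Part II: one must correctly interpret $B=\nb^2_{\theta,\theta'}\mathbb E_{x\sim p_\theta}[\log p_{\theta'}(x)]|_{\theta^\star,\theta^\star}$ (the $\theta'$ inside the $\log$ is held fixed while differentiating the sampling distribution in $\theta$, then we differentiate in $\theta'$), justify passing derivatives inside the integral, and invoke that the score has mean zero at the optimum $\theta^\star$ — i.e. that $\theta^\star$ is a genuine critical point of $\theta'\mapsto\mathbb E_{x\sim p_{\theta^\star}}[\log p_{\theta'}(x)]$, which holds since by \eqref{eq:theta_star} it maximizes $\theta'\mapsto\mathbb E_{x\sim p_{\mathrm{data}}}[\log p_{\theta'}(x)]$ and, more to the point, $\mathbb E_{x\sim p_\psi}[\nb_{\theta'}\log p_{\theta'}(x)]|_{\theta'=\psi}=0$ for any $\psi$. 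I would also double-check the precise placement of the factor $\lambda$ versus $\lambda/(1+\gamma)$ when specializing \eqref{eq:implicit_differentiation}, since only the cross term $\mathcal H_2$ carries the $1/(1+\gamma)$ while the $\theta'\theta'$-Hessian of $\mathcal H$ carries a plain $\lambda$ in front of $C$.
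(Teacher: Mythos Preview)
Your proposal is correct and follows essentially the same route as the paper: implicit differentiation of $\nabla_{\theta'}\mathcal H(\theta,\pi\mathcal G(\theta))=0$ for Part~I, the decomposition $A^{-1}C=I+A^{-1}(C-A)$ with $\|A^{-1}\|\le 1/\alpha$ and $\|C-A\|\le L\varepsilon$ via Kantorovich--Rubinstein for Part~III, and the same substitution $\nabla^2_{\theta',\theta'}\mathcal H(\theta^\star,\theta^\star)=A+\lambda C$ in Part~II. The only cosmetic difference is your derivation of $B=-C$: you invoke the score-mean-zero identity $\mathbb E_{x\sim p_\psi}[\nabla_{\theta'}\log p_{\theta'}(x)]|_{\theta'=\psi}=0$ and differentiate it along the diagonal, whereas the paper writes out the product rule on $\nabla_\theta[p_\theta\nabla_\theta\log p_\theta]$ and uses $\int\nabla_\theta^2 p_\theta\,dx=0$; these are two phrasings of the same Bartlett/Fisher identity.
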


\begin{proof}
\textbf{We first prove I.}
We apply Proposition \ref{prop:implicit_function_theorem_application}.
Part A of that proposition gives us a function $g:U\to\mathbb R^d$ such that $\nabla_{\theta'}\mathcal H(\theta,\theta')_{\theta,g(\theta)}=0$.
But part $B$ of that proposition says that there exists a unique local maximizer inside $U$, and this local maximizer is $\pi_\gamma\mathcal G_\lambda^\infty$.
This implies that $\nabla_{\theta'}\mathcal H(\theta,\theta')_{\theta,\pi_\gamma\mathcal G_\lambda^\infty(\theta)}=0$.
Next, we implicitly differentiate this equation with respect to $\theta$.
Recall that when you have an equation of the form $f(x,y)=0$, and implicitly differentiate it in the form $f(x,g(x))=0$ with respect to $x$, you obtain $\frac{\partial f}{\partial x}+\frac{\partial f}{\partial y}\frac{\partial g}{\partial x}=0$, and solving for $\frac{\partial g}{\partial x}$ yields $\frac{\partial g}{\partial x} = -\left(\frac{\partial f}{\partial y}\right)^{-1}\frac{\partial f}{\partial x}$.
We apply this formula with 
\[(x,f,g)=
(\theta,
\theta\mapsto\nabla_{\theta'}\mathcal H(\theta,\theta')_{\theta,\pi_\gamma\mathcal G_\lambda^\infty(\theta)}, 
\theta\mapsto\pi_\gamma\mathcal G_\lambda^\infty(\theta))\] 
and obtain \eqref{eq:implicit_differentiation}, as desired.

\textbf{Now we prove II.} 
We can compute that
\begin{align}
   \nb^{2}_{\theta', \theta} \mathcal{H}_2(\theta, \theta') 
   &=\nb_{\theta'}\nb_{\theta} \mathbb{E}_{x\sim\pi_\gamma p_\theta}[\log p_{\theta'}(x)]\\
   &=\nb_{\theta'}\nb_{\theta} \int_{x\in\mathbb R^d}\log p_{\theta'}(x)\left(\frac{p_\theta(x)+\gamma p_{\theta^\star}(x)}{1+\gamma}\right)dx\\
   &=\frac{1}{1+\gamma}\nb_{\theta'}\nb_{\theta} \int_{x\in\mathbb R^d}\log p_{\theta'}(x)p_\theta(x)dx\\
   & = \frac{1}{1+\gamma} \nb^2_{\theta', \theta} \mathbb{E}_{x\sim p_\theta}[\log p_{\theta'}(x)]\\
   &=\frac{1}{1+\gamma}B
\end{align} 
where the third equality holds because the integral containing $p_{\theta^\star}$ is constant with respect to $\theta$. 
Next, we can compute that
\begin{align}
   B
    &=\int_X \nabla_{\theta'}\log p_{\theta'}(x)\nabla_\theta p_\theta(x)dx\Big|_{\theta^*,\theta^*}\\
    &=\int_X [\nabla_{\theta}\log p_{\theta}(x)][\nabla_\theta p_\theta(x)]dx\Big|_{\theta^*,\theta^*}\\
    &=\int_X \nabla_\theta[p_{\theta}(x)\nabla_{\theta}\log p_{\theta}(x)]dx\Big|_{\theta^*,\theta^*}
    -\int_X p_{\theta}(x)(\nabla_{\theta}\nabla_{\theta}\log p_{\theta}(x))dx\Big|_{\theta^*,\theta^*}\\
    &=\int_X \nabla_\theta\left[p_\theta(x)\frac{\nabla_\theta p_\theta(x)}{p_\theta(x)}\right]dx\Big|_{\theta^*,\theta^*}
    -
    \nabla_{\theta',\theta'}^2 \mathbb{E}_{x\sim p_{\theta}}[\log p_{\theta'}(x)]\Big|_{\theta^*,\theta^*}\\
    &=-C,
\end{align}
where the third equality follows from the product rule for gradients,
\begin{align}
    \nabla_\theta[p_\theta(x)\nabla_{\theta}\log p_{\theta}(x)]
    &=p_\theta(x)(\nabla_{\theta}\nabla_{\theta}\log p_{\theta}(x))
    +
    [\nabla_\theta p_\theta(x)][\nabla_{\theta}\log p_{\theta}(x)].
\end{align}

Finally, we will prove the formula \eqref{eq:jacobian_at_theta_star} by manipulating \eqref{eq:implicit_differentiation}.
We begin with the rightmost factor in \eqref{eq:implicit_differentiation}.
If we apply these equalities that we just obtained, then we get
\begin{align*}
    \mathcal{J}(\pi \mathcal{G}(\theta^{\star})) &= -\left(\nb^2_{\theta', \theta'} \mathcal{H}(\theta^{\star}, \theta^{\star})\right)^{-1} \cdot \lambda \nb^{2}_{\theta', \theta} \mathcal{H}_2(\theta^{\star}, \theta^{\star})\\
    & = - (A+ \lambda C)^{-1}\cdot \frac{\lambda}{1+\gamma} B\\
    & = - (I+ \lambda A^{-1}C)^{-1} \cdot \frac{\lambda}{1+\gamma} A^{-1} B\\
    & = (I+ \lambda A^{-1}C)^{-1} \cdot \frac{\lambda}{1+\gamma} A^{-1} C
\end{align*}
where the first equality follows from \eqref{eq:jacobian_at_theta_star} along with the fixed point Proposition \ref{prop:fixed_point}, and we are using that $A$ is invertible by Assumption 2 \ref{assumption:2}, which implies all eigenvalues of $A$ are nonzero; in the fourth step we used that $B=-C$.
This proves part II.

\textbf{Now we prove III.} We can bound the operator norm $\|A^{-1}C\|$ as follows:
\begin{align}\label{mult}
\|A^{-1}C\| 
= \| I + A^{-1}(C- A)\|
\leq \|I\| + \|A^{-1}\|\cdot\|C-A\|
\leq 1 + \alpha^{-1} \|C-A\|,
\end{align}
where the first estimate comes from subadditivity and submultiplicativity, and the second comes from the fact that, since $A$ is symmetric, $\|A\| = \max_{\lambda\in \sigma(A)} |\lambda|$, where $\sigma(A)$ is the spectrum of $A$.
Formally, we know by Assumption \ref{assumption:2} that $A$ has eigenvalues $e_1 < e_2 < \dots < e_n \le -\alpha < 0$ and so $|e_n| > \alpha$. 
Therefore, $A^{-1}$ has eigenvalues $1/e_n < 1/e_{n-1} < \dots < 1/e_1 < 0$ and thus $1/|e_n| > 1/|e_{n-1}| > \dots > 1/|e_1|$, which gives us the bound $\|A^{-1}\| = 1/|e_n| < 1/\alpha$ on the matrix norm.
Next, we can estimate that
\begin{align*}
||C-A||
    &= \|\nb^{2}_{\theta', \theta'} \mathbb E_{x\sim p_{\theta^{\star}}}[\log p_{\theta'}(x)]|_{\theta^\star} - \nb^{2}_{\theta', \theta'} \mathbb E_{x\sim p_{\text{data}}}[\log p_{\theta'}(x)]|_{\theta^\star}\| \\
    & = \|\mathbb E_{x\sim p_{\theta^{\star}}}[\nb^2_{\theta',\theta'}\log p_{\theta^{\star}}(x)] - \mathbb{E}_{x\sim p_{\text{data}}}[\nb^2_{\theta',\theta'}\log p_{\theta^{\star}}(x)]\|\\
    &\leq L d_{W}(p_{\theta^{\star}}, p_{\text{data}}) \\
    &= L\varepsilon,
\end{align*}
where in the second equality we exchange the derivative and the expectation in equation 4 using the Dominated Convergence Theorem, since Assumption 1 \ref{assumption:1} says that $x\mapsto\nabla_\theta\nabla_\theta\log p_\theta(x)$ is $L$-Lipschitz;
and in the last estimate, we used Kantorovich-Rubenstein duality. 
This, combined with the estimate \eqref{mult}, yields the bound in \eqref{eq:AinvCbound}.
\end{proof}

We are finally ready to prove our theorem that guarantees convergence to the optimal parameters in the infinite sampling case under certain assumptions, one being the that the initial model parameters $\theta_0$ are sufficiently close to $\theta^{\star}$:

\begin{theorem}[Convergence of Iterative Fine-tuning, Infinite Sampling Case]\label{thm:1}
Suppose we have an iterative fine-tuning procedure defined by the rule  $\theta_{t+1}^\infty=\pi_\gamma\mathcal G_\lambda^\infty(\theta_t^\infty)$.
Let $\theta^\star$ be the parameter vector for the optimal generative model, as in \eqref{eq:theta_star}.
We assume that $\theta^\star$ follows Assumptions \ref{assumption:1} and \ref{assumption:2} from \cite{bertrand2023stability}. 
Suppose also that $\lambda\left(1+\frac{\varepsilon L}{\alpha}\right)<\frac{1+\gamma}{2+\gamma}$.
Then, the Jacobian of $\pi_\gamma G_\lambda^\infty$ satisfies the following bound:
\begin{align}\label{eq:jacobian_bound_physics}
    \|\nabla_\theta \pi_\gamma \mathcal{G}_\lambda^\infty(\theta^\star)\|_2
    &\le \frac{1}{1+\gamma}\cdot \frac{\lambda(\alpha + \varepsilon L)}{\alpha-\lambda (\alpha+\varepsilon L)} < 1.
\end{align}

Consequently, there exists a $\delta>0$ such if $\theta_0\in\Theta$ satisfies $\|\theta_0-\theta^\star\|\le\delta$, then starting training at $\theta_0$ and having $\theta_{t+1}=\pi_\gamma\mathcal G_\lambda^\infty(\theta_t)$, we have that $\lim_{t\to\infty}\theta_t\to\theta^\star$.
Furthermore, if we define 
\begin{align}\label{eq:rho_of_lambda_definition}
    \rho(\lambda) =  \frac{\lambda(\alpha+\varepsilon L)}{\alpha-\lambda (\alpha+\varepsilon L)},
\end{align}
then we obtain the asymptotic stability estimate\footnote{\cite{bertrand2023stability} could have presented their results in this stronger form, without the big $O$ notation, with very little extra work.}
\begin{align}\label{eq:iterative_bound}
    \|\theta_t-\theta^\star\| \le \left(\frac{\rho(\lambda)}{1+\gamma} \right)^t \|\theta_0-\theta^\star\|.
\end{align}
\end{theorem}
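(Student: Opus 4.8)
The plan is to reduce everything to a fixed-point/contraction argument around $\theta^\star$, using the Jacobian bound as the engine. First I would establish \eqref{eq:jacobian_bound_physics}. Lemma~\ref{spectral}(II) gives the exact formula $\mathcal{J}(\pi\mathcal{G}(\theta^\star)) = (I + \lambda A^{-1} C)^{-1} \cdot \frac{\lambda}{1+\gamma} A^{-1} C$, so I would take operator norms. Writing $M := A^{-1}C$, part (III) gives $\|M\| \le 1 + \varepsilon L/\alpha =: \kappa$. For the factor $(I+\lambda M)^{-1}$ I need a lower bound on the smallest singular value of $I + \lambda M$; the cleanest route is to note $A$ is symmetric negative definite and $C$ is symmetric negative definite (both are Hessians of expected log-likelihoods at the optimum — indeed $C = \nabla^2_{\theta',\theta'}\mathbb{E}_{x\sim p_{\theta^\star}}[\log p_{\theta'}(x)]|_{\theta^\star}$ and $A$ similarly with $p_{\mathrm{data}}$), hence $A^{-1}C$ has positive eigenvalues and $I + \lambda A^{-1}C$ has eigenvalues $\ge 1$, so $\|(I+\lambda M)^{-1}\| \le 1$. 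Combining, $\|\mathcal{J}(\pi\mathcal{G}(\theta^\star))\| \le \frac{\lambda}{1+\gamma}\kappa = \frac{\lambda(\alpha + \varepsilon L)}{(1+\gamma)\alpha}$. I then need to match this to the stated bound $\frac{1}{1+\gamma}\cdot\frac{\lambda(\alpha+\varepsilon L)}{\alpha - \lambda(\alpha+\varepsilon L)}$; since $\alpha - \lambda(\alpha+\varepsilon L) \le \alpha$ (as $\lambda>0$), the stated RHS is the larger quantity, so the inequality still holds — this is exactly why the hypothesis is written as $\lambda(1+\varepsilon L/\alpha)<\frac{1+\gamma}{2+\gamma}$, which forces $\alpha - \lambda(\alpha+\varepsilon L)>0$ and, after rearranging, $\frac{\rho(\lambda)}{1+\gamma}<1$.

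Next I would verify $\rho(\lambda)/(1+\gamma)<1$ directly from the hypothesis. Starting from $\lambda(1+\frac{\varepsilon L}{\alpha}) < \frac{1+\gamma}{2+\gamma}$, multiply through by $\alpha$: $\lambda(\alpha+\varepsilon L) < \frac{(1+\gamma)\alpha}{2+\gamma}$. Then $\rho(\lambda) = \frac{\lambda(\alpha+\varepsilon L)}{\alpha - \lambda(\alpha+\varepsilon L)}$; substituting the bound $\lambda(\alpha+\varepsilon L) < \frac{(1+\gamma)\alpha}{2+\gamma}$ into numerator and denominator (the map $s \mapsto s/(\alpha-s)$ is increasing on $[0,\alpha)$) gives $\rho(\lambda) < \frac{(1+\gamma)\alpha/(2+\gamma)}{\alpha - (1+\gamma)\alpha/(2+\gamma)} = \frac{(1+\gamma)/(2+\gamma)}{1/(2+\gamma)} = 1+\gamma$, hence $\rho(\lambda)/(1+\gamma) < 1$ as claimed. (I should also note $\alpha - \lambda(\alpha+\varepsilon L) > 0$ is guaranteed since $\frac{(1+\gamma)\alpha}{2+\gamma} < \alpha$, so $\rho(\lambda)$ is well-defined and positive.)

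Finally, the convergence and the geometric rate \eqref{eq:iterative_bound}. By Proposition~\ref{prop:fixed_point}, $\theta^\star$ is a fixed point of $\pi_\gamma\mathcal{G}_\lambda^\infty$, and by Proposition~\ref{prop:implicit_function_theorem_application} this map is $C^1$ in a neighborhood $U$ of $\theta^\star$ (it equals the implicit function $g$). Since the spectral norm of its Jacobian at $\theta^\star$ is $r := \|\mathcal{J}(\pi\mathcal{G}(\theta^\star))\| \le \rho(\lambda)/(1+\gamma) < 1$, continuity of $\theta \mapsto \|\mathcal{J}(\pi\mathcal{G}(\theta))\|$ lets me shrink $U$ to a ball $B(\theta^\star,\delta)$ on which $\|\mathcal{J}(\pi\mathcal{G}(\theta))\| \le \rho(\lambda)/(1+\gamma)$ uniformly. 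Then by the mean value inequality along the segment from $\theta_t$ to $\theta^\star$ (which stays in the convex ball), $\|\theta_{t+1}-\theta^\star\| = \|\pi\mathcal{G}(\theta_t) - \pi\mathcal{G}(\theta^\star)\| \le \frac{\rho(\lambda)}{1+\gamma}\|\theta_t-\theta^\star\|$. This shows $B(\theta^\star,\delta)$ is forward-invariant, so the iteration stays in it once $\|\theta_0-\theta^\star\|\le\delta$, and induction gives \eqref{eq:iterative_bound} and hence $\theta_t\to\theta^\star$. The main obstacle is the first step: getting a clean operator-norm bound on $(I+\lambda A^{-1}C)^{-1}\cdot A^{-1}C$ when $A^{-1}C$ need not be symmetric — the fix is to exploit that $A$ and $C$ are both symmetric negative definite, so $A^{-1}C$ is similar (via $A^{-1/2}$, using $A\prec 0$ means $-A\succ 0$ has a square root) to the symmetric matrix $(-A)^{-1/2}(-C)(-A)^{-1/2}\succ 0$, which makes all eigenvalue/spectral-norm manipulations legitimate; I would spell this similarity transformation out carefully rather than wave at it.
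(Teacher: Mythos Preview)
Your treatment of the last two steps---verifying $\rho(\lambda)/(1+\gamma)<1$ from the hypothesis, and the local contraction/induction argument via continuity of the Jacobian---is correct and essentially identical to the paper's. The problem is in your bound on $\|(I+\lambda A^{-1}C)^{-1}\|$, and it is a genuine gap rather than just a different route.

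You argue that $A^{-1}C$ has positive eigenvalues, hence $I+\lambda A^{-1}C$ has eigenvalues at least $1$, hence $\|(I+\lambda A^{-1}C)^{-1}\|\le 1$. The last implication fails: $M:=A^{-1}C$ is in general neither symmetric nor normal, and for non-normal matrices the spectral norm is not controlled by the eigenvalues. A diagonalizable $2\times 2$ matrix with eigenvalues $1$ and $2$ can have $\|(I+M)^{-1}\|$ arbitrarily large. Your proposed fix---observing that $A^{-1}C$ is similar via $(-A)^{1/2}$ to the symmetric matrix $(-A)^{-1/2}(-C)(-A)^{-1/2}$---does not repair this, because similarity preserves eigenvalues but not singular values; it does \emph{not} make spectral-norm manipulations legitimate in the Euclidean norm. (It would give a contraction in the $(-A)$-weighted inner-product norm, which is enough for convergence $\theta_t\to\theta^\star$ but not for the specific $2$-norm rate \eqref{eq:iterative_bound}.) A smaller issue: your claim $C\prec 0$ is justified only as $C\preceq 0$ by the ``Hessian at a maximizer'' argument; strict definiteness follows instead from $\|C-A\|\le \varepsilon L$ together with $A\preceq -\alpha I$ and the implicit hypothesis $\varepsilon L<\alpha$ needed for Proposition~\ref{prop:implicit_function_theorem_application}.

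The paper sidesteps all of this with a Neumann series: since $\lambda\|A^{-1}C\|\le \lambda(1+\varepsilon L/\alpha)<\tfrac{1+\gamma}{2+\gamma}<1$ by Lemma~\ref{spectral}(III) and the hypothesis, one has $\|(I+\lambda A^{-1}C)^{-1}\|\le \sum_{k\ge 0}(\lambda\|A^{-1}C\|)^k=\tfrac{1}{1-\lambda\|A^{-1}C\|}$, and then submultiplicativity plus monotonicity of $x\mapsto x/(1-\lambda x)$ on $[0,1/\lambda)$ gives exactly \eqref{eq:jacobian_bound_physics}. This bound is weaker than the $\tfrac{\lambda(\alpha+\varepsilon L)}{(1+\gamma)\alpha}$ you were aiming for, but it is the one stated and it is correctly established.
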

\begin{proof}\label{proof:thm1}
\textbf{We first prove the Jacobian bound \eqref{eq:jacobian_bound_physics}.}
By hypothesis, we know $\lambda (1 + \frac{L\varepsilon}{\alpha}) < 1$, so by Lemma \ref{spectral}(III), we have $\lambda ||A^{-1}C|| < 1$. 
Thus, we can write
\begin{align*}
    (I+ \lambda A^{-1}C)^{-1} & = \sum_{k=0}^{\infty} (-\lambda A^{-1}C)^{k}
\end{align*}
and so
\begin{align*}
    \|(I+\lambda A^{-1} C)^{-1}\| 
    &\leq \sum_{k=0}^{\infty} \lambda^k ||A^{-1}C||^{k}
    = \frac{1}{1 - \lambda ||A^{-1}C||}.
\end{align*}
Applying Lemma \ref{spectral}(2), we get
\begin{align*}
    ||\mathcal{J}(G(\theta^{\star}))|| &\leq ||(I+\lambda A^{-1}C)^{-1}|| \cdot \frac{\lambda}{1+\gamma} ||A^{-1}C||
    \leq \frac{\lambda}{1+\gamma}\cdot \frac{||A^{-1}C||}{1 - \lambda ||A^{-1}C||}.
\end{align*}
Now, it is straightforward to see the RHS above is at most the bound in \eqref{eq:jacobian_bound_physics} if and only if $\alpha\|A^{-1}C\|< \alpha+\varepsilon L$.
But this bound holds because of Lemma \ref{spectral}(III).
This proves the Jacobian bound \eqref{eq:jacobian_bound_physics}, but does not prove that the bound is less than $1$.
For this, we must show that
\begin{align}
\frac{1}{1+\gamma}\cdot \frac{\lambda(\alpha + \varepsilon L)}{\alpha-\lambda (\alpha+\varepsilon L)} < 1.
\end{align}
By clearing denominators and grouping like terms, we can see that this is equivalent to
\begin{align}
\lambda\left(1+\frac{\varepsilon L}{\alpha}\right)<\frac{1+\gamma}{2+\gamma},
\end{align}
which is precisely guaranteed by our hypothesis.

\textbf{We now apply the the Jacobian bound \eqref{eq:jacobian_bound_physics} to prove the asymptotic stability estimate \eqref{eq:iterative_bound}.}
Assume $\lambda$ is sufficiently small so that $\rho(\lambda)/(1+\gamma)<1$.
Then for every $\rho'\in(\rho(\lambda)/(1+\gamma),1)$, there exists $\delta>0$ sufficiently small so that every $\theta_0\in\Theta$ which satisfies $\|\theta_0-\theta^\star\| < \delta$ has the property that $\|\nabla_\theta \pi_\gamma G_\lambda^\infty(\theta_0)\|_2< \rho'$.
Because the map $\pi_\gamma G_\lambda^\infty$ has Jacobian matrix norm less than $1$ in the $\delta$-ball around $\theta^\star$, it is a contraction mapping in this neighborhood.
Concretely, this means that
\begin{equation}\label{eq:contraction_mapping}
    \|\pi_\gamma \mathcal G_\lambda^\infty(\theta)-\pi_\gamma \mathcal G_\lambda^\infty(\theta')\|
    \le 
    \frac{\rho(\lambda)}{1+\gamma}\|\theta-\theta'\|,
\end{equation}
for every $\theta,\theta'$ in the $\delta$-ball around $\theta^\star$.
In particular, for $(\theta,\theta')=(\theta_t,\theta^\star)$ we obtain
\begin{align*}
    \|\theta_{t+1}-\theta^\star\|=\|\pi_\gamma\theta_t-\theta^\star\|
    &=\|\pi_\gamma \mathcal G_\lambda^\infty(\theta_{t})-\pi_\gamma \mathcal G_\lambda^\infty(\theta^\star)\|
    \le\frac{\rho(\lambda)}{1+\gamma}\cdot\|\theta_{t}-\theta^\star\|.
\end{align*}
By induction, the above estimate implies that if $\theta_0$ is in a $\delta$-ball around $\theta^\star$, then so is every successive $\theta_t$.
Therefore the desired estimate \eqref{eq:iterative_bound} now follows by induction on $t$.
\end{proof}

\begin{remark}\label{remark:after_thm_1}
    Taking $\gamma=0$ recovers exactly the result in \cite{bertrand2023stability}. 
    Importantly, the correction function $\pi_{\gamma}$ provides leverage in determining how large the augmentation percentage $\lambda$ can be: choosing a larger correction strength $\gamma$ allows us to choose a larger augmentation percentage $\lambda$ while still retaining theoretical guarantees for convergence. Additionally, for the same choice of augmentation percentage $\lambda$, a larger correction strength $\gamma$ provides a guarantee for an improved rate of convergence.
    See Conjecture~\ref{conjecture:body}.
\end{remark}

\subsection{Stability of Iterative Fine-tuning with Correction for Finite Sampling}

Finally, we prove a stability result for iterative fine-tuning with correction in the presence of statistical error. To do this, we require an assumption that essentially provides probabilistic guarantee that the chosen generative model learns the underlying distribution increasingly better if it has access to more samples: 
\begin{assumption}\label{assumption:3}
    There exist $a,b,\varepsilon_{\text{OPT}}\ge 0$ and a neighborhood $U$ of $\theta^\star$ such that, for any $\delta\in(0,1)$, with probability $1-\delta$ over the samplings, we have
    \begin{equation}
        (\forall \theta \in U)(\forall n\in\mathbb N)\qquad
        \|\pi_\gamma \mathcal G_\lambda ^n(\theta)-\pi_\gamma \mathcal G_\lambda ^\infty(\theta)\|
        \le \varepsilon_{\text{OPT}}+\frac{a}{\sqrt n}\sqrt{\log\frac{b}{\delta}}.
    \end{equation}
\end{assumption}

See Appendix~\ref{appendix:assumption_3} for a discussion about this assumption; we investigated whether to assume a similar bound to the one they assumed in \cite{bertrand2023stability}, or prove our bound from theirs.
In fact, we prove in Appendix~\ref{appendix:assumption_3} that you can in fact deduce something nearly as strong as Assumption \ref{assumption:3} from Assumption 3 in their paper, so we made Assumption \ref{assumption:3} for the sake of a cleaner, more parallel exposition.

\begin{theorem}[Iterative Fine-Tuning Stability Under Correction]\label{thm:2}
Suppose we have an iterative fine-tuning procedure defined by the rule  $\theta_{t+1}^n=\pi_\gamma\mathcal G_\lambda^n(\theta_t^n)$.
In words, this means that the augmentation percentage is $\lambda\in(0,\infty)$ and the correction strength is $\gamma\in[0,\infty)$.
Under the same assumptions of Theorem \ref{thm:1} and Assumption \ref{assumption:3}, there exist $0<\rho < 1$ and $\delta_1>0$ such that if $\|\theta_0^n - \theta^\star\| \le \delta_1$, then for any $\delta_2\in(0,1)$, with probability $1-\delta_2$, we have
\begin{align}\label{eq:thm_2_estimate_appendix}
    \|\theta_{t}^n-\theta^\star\|
    \le 
    \left(\varepsilon_{\text{OPT}}+\frac{a}{\sqrt n}\sqrt{\log\frac{bt}{\delta}}\right)\sum_{i=0}^t
    \left(\frac{\rho(\lambda)}{1+\gamma}\right)^i
    + \left(\frac{\rho(\lambda)}{1+\gamma}\right)^t
    \|\theta_0^n-\theta^\star\|.
\end{align}
\end{theorem}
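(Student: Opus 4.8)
The plan is to bootstrap the finite-sampling statement from the infinite-sampling contraction of Theorem~\ref{thm:1} by treating the statistical error of Assumption~\ref{assumption:3} as an additive perturbation to a contraction iteration, and then to unroll the resulting recursion while paying a union-bound cost over the $t$ generations. Abbreviate $c := \rho(\lambda)/(1+\gamma)$; the hypothesis $\lambda(1+\varepsilon L/\alpha) < (1+\gamma)/(2+\gamma)$ is exactly what Theorem~\ref{thm:1} needs to guarantee $c < 1$, together with the contraction estimate \eqref{eq:contraction_mapping} on a $\delta$-ball around $\theta^\star$. The engine of the proof is the one-step split, valid whenever $\theta_t^n$ lies in that ball,
\begin{align*}
\|\theta_{t+1}^n - \theta^\star\|
&= \|\pi_\gamma\mathcal G_\lambda^n(\theta_t^n) - \theta^\star\|\\
&\le \|\pi_\gamma\mathcal G_\lambda^n(\theta_t^n) - \pi_\gamma\mathcal G_\lambda^\infty(\theta_t^n)\| + \|\pi_\gamma\mathcal G_\lambda^\infty(\theta_t^n) - \pi_\gamma\mathcal G_\lambda^\infty(\theta^\star)\|,
\end{align*}
where I used the fixed-point identity $\pi_\gamma\mathcal G_\lambda^\infty(\theta^\star) = \theta^\star$ from Proposition~\ref{prop:fixed_point}. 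The second term is at most $c\,\|\theta_t^n - \theta^\star\|$ by \eqref{eq:contraction_mapping}; the first term is the statistical error controlled by Assumption~\ref{assumption:3}.

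Next I would set up the probabilistic bookkeeping. Apply Assumption~\ref{assumption:3} with confidence level $\delta/t$ instead of $\delta$, so that for each fixed generation the uniform-in-$\theta$ bound $\|\pi_\gamma\mathcal G_\lambda^n(\theta) - \pi_\gamma\mathcal G_\lambda^\infty(\theta)\| \le \varepsilon_{\text{OPT}} + (a/\sqrt n)\sqrt{\log(bt/\delta)} =: \tau$ fails with probability at most $\delta/t$; a union bound over the $t$ generations shows that, with probability at least $1-\delta$, this bound holds simultaneously at every generation $1,\dots,t$. Condition on that event for the rest of the argument. Combined with the one-step split, this yields the scalar recursion $a_{k+1} \le \tau + c\,a_k$ with $a_k := \|\theta_k^n - \theta^\star\|$, as long as the iterates remain in the neighborhood.

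The final step is a single induction that does two jobs at once: it confirms the iterates never leave the neighborhood (so the contraction estimate and Assumption~\ref{assumption:3} are legitimately invoked at each step) and it unrolls the recursion. Choose $\delta_1 > 0$ small enough that the closed $\delta_1$-ball sits inside both $U$ and the contraction ball of Theorem~\ref{thm:1}, and small enough that $\tau + c\,\delta_1 \le \delta_1$ (equivalently $\tau \le (1-c)\delta_1$, which forces $\varepsilon_{\text{OPT}}$ small and $n$ large). Then if $\|\theta_0^n - \theta^\star\| \le \delta_1$, induction on $k$ gives $a_{k+1} \le \tau + c\,a_k \le \tau + c\,\delta_1 \le \delta_1$, so every iterate stays in the ball; unrolling the recursion from $k=0$ gives
\begin{align*}
\|\theta_t^n - \theta^\star\| \le \tau\sum_{i=0}^{t-1} c^i + c^t\|\theta_0^n - \theta^\star\| \le \tau\sum_{i=0}^{t} c^i + c^t\|\theta_0^n - \theta^\star\|,
\end{align*}
which is exactly the claimed bound \eqref{eq:thm_2_estimate_appendix} once one recalls $\tau = \varepsilon_{\text{OPT}} + (a/\sqrt n)\sqrt{\log(bt/\delta)}$ and $c = \rho(\lambda)/(1+\gamma)$.

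The main obstacle is this invariance argument: one must pin down $\delta_1$ (and implicitly constrain $\varepsilon_{\text{OPT}}$ and $n$) so that the accumulated statistical drift, whose steady-state size is $\tau/(1-c)$, never pushes an iterate outside the region where both $c<1$ and Assumption~\ref{assumption:3} hold — without this confinement the recursion cannot be iterated and the geometric sum is meaningless. Everything else (the triangle-inequality split, the geometric unrolling, and the $\delta/t$ union bound) is routine bookkeeping once the iterates are trapped in the neighborhood; it closely parallels the finite-sampling argument of \cite{bertrand2023stability}, with the only change being the appearance of the extra factor $1/(1+\gamma)$ inside $c$.
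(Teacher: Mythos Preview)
Your proposal is correct and follows essentially the same route as the paper: the triangle-inequality split via the fixed-point identity, the contraction bound \eqref{eq:contraction_mapping} on the infinite-sampling term, Assumption~\ref{assumption:3} on the statistical term, and then unrolling the recursion with a $\delta/t$ substitution. The paper phrases the probability bookkeeping slightly differently (it iterates with $\delta$ to get probability $(1-\delta)^t$, then substitutes $\delta\mapsto\delta/t$ and invokes Bernoulli's inequality), but this is equivalent to your union bound; if anything, your explicit treatment of the confinement condition $\tau \le (1-c)\delta_1$ needed to keep the iterates inside the contraction ball is more careful than the paper's proof, which glosses over this point.
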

\begin{proof}
By the triangle inequality, we can estimate that
\begin{align}\label{eq:triangle_inequality}
    \|\theta_{t}^n-\theta^\star\|
    &\le \|\theta_t^n-\pi_\gamma\mathcal G_\lambda^\infty(\theta_{t-1}^n))\|
    +\|\pi_\gamma\mathcal G_\lambda^\infty(\theta_{t-1}^n)-\theta^\star\|\nonumber\\
    &=\|\pi_\gamma\mathcal G_\lambda^n(\theta_{t-1}^n)
    -\pi_\gamma\mathcal G_\lambda^\infty(\theta_{t-1}^n)\|
    +
    \|\pi_\gamma\mathcal G_\lambda^\infty(\theta_{t-1}^n)
    -\pi_\gamma\mathcal G_\lambda^\infty(\theta^\star)\|,
\end{align}
where we applied the fixed point Proposition \ref{prop:fixed_point}.
By Assumption \ref{assumption:3}, the left summand in \eqref{eq:triangle_inequality} is at most $\varepsilon_{\text{OPT}}+\frac{a}{\sqrt n}\sqrt{\log\frac{b}{\delta}}$, with probability $1-\delta$.
Next, recall that in \eqref{eq:contraction_mapping} in the proof of Theorem \ref{thm:1}, we proved that that $\pi_\gamma G_\lambda^\infty$ is a contraction mapping of factor $\rho(\lambda)/(1+\gamma)$ sufficiently close to $U$; this implies that the right summand in \eqref{eq:triangle_inequality} is at most 
$\frac{\rho(\lambda)}{1+\gamma}\|\theta_{t-1}^n-\theta^\star\|$.
Together, these yield the recurrence estimate
\begin{align}
    \mathbb P
    \left(
    \|\theta_{t}^n-\theta^\star\|
    \le \varepsilon_{\text{OPT}}+\frac{a}{\sqrt n}\sqrt{\log\frac{b}{\delta}}
    + \frac{\rho(\lambda)}{1+\gamma}\|\theta_{t-1}^n-\theta^\star\|
    \right) \ge 1-\delta.
\end{align}
Iterating this recurrence for successive time steps yields
\begin{align}\label{eq:recurrence_probability}
    \mathbb P
    \left(
    \|\theta_{t}^n-\theta^\star\|
    \le 
    \left(\varepsilon_{\text{OPT}}+\frac{a}{\sqrt n}\sqrt{\log\frac{b}{\delta}}\right)\sum_{i=0}^t
    \left(\frac{\rho(\lambda)}{1+\gamma}\right)^i
    + \left(\frac{\rho(\lambda)}{1+\gamma}\right)^t
    \|\theta_0^n-\theta^\star\|
    \right) \ge (1-\delta)^t.
\end{align}
Note that \eqref{eq:recurrence_probability} holds for any $\delta\in(0,1)$.
In particular, we can apply \eqref{eq:recurrence_probability} with $\delta:=\delta/t$.
In this case, the Bernoulli inequality lets us estimate that $(1-\delta/t)^t\ge 1-\delta$.
This completes the proof, with $\delta_2=\delta$.
\end{proof}

\begin{remark}
    Theorem \ref{thm:2} recovers the result from \cite{bertrand2023stability} in the case where the correction strength is $\gamma=0$.
    But for a fixed augmentation percentage $\lambda$, for any correction strength $\gamma>0$, this gives stronger stability guarantees than in \cite{bertrand2023stability}.
\end{remark}

\begin{remark}
In a previous version of this manuscript, we claimed that there was an error in the statement of the corresponding theorem in \citep{bertrand2023stability}.
In this version, we retract that claim; we have corresponded with those authors, and they updated their manuscript with additional details to justify their statement.
\end{remark}

\subsection{Discussion: The Main Limitation}\label{app_subsec:limitations}

Our empirical results are for generative modeling tasks where we have access to some ``self-correction'' operation that is easy to compute, as well as automatic; see Sections~\ref{sec:experiments_mnist} and ~\ref{sec:experiments} for more details about these correction functions.
Therefore, the main limitation of our work is that one can only hope to use this self-correction procedure to stabilize training in scenarios where there is some ``self-correction'' function.
For our MNIST experiments, we built a self-correction function from scratch using clustering statistics.
And for our human motion experiments, we used an off-the-shelf human motion imitation model that other researchers built.

\section{Discussion about Assumption~\ref{assumption:body_of_paper}}\label{appendix:assumption_3}

In this section, we show how with a mild boundedness assumption on our generative model parameter update function, we can deduce our Assumption \ref{assumption:3} (which is the same as Assumption \ref{assumption:body_of_paper}, part 3) from the following assumption used in \cite{bertrand2023stability}.
\begin{assumption}\label{assumption:3-old}
    There exist $a,b,\varepsilon_{\text{OPT}}\ge 0$ and a neighborhood $U$ of $\theta^\star$ such that, for any $\delta\in(0,1)$, with probability $1-\delta$ over the samplings, we have
    \begin{equation}
        (\forall \theta \in U)(\forall n\in\mathbb N)\qquad
        \|\mathcal G_\lambda ^n(\theta)-\mathcal G_\lambda ^\infty(\theta)\|
        \le \varepsilon_{\text{OPT}}+\frac{a}{\sqrt n}\sqrt{\log\frac{b}{\delta}}.
    \end{equation}
\end{assumption}
Now, if we make the additional assumption that our generative model parameter update function is locally bounded near $\theta^{\star}$ then we obtain the following.
\begin{proposition}
    Suppose Assumption \ref{assumption:3-old} holds. Suppose also that there exists $B < \infty$ such that for all $n > 0$ and $\theta$ sufficiently close to $\theta^\star$,
\begin{align*}
    \|\mathcal{G}_{\lambda}^{n}(\theta) - \mathcal{G}_{\lambda}^n(\theta^{\star})\| < B \|\theta - \theta^{\star}\|. 
\end{align*}
 Then there exist $a,b,c, \varepsilon_{\text{OPT}}\ge 0$ and a neighborhood $U$ of $\theta^\star$ such that, for any $\delta\in(0,1)$, with probability $1-\delta$ over the samplings, we have
    \begin{equation}
        (\forall \theta \in U)(\forall n\in\mathbb N)\qquad
        \|\pi_{\gamma}\mathcal G_\lambda ^n(\theta)-\pi_{\gamma}\mathcal G_\lambda ^\infty(\theta)\|
        \le c \cdot d_U + \varepsilon_{\text{OPT}}+\frac{a}{\sqrt n}\sqrt{\log\frac{b}{\delta}},
    \end{equation}
    where $d_U = \sup_{\theta \in U} \|\theta - \theta^\star\|.$
\end{proposition}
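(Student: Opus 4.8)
The plan is to bound $\|\pi_\gamma\mathcal G_\lambda^n(\theta)-\pi_\gamma\mathcal G_\lambda^\infty(\theta)\|$ by inserting $\theta^\star$ and reducing everything to three ingredients that are already available: the statistical-error bound for $\mathcal G_\lambda^n$ (Assumption~\ref{assumption:3-old}), the contraction property of $\pi_\gamma\mathcal G_\lambda^\infty$ near $\theta^\star$ recorded in \eqref{eq:contraction_mapping} inside the proof of Theorem~\ref{thm:1}, and the hypothesized uniform-in-$n$ Lipschitz bound on $\mathcal G_\lambda^n$ at $\theta^\star$. First I would shrink $U$ if necessary so that it lies inside the neighborhood on which \eqref{eq:contraction_mapping} holds and on which Proposition~\ref{prop:implicit_function_theorem_application} gives existence and uniqueness of the relevant local maximizers. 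For $\theta\in U$, write $\|\pi_\gamma\mathcal G_\lambda^n(\theta)-\pi_\gamma\mathcal G_\lambda^\infty(\theta)\|\le\|\pi_\gamma\mathcal G_\lambda^n(\theta)-\theta^\star\|+\|\theta^\star-\pi_\gamma\mathcal G_\lambda^\infty(\theta)\|$. Since $\pi_\gamma p_{\theta^\star}=p_{\theta^\star}$ (Lemma~\ref{lem:physics_projection_fun_facts}) and $\theta^\star$ is a fixed point (Proposition~\ref{prop:fixed_point}), the second term equals $\|\pi_\gamma\mathcal G_\lambda^\infty(\theta)-\pi_\gamma\mathcal G_\lambda^\infty(\theta^\star)\|\le\frac{\rho(\lambda)}{1+\gamma}\|\theta-\theta^\star\|\le\frac{\rho(\lambda)}{1+\gamma}d_U$.

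For the first term I would insert $\pi_\gamma\mathcal G_\lambda^n(\theta^\star)$, which again equals $\mathcal G_\lambda^n(\theta^\star)$ because $\pi_\gamma p_{\theta^\star}=p_{\theta^\star}$, so it splits as $\|\pi_\gamma\mathcal G_\lambda^n(\theta)-\mathcal G_\lambda^n(\theta^\star)\|+\|\mathcal G_\lambda^n(\theta^\star)-\theta^\star\|$. The summand $\|\mathcal G_\lambda^n(\theta^\star)-\theta^\star\|=\|\mathcal G_\lambda^n(\theta^\star)-\mathcal G_\lambda^\infty(\theta^\star)\|$ is at most $\varepsilon_{\text{OPT}}+\frac{a}{\sqrt n}\sqrt{\log(b/\delta)}$ directly from Assumption~\ref{assumption:3-old}. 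For $\|\pi_\gamma\mathcal G_\lambda^n(\theta)-\mathcal G_\lambda^n(\theta^\star)\|$ I would insert $\mathcal G_\lambda^n(\theta)$ and invoke the boundedness hypothesis $\|\mathcal G_\lambda^n(\theta)-\mathcal G_\lambda^n(\theta^\star)\|<B\|\theta-\theta^\star\|\le B\,d_U$ — this is exactly the role of that assumption — leaving only $\Delta:=\|\pi_\gamma\mathcal G_\lambda^n(\theta)-\mathcal G_\lambda^n(\theta)\|$ to control.

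The quantity $\Delta$ compares two local maximizers of the same objective except that one uses the synthetic empirical measure $\widehat{\pi_\gamma p_\theta}$ and the other $\widehat{p_\theta}$. Both objectives are strongly concave near $\theta^\star$ with a common positive modulus $\mu$ — precisely the estimate already carried out in the proof of Proposition~\ref{prop:implicit_function_theorem_application}(B) — so the standard perturbation bound for strongly concave maximizers gives $\Delta\le\mu^{-1}\lambda\sup_{\theta'}\big\|\mathbb E_{\widehat{\pi_\gamma p_\theta}}[\nb_{\theta'}\log p_{\theta'}]-\mathbb E_{\widehat{p_\theta}}[\nb_{\theta'}\log p_{\theta'}]\big\|$. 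To bound the gradient gap I would couple the two synthetic samples: draw the $\lfloor\lambda n\rfloor$ points of $\widehat{\pi_\gamma p_\theta}$ by, for each index, flipping a $\mathrm{Bernoulli}(\gamma/(1+\gamma))$ coin and, on heads, replacing a $p_\theta$-draw by a $p_{\theta^\star}$-draw taken from the optimal Wasserstein coupling of $p_\theta$ and $p_{\theta^\star}$, while reusing the $p_\theta$-draws for $\widehat{p_\theta}$. Then the two empirical gradients agree off the flipped indices, and on each flipped index the difference is controlled by the regularity of $x\mapsto\nb_{\theta'}\log p_{\theta'}(x)$ near $\theta^\star$ (cf.\ Assumption~\ref{assumption:1}) times $\|x^\star-x^\theta\|$, whose expectation is $d_W(p_\theta,p_{\theta^\star})=O(d_U)$ for $\theta\in U$; a bounded-differences concentration argument over the $\lfloor\lambda n\rfloor$ draws then bounds the fluctuation of the average about its mean by $O(n^{-1/2}\sqrt{\log(1/\delta)})$ with probability $1-\delta$. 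Hence $\Delta\le c'd_U+\varepsilon'_{\text{OPT}}+\frac{a'}{\sqrt n}\sqrt{\log(b'/\delta)}$.

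Collecting the pieces, taking a union bound over the two high-probability events (using $\delta/2$ for each and folding the factor $2$ into $b$), and renaming constants, the $d_U$-proportional contributions $\big(\frac{\rho(\lambda)}{1+\gamma}+B+c'\big)d_U$ become $c\cdot d_U$ and the statistical terms combine into a single $\varepsilon_{\text{OPT}}+\frac{a}{\sqrt n}\sqrt{\log(b/\delta)}$, which is the claimed bound. The main obstacle is the estimate for $\Delta$: one cannot route it through the population operators $\pi_\gamma\mathcal G_\lambda^\infty$ and $\mathcal G_\lambda^\infty$, since that reintroduces $\|\pi_\gamma\mathcal G_\lambda^n-\mathcal G_\lambda^n\|$ and is circular, so a genuine coupling (or sample-level) argument is needed to make the synthetic-measure gradient gap simultaneously $O(d_U)$ at the population level and $O(n^{-1/2})$ in its fluctuation, uniformly in $n$. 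If one is willing to assume slightly more, one may instead simply posit the analogue of Assumption~\ref{assumption:3-old} for the two-synthetic-term update operator and skip the coupling entirely; the argument above shows that this is not logically necessary.
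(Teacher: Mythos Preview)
Your initial decomposition is close in spirit to the paper's: both routes insert $\theta^\star$, use $\pi_\gamma\mathcal G_\lambda^n(\theta^\star)=\mathcal G_\lambda^n(\theta^\star)$ and Proposition~\ref{prop:fixed_point}, invoke the contraction~\eqref{eq:contraction_mapping}, apply Assumption~\ref{assumption:3-old}, and reduce everything to the single residual term $\Delta:=\|\pi_\gamma\mathcal G_\lambda^n(\theta)-\mathcal G_\lambda^n(\theta)\|$. The real difference is how $\Delta$ is handled. The paper does it in one line: insert $\mathcal G_\lambda^n(\theta^\star)=\pi_\gamma\mathcal G_\lambda^n(\theta^\star)$ again to get
\[
\Delta\le\|\mathcal G_\lambda^n(\theta)-\mathcal G_\lambda^n(\theta^\star)\|+\|\pi_\gamma\mathcal G_\lambda^n(\theta^\star)-\pi_\gamma\mathcal G_\lambda^n(\theta)\|\le 2B\,d_U,
\]
simply reusing the boundedness hypothesis (tacitly reading it as applying to $\pi_\gamma\mathcal G_\lambda^n$ as well as $\mathcal G_\lambda^n$). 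There is no circularity here and no statistical argument is needed; the final constant is $c=2B+\rho(\lambda)\tfrac{2+\gamma}{1+\gamma}$.

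Your coupling-and-strong-concavity route for $\Delta$ is therefore working much harder than the paper, and it has two soft spots. First, Proposition~\ref{prop:implicit_function_theorem_application}(B) establishes strong concavity for the \emph{population} objective $\mathcal H$, not for the empirical $\hat{\mathcal H}$; you would need an extra uniform-concentration step before the perturbation-of-maximizer inequality applies at finite $n$. Second, Assumption~\ref{assumption:1} concerns $x\mapsto\nabla_\theta^2\log p_\theta(x)$, not the first-order map $x\mapsto\nabla_{\theta'}\log p_{\theta'}(x)$, so your per-index control of the gradient difference via the Wasserstein coupling is not justified by the stated assumptions. Neither issue is fatal in principle, but both require additional work that the paper's one-line triangle-inequality argument avoids entirely.
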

\begin{proof}
   By the triangle inequality, we have
\begin{align}\label{eq:tri}
\|\pi_{\gamma}\mathcal{G}_{\lambda}^{n}(\theta) - \pi_{\gamma}\mathcal{G}_{\lambda}^{\infty}(\theta) \| &\leq \|\pi_{\gamma}\mathcal{G}_{\lambda}^{n}(\theta) - \mathcal{G}_{\lambda}^{n}(\theta)\| + \|\mathcal{G}_{\lambda}^{n}(\theta) - \mathcal{G}_{\lambda}^{\infty}(\theta)\| + \|\mathcal{G}_{\lambda}^{\infty}(\theta) -\pi_{\gamma}\mathcal{G}_{\lambda}^{\infty}(\theta) \|.
\end{align}
We bound each term in the RHS: firstly, note the middle term is bounded by Assumption \ref{assumption:3-old}.The first term is bounded as follows:
\begin{align*}
    \|\mathcal{G}_{\lambda}^{n}(\theta) -\pi_{\gamma}\mathcal{G}_{\lambda}^{n}(\theta) \| &\leq \|\mathcal{G}_{\lambda}^{n}(\theta) -\mathcal{G}_{\lambda}^{n}(\theta^{\star}) \| + \|\pi_{\gamma}\mathcal{G}_{\lambda}^{n}(\theta^{\star}) -\pi_{\gamma}\mathcal{G}_{\lambda}^{n}(\theta) \|\\
    & \leq B\|\theta - \theta^{\star}\| + B\|\theta - \theta^{\star}\| \\
    & \leq 2B d_U,
\end{align*}
where in the first step we used that $\mathcal{G}_{\lambda}^{\infty}(\theta^{\star}) = \pi_{\gamma}\mathcal{G}_{\lambda}^{\infty}(\theta^{\star})$.
Similarly, the last term is bounded as follows:
\begin{align*}
    \|\mathcal{G}_{\lambda}^{\infty}(\theta) -\pi_{\gamma}\mathcal{G}_{\lambda}^{\infty}(\theta) \| &\leq \|\mathcal{G}_{\lambda}^{\infty}(\theta) -\mathcal{G}_{\lambda}^{\infty}(\theta^{\star}) \| + \|\pi_{\gamma}\mathcal{G}_{\lambda}^{\infty}(\theta^{\star}) -\pi_{\gamma}\mathcal{G}_{\lambda}^{\infty}(\theta) \|\\
    & \leq \rho(\lambda)\|\theta - \theta^{\star}\| + \frac{\rho(\lambda)}{1+\gamma}\|\theta - \theta^{\star}\| \\
    &= \rho(\lambda)\frac{2+\gamma}{1+\gamma}\|\theta-\theta^\star\| \\
    & \leq \rho(\lambda)\frac{2+\gamma}{1+\gamma}d_U,
\end{align*}
where in the second step we applied \eqref{eq:contraction_mapping}. Using these bounds in \eqref{eq:tri} and taking $c = 2B + \rho(\lambda)\frac{2+\gamma}{1+\gamma}$ completes the proof.
\end{proof}

Note that the constant $c \cdot d_U < c$ (for $U$ sufficiently small) can really be viewed as a part of the optimization constant $\varepsilon_{\text{OPT}}$ since it is controlled by the choice of generative model class.

\section{Point-wise correction corresponds to distribution-wise correction}\label{appendix:self_correction_pointwise}

In this section we provide a sufficient condition under which you can associate a distribution-wise correction mapping (like the one we consider in the paper, $\pi_\gamma$) to a point-wise correction mapping (which is the one you are more likely to find in the wild).

\begin{definition}
    Let $X = \{x_1,\dots,x_n\} \subset \mathbb R^m$ and define the \textit{empirical cumulative distribution} function $\Phi_X$ by 
\begin{align*}
    \Phi_X(v) 
    := \Phi_X(v;\{x_1,\dots,x_n\}) 
    := \frac{1}{n}\sum_{i=1}^{n} \chi_{v}(x_i),
\end{align*}
where for $v\in \mathbb R^m$, $\chi_{v}: \mathbb R^m \to \{0,1\}$ is the indicator function for the set $\prod_{i=1}^{n}(-\infty, v_i]$.
For a continuous distribution, the cumulative distribution function is defined in the usual way.
\end{definition}

\begin{definition}\label{def:pointwise_projection_function_existence_condition}
Suppose that we have a model $p_\theta$ and an arbitrary function $\Pi:\mathbb R^m\to \mathbb R^m$.
Then we say that $\Pi$ is a \emph{valid point-wise correction function} for $p_\theta$ if there exists a $\gamma\in[0,\infty]$ such that
\begin{equation}\label{eq:pointwise_projection_function_existence_condition}
    \lim_{n\to\infty}\left(
    \mathbb E_{X^n \sim p_\theta^{n}} 
    \sup_{v\in \mathbb{R}^m}
    \|\Phi_{\Pi(X^n)}(v) - \Phi_{\pi_{\gamma}p_\theta}(v) \|\right) \to 0,
\end{equation}
almost surely, where the expectation is over all samplings $X^n = \{x_1,\dots,x_n\}$ of size $n$ from $p_\theta$.
\end{definition}

\begin{intuition}
    This is saying that the CDFs for $\pi_\gamma p_\theta$ and $\Pi(X\sim p_\theta^n)$ are equal in expectation, for large enough $n$.
    This is one way of saying that $\pi_\gamma p_\theta$ and $\Pi(X\sim p_\theta^n)$, for large enough $n$, are nearly identical probability distributions.
\end{intuition}

\begin{definition}
    If the limit in \eqref{eq:pointwise_projection_function_existence_condition} exists, then we define the \emph{distribution-wise projection function} corresponding to $\Pi$ to be 
\begin{equation}
    \pi_\gamma p_\theta=\frac{1}{1+\gamma}p_\theta+\frac{\gamma}{1+\gamma}p_{\theta^\star},
\end{equation}
and we define the \emph{projection strength of the point-wise correction function} $\Pi$ to be $\gamma$.
Recall that $\pi_\gamma p_\theta=\frac{1}{1+\gamma}p_\theta+\frac{\gamma}{1+\gamma}p_{\theta^\star}$. 
So intuitively, \eqref{eq:pointwise_projection_function_existence_condition} implies that the projection function $\Pi$ maps samples from $p_\theta$ to a different space such that they look like they come from a combination of the original distribution $p_\theta$ and $p_{\theta^\star}$, at least at the level of CDFs.
\end{definition}

\begin{remark}
Such a $\gamma$, if it exists, is unique.
Furthermore, if $p_\theta=p_{\theta^\star}$, then $\gamma=\infty$. 
\end{remark}

The limit condition in Definition~\ref{def:pointwise_projection_function_existence_condition} is abstract, and can be hard to swallow.
We present an example of a simple point-wise correction for the Gaussian toy example that we consider in Section~\ref{sec:experiments_gaussian}, whose corresponding distribution-wise correction is exactly one would expect it to be--the weighted average of the corresponding Gaussians.
Recall that we demonstrated empirically in Figure~\ref{fig:toy_example_w2} that Theorem~\ref{thm:2_shortened} holds for that example.
The projection function is depicted in Figure~\ref{fig:projection_gaussian}.

\begin{example}
 Let $G_1(x)$ be the pdf of  $\mathcal{N}(0, \sigma_1^2 I_d)$ (initial distribution, corresponds to $\theta$) and $G_2(x)$ the pdf of $\mathcal{N}(0, \sigma_2^2 I_d)$ (target distribution, corresponds to $\theta^\star$). 
Given $x_1,\dots,x_n \sim G_1$, we define $\Pi^\gamma$ as follows: 
Fix any $\gamma\in\mathbb{R}_{\ge 0}$, and let $y_1,\dots,y_n \sim (\hat G_1^{(n)}(x)+\gamma G_2(x))/(1+\gamma)$, where $\hat G_1^{(n)}$ is the PDF of the empirical distribution defined by $\{x_1,\dots,x_n\}$; in practice we implement $\hat G_1^{(n)}$ as a histogram.
Then choose a random $\sigma \in S_n$ ($S_n$ = group of permutations on $n$ symbols). 
Finally, we define $\Pi^\gamma(x_i):=y_{\sigma(i)}$ for $1\le i\le n$.

Next, we define the projection set $\Pi X^{(n)} := \{\Pi^{\gamma}(x_i)\}_{1\leq i \leq n}$, and define the PDF $\pi_\gamma \hat G_1^{(n)}(x):=\frac{1}{1+\gamma}\hat G_1^{(n)}(x)+\frac{\gamma}{1+\gamma}G_2(x)$, and let $\Phi_{\pi_{\gamma} \hat G_1^{(n)}}$ represent the cumulative distribution function of the Gaussian $\pi_{\gamma} \hat G_1^{(n)}$. Then, since $\Pi^{\gamma}(x_i) \sim \pi_{\gamma }\hat G_1^{(n)}$, we have by the uniform law of large numbers that
\begin{equation}
    \lim_{n\to\infty}
    \left(\mathbb E_{\{x_i\sim G_1\}_{i=1}^n}
    \mathrm{sup}_{v\in \mathbb R^m}
    \left\|
    \Phi_{\Pi X^{(n)}}(v)
    -\Phi_{\pi_{\gamma} G_1}(v)
    \right\|\right)\to 0
\end{equation}
 almost surely. Therefore $\Pi^\gamma$ is a valid point-wise correction function, and its corresponding distribution-wise projection function is $\pi_\gamma$.
\end{example}

\begin{remark}
    In the example we considered in Section~\ref{sec:experiments_gaussian}, we could have included a total distance traveled minimization condition, but here for this proof we don't even need to use that hypothesis. 
    (In the proof, this would have corresponded to the additional assumption that we've chosen a $\sigma \in S_n$ such that $\sum_{i=1}^{n} \|x_i - y_{\sigma(i)}\|$ is minimized.)
    This implies that different point-wise correction functions can correspond to the same distribution-wise correction function.
\end{remark}

\begin{figure}[ht]
\vskip 0.2in
\begin{center}
\centerline{\includegraphics[width=0.8\columnwidth]{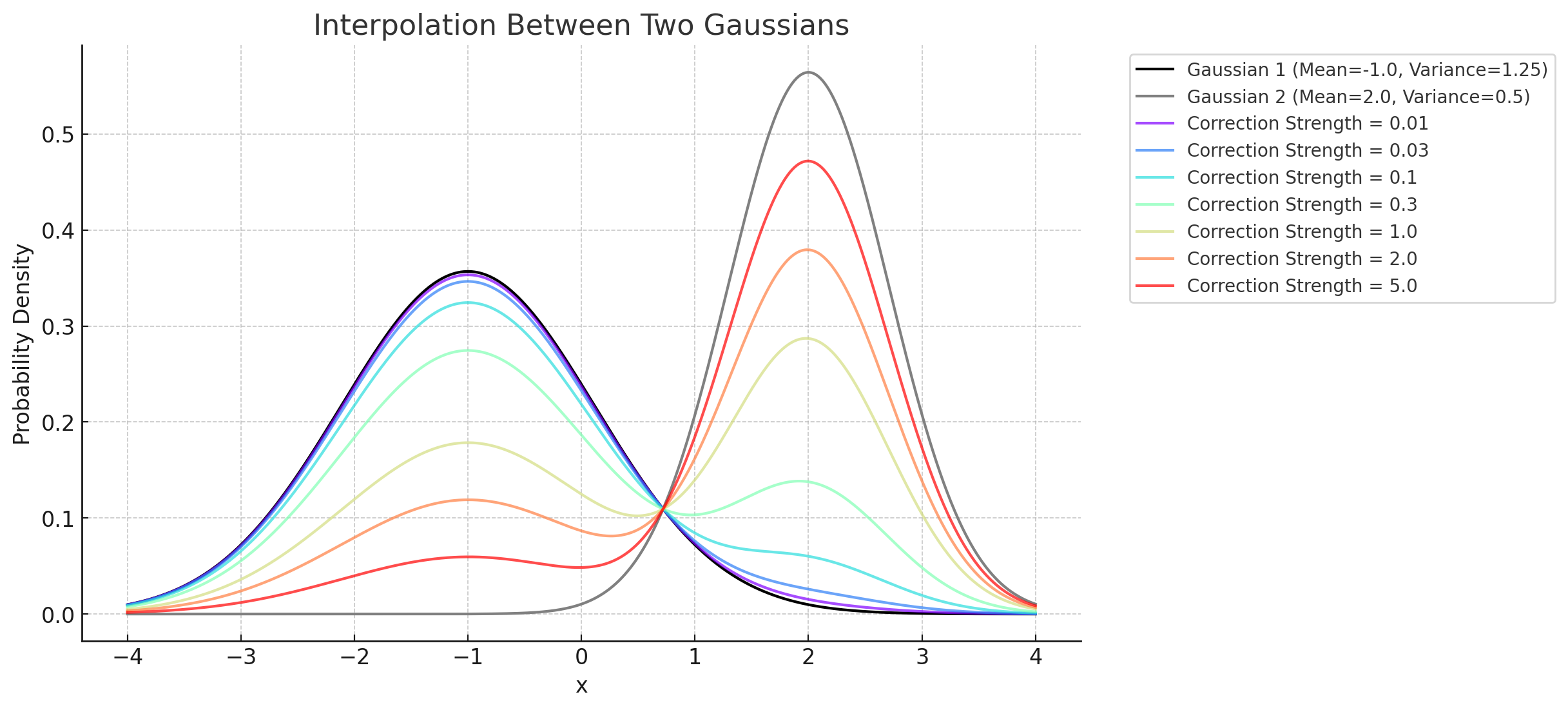}}
\caption{Illustration of the distribution-wise projection function, like in our Gaussian toy example. 
Correcting one Gaussian in the direction of another, like we consider in Section~\ref{sec:experiments_gaussian}, corresponds to finding the ``(weighted) average Gaussian'' that lives between the two.}
\label{fig:projection_gaussian}
\end{center}
\vskip -0.2in

\end{figure}

\section{More MNIST Experiment Details}\label{appendix:MNIST}

We train a Denoising Diffusion Probabilistic Model (DDPM) \cite{ho2020denoising} on the $20\%$ of the MNIST dataset \cite{lecun1998gradient}.
We use classifier-free guidance \cite{ho2021classifierfree} with guidance parameter $0.5$, and 400 diffusion steps.
We used a batch size of 256.
We train generation $0$ for 20 epochs, with a linear decay learning rate schedule starting at $1e-4$ and ending at $(1e-4)/20.$
We train each following generations for a single epoch, with a fixed learning rate of $(1e-4)/20^2$.

To compute our metrics, we first train a LeNet model \cite{lecun1998gradient} on MNIST, and then we sample an equal number of digits from each class using the checkpoint that we're trying to evaluate.
To compute the FID score, we extract embeddings from the last fully connected LeNet layer for the synthesized examples, as well as for the held out test examples, and compute FID score as normal, by computing the Wasserstein distance between the Gaussians.
Note that we use embeddings for LeNet trained on MNIST, rather than the Inception network trained on ImageNet, because MNIST isn't comprised of natural images. 
This is consistent with the convention in \cite{alemohammad2023self}.

For the self-correction operation, we compute the $K$-means clusters, with $K=16$, once at the start of training.
And we ``correct'' a synthesized motion by mapping it to the nearest cluster mean corresponding to its digit.
In Figure~\ref{fig:mnist_clusters} we present the clusters, and we present graphs of our FID scores across augmentation percentages in Figure~\ref{fig:mnist_graphs}.

\begin{figure*}
\vskip 0.2in
\begin{center}
\centerline{\includegraphics[width=0.5\columnwidth]{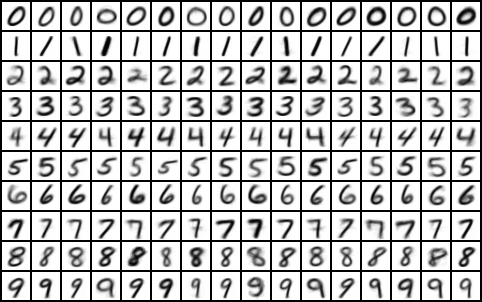}}
\caption{For every digit, we perform $K$-means clustering with $K$=16. We show here the cluster centroids, which intuitively are anchor images within the manifold of all possible images.}
\label{fig:mnist_clusters}
\end{center}
\vskip -0.2in
\end{figure*}

\begin{figure*}
\vskip 0.2in
\begin{center}
\begin{tabular}{cc}
\includegraphics[width=0.4\columnwidth]{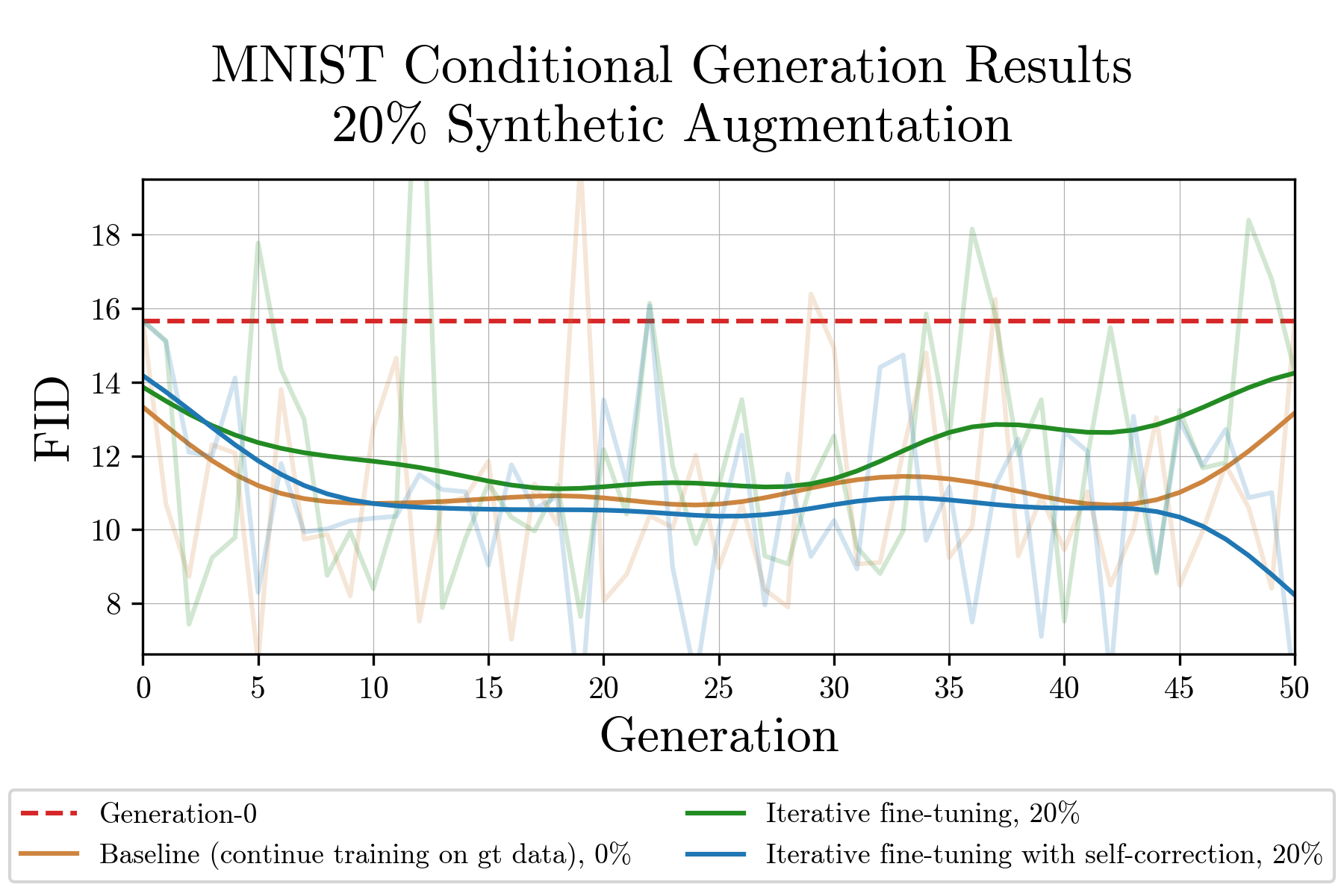} &
\includegraphics[width=0.4\columnwidth]{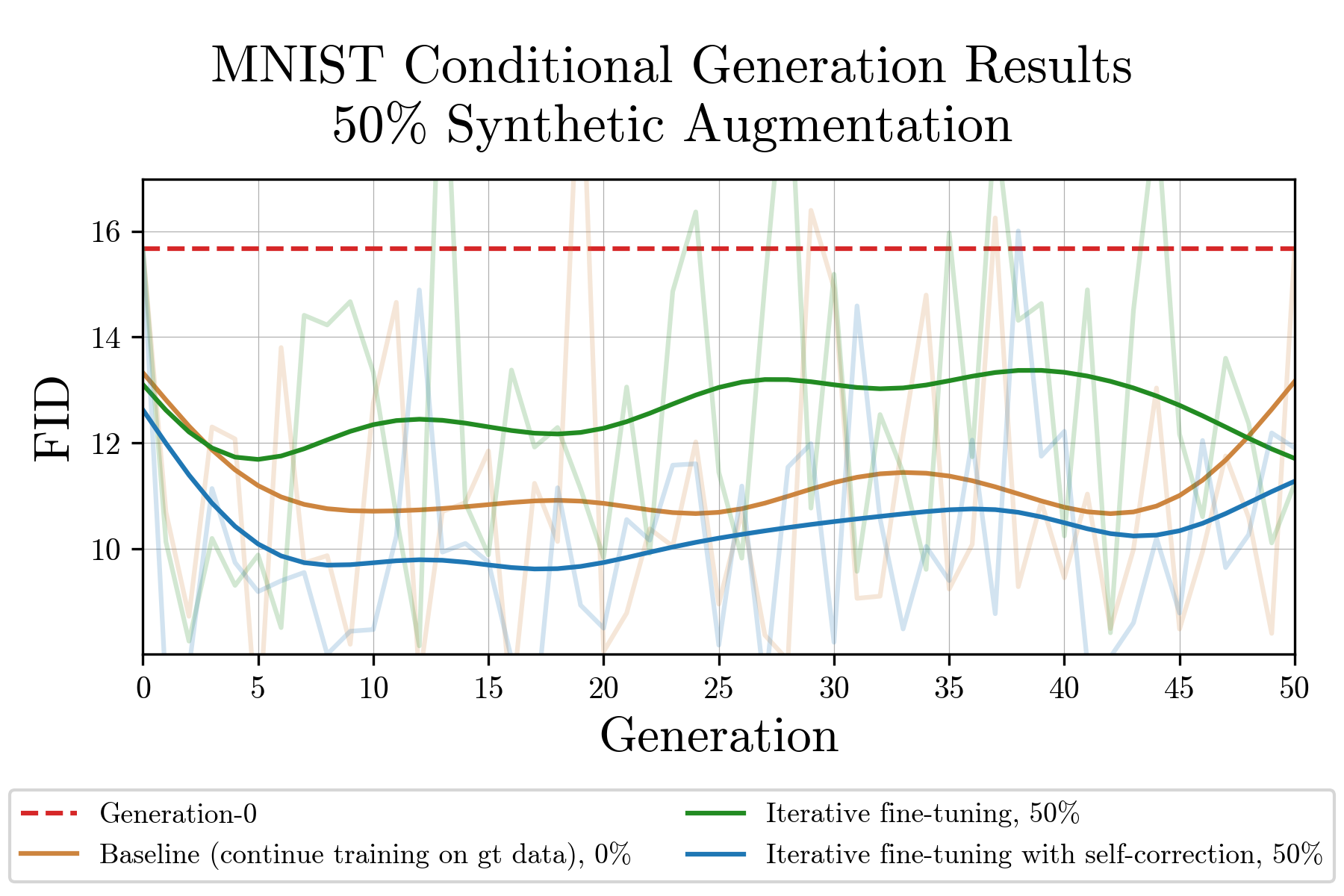} \\
\includegraphics[width=0.4\columnwidth]{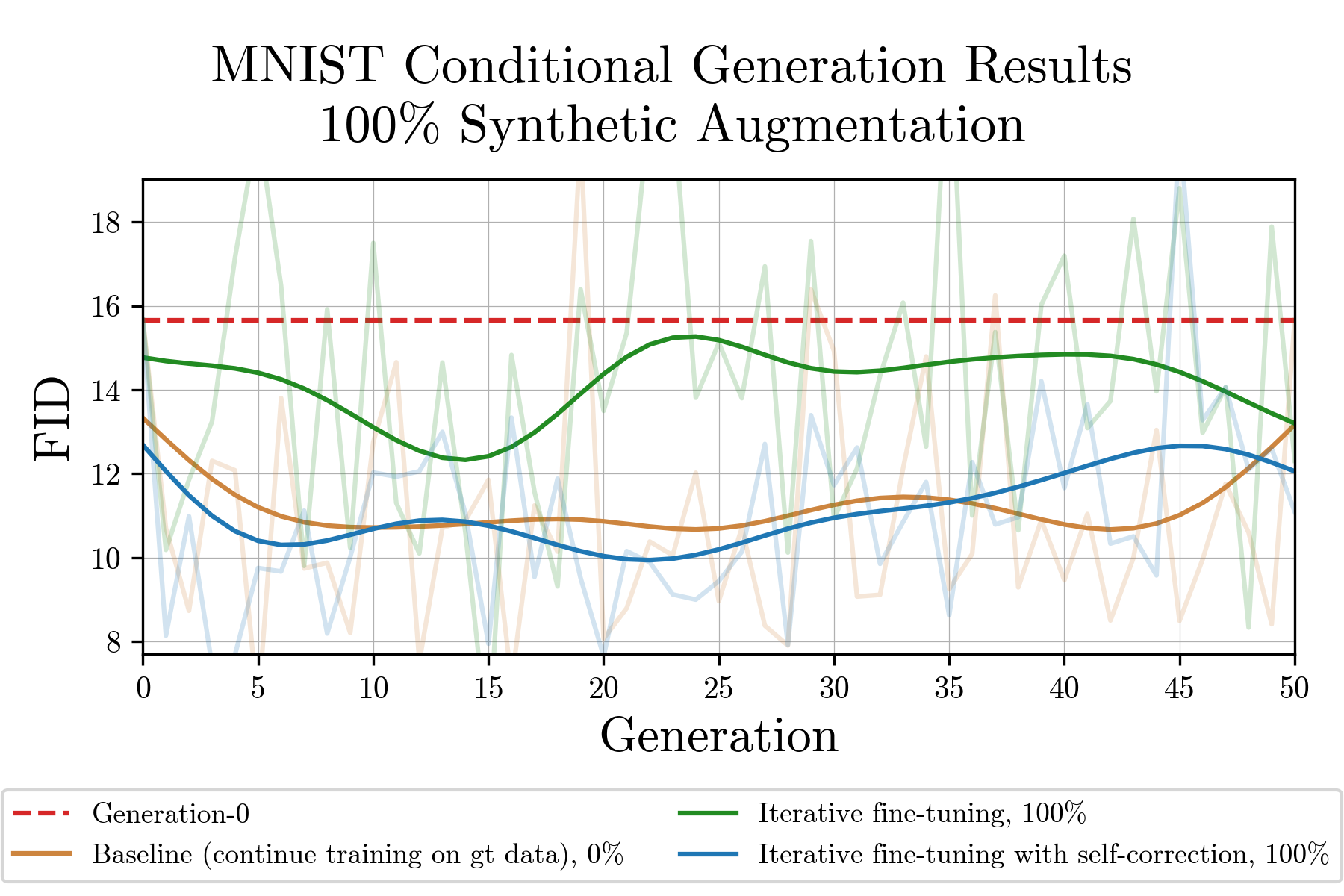} & 
\includegraphics[width=0.4\columnwidth]{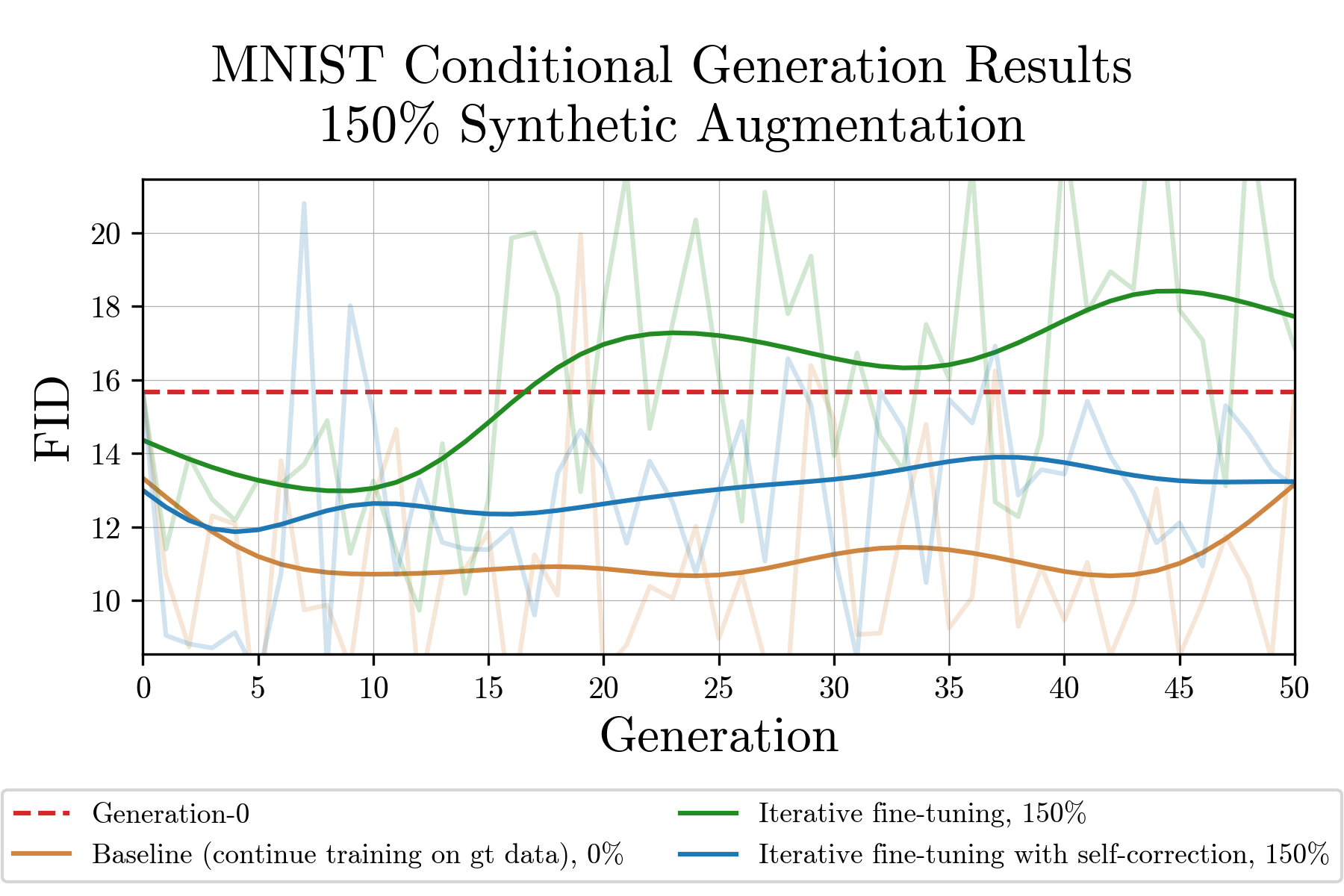} \\
\end{tabular}
\caption{Results from MNIST experiments with iterative fine-tuning with and without self-correction.
These graphs show the FID score on the last checkpoint for every generation; this is the checkpoint used for sampling in the self-consuming loop experiments, and it is also the checkpoint where training is resumed with this new partially synthesized dataset. 
These results demonstrate that \textbf{\color{blue_better}iterative fine-tuning with self-correction} generally outperforms \textbf{\color{green_better}iterative fine-tuning.}}
\label{fig:mnist_graphs}
\end{center}
\vskip -0.2in
\end{figure*}

\section{Additional Human Motion Generation Qualitative Results}\label{appendix:human_motion_qualitative}

In Figures~\ref{fig:qualitative_human_motion_picture_1}, ~\ref{fig:qualitative_human_motion_picture_4}, and ~\ref{fig:qualitative_human_motion_picture_2}, we present additional qualitative observations and analysis of our synthesized motions. 
We present more evidence that iterative fine-tuning with self-correction yields physically plausible motions comparable to the baseline, whereas iterative fine-tuning without self-correction yields motions that are incorrect for various reasons.
See the captions of the referenced figures for analysis of some characteristic failure modes of the iterative fine-tuning loop \textit{without} self-correction.

A technical note: for all figures, we render the motions from the same environment and camera position. 
We consolidate each render into the same image \textit{without} resizing it. 
This means that if a figure appears larger relative to the others, the human moved closer to the camera. 
Some motions will have transparent frames of past positions; the more transparent the image, the farther back in the past it was in the motion sequence. 
Finally, in each figure, the text prompt for all generated motions was the same --the prompt being the one associated with the ground truth motion in the HumanML3D \cite{Guo_2022_CVPR} training data, which we also visualize.
Note that the coloring in the humanoid figures corresponds to the coloring in the graphs.

\begin{figure*}[ht]
\begin{center}
\centerline{\includegraphics[width=0.8\columnwidth]{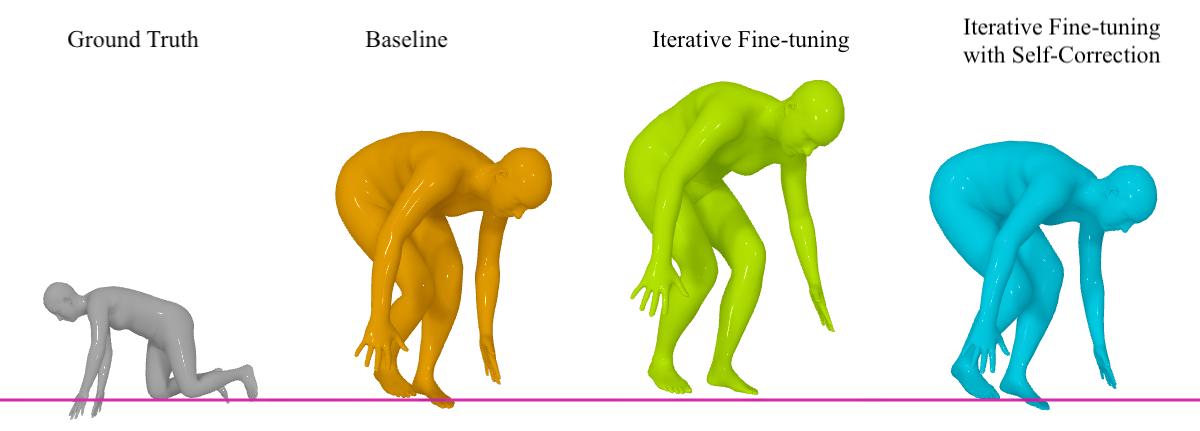}}
\caption{Here we see the negative \emph{floating} phenomenon exacerbated by \textbf{\color{green_better}iterative fine-tuning}, whereas \textbf{\color{blue_better}iterative fine-tuning with self-correction} generates a motion with floor contact integrity comparable to the \textbf{\color{gray}ground truth} and  \textbf{\color{orange}baseline}. 
The floatic metric is formally defined in \cite{yuan2023physdiff} as the distance between the lowest vertex on the human mesh and the floor plane. 
All three sequences were generated using the same prompt: \textit{person got down and is crawling across the floor.} 
Each snapshot was taken at exactly frame 87. The green figure appears larger than the other two only because it is closer to the camera.
The two motions on the right were synthesized after 50 generations training with $25\%$ synthetic augmentation, trained on $n=64$ data points.}
\label{fig:qualitative_human_motion_picture_1}
\end{center}
\end{figure*}

\begin{figure*}[ht]
\begin{center}
\centerline{\includegraphics[width=0.8\columnwidth]{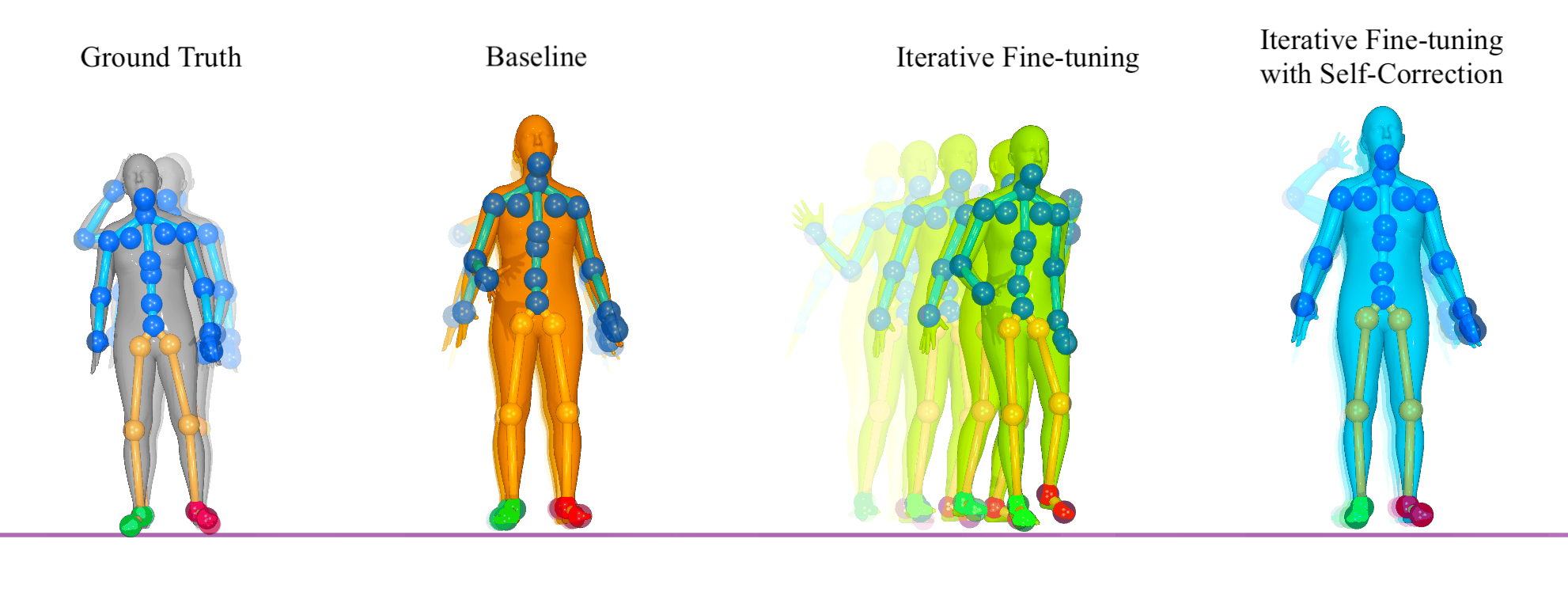}}
\caption{All four of the above motions correspond to the prompt: \textit{a person raises right hand to face looks around and puts hand down back to side.}. 
The model which is trained with \textbf{\color{green_better}iterative fine-tuning} outputs spurious motion that slides the figure to the right. 
And in the video for this example, the human rotates their forearm unnaturally and forcefully.
In contrast, the \textbf{\color{orange}baseline} and \textbf{\color{blue_better}iterative fine-tuning with self-correction} models' motions both accurately embody the prompt.
Each generated snapshot is taken at exactly frame 142 while the ground truth's image is frame 70 in its sequence. 
The two motions on the right were synthesized after 42 generations with $10\%$ synthetic augmentation, where the ground truth dataset has size $n=2794$.}
\label{fig:qualitative_human_motion_picture_4}
\end{center}
\end{figure*}

\begin{figure*}[ht]
\begin{center}
\centerline{\includegraphics[width=0.8\columnwidth]{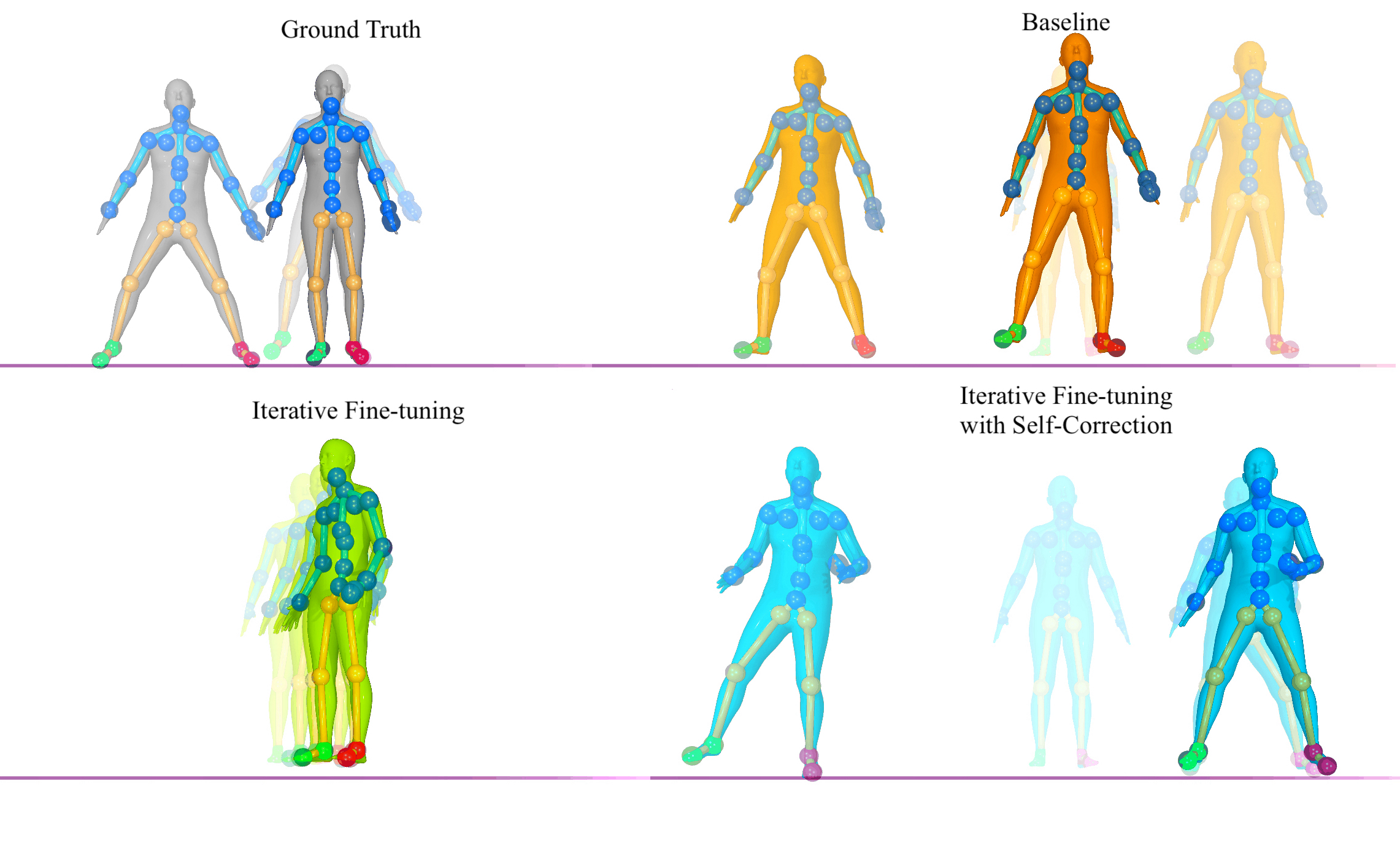}}
\caption{Here we observe that \textbf{\color{green_better}iterative fine-tuning} fails to produce any meaningful motion sequence, but the \textbf{\color{blue_better}iterative fine-tuning with self-correction} and \textbf{\color{orange}baseline} models generate results consistent with their prompt: \textit{walks side ways but back and forth}. 
Each snapshot for the generated motions was taken at exactly frame 120 while the ground truth image is a snapshot from frame 69.
These images were synthesized after 50 generation of the model that was trained on $n=64$ data points at $25\%$ synthetic augmentation.}
\label{fig:qualitative_human_motion_picture_2}
\end{center}
\end{figure*}

\section{Additional Human Motion Generation Quantitative Results}\label{appendix:more_human_motion_graphs}

See Figures~\ref{fig:few_shot_0064_latest}, \ref{fig:few_shot_0128_latest}, \ref{fig:few_shot_0256_latest} for results when the dataset size is $n\in\{64, 128, 256\}$ and the synthetic augmentation percentage is $\lambda\in\{0.25, 0.50, 0.75, 1.00\}$.
And see Figures~\ref{fig:large_scale_latest} and ~\ref{fig:large_scale_average} for additional results on our iterative fine-tuning experiments when the dataset size is $n=2794$ and the synthetic augmentation percentage is $\lambda\in\{0.05, 0.10, 0.15, 0.20, 0.25\}$.
The graphs provide evidence across $17$ experiment settings that our iterative fine-tuning procedure with self-correction yields better training performance than iterative fine-tuning with no self-correction for the motion synthesis task, in accordance with Theorem~\ref{thm:2_shortened}.

\begin{figure*}
\vskip 0.2in
\begin{center}
\centerline{\includegraphics[width=0.9\columnwidth]{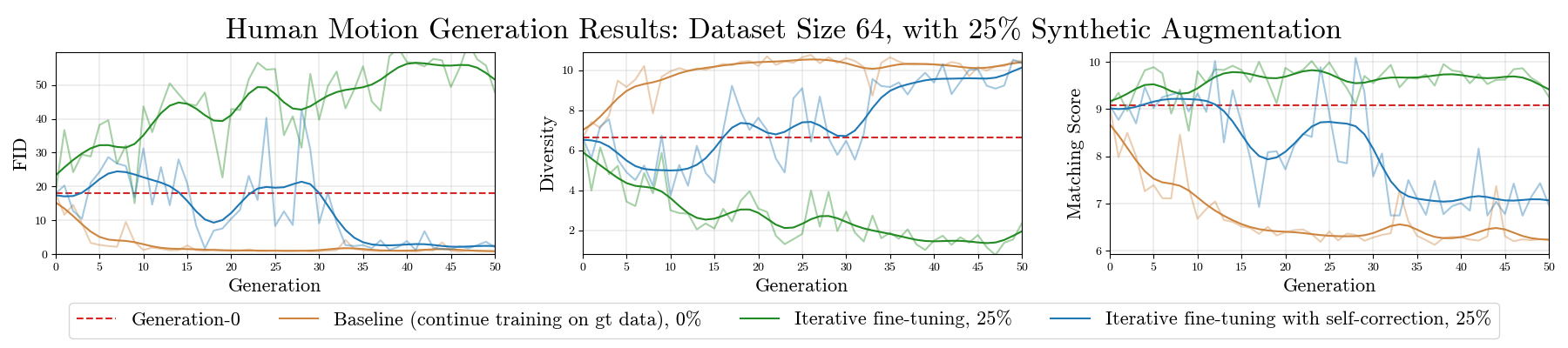}}
\centerline{\includegraphics[width=0.9\columnwidth]{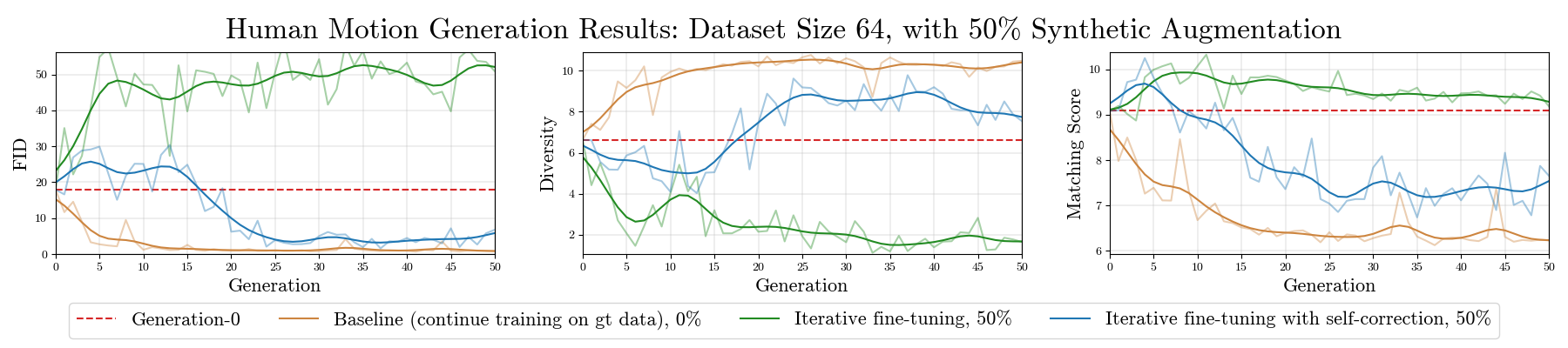}}
\centerline{\includegraphics[width=0.9\columnwidth]{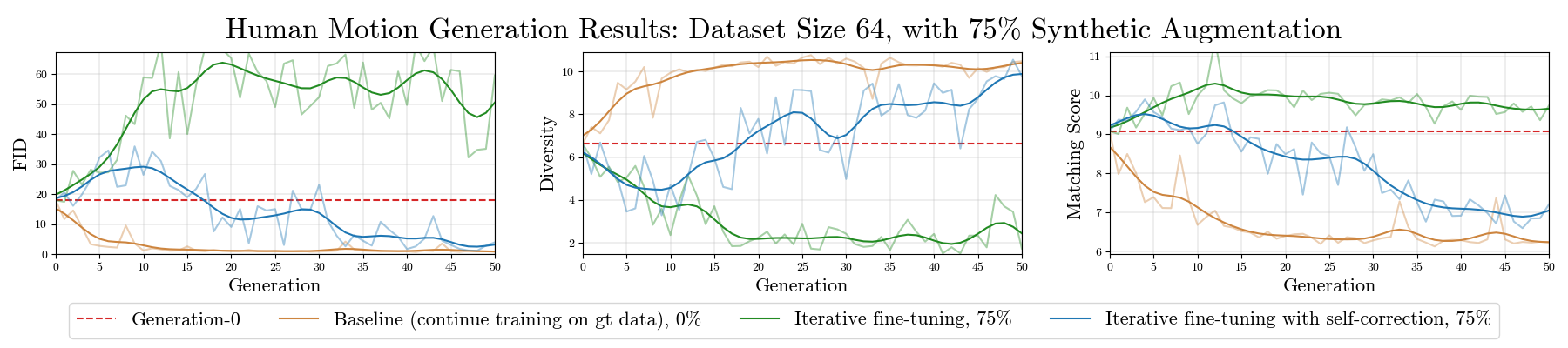}}
\centerline{\includegraphics[width=0.9\columnwidth]{images/graphs_with_smoothed/graphs-iterative-finetuning_0064_shot_1k_results_type_latest/0064_100_1k_iterative_finetuning.png}}
\caption{Results from our human motion experiments with iterative fine-tuning with and without self-correction, where the training set has size $64$.
These are graphs for evaluation metrics on the last checkpoint for every generation; this is the checkpoint used for sampling in the self-consuming loop experiments, and it is also the checkpoint where training is resumed with this new partially synthesized dataset. 
These results demonstrate that \textbf{\color{blue_better}iterative fine-tuning with self-correction} generally outperforms \textbf{\color{green_better}iterative fine-tuning}, and is sometimes even competitive with \textbf{\color{orange}baseline} performance.}
\label{fig:few_shot_0064_latest}
\end{center}
\vskip -0.2in
\end{figure*}

\begin{figure*}
\vskip 0.2in
\begin{center}
\centerline{\includegraphics[width=0.9\columnwidth]{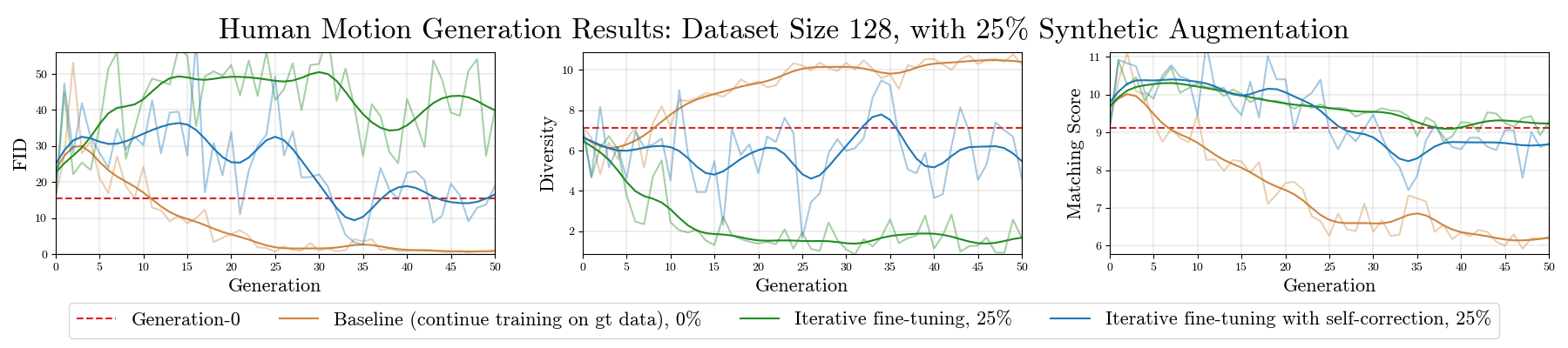}}
\centerline{\includegraphics[width=0.9\columnwidth]{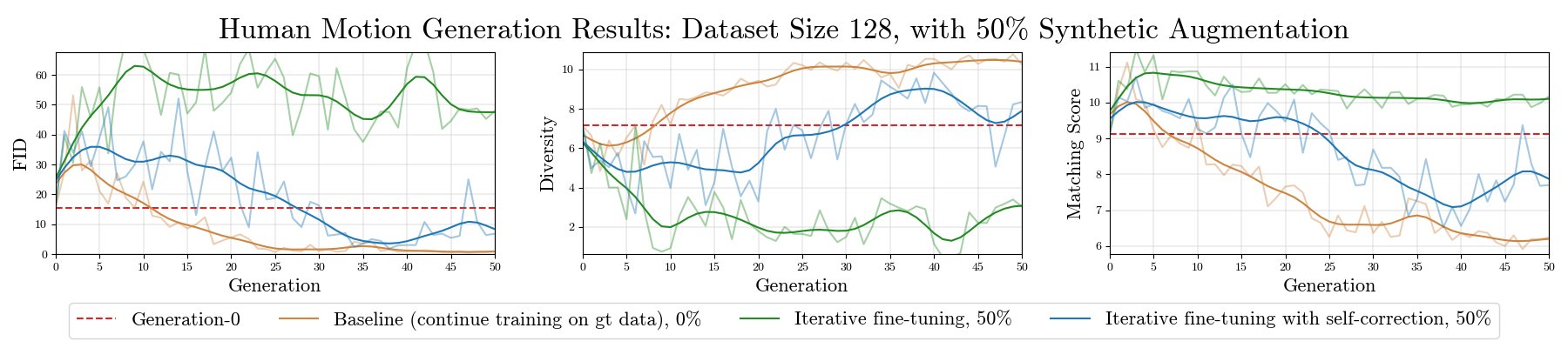}}
\centerline{\includegraphics[width=0.9\columnwidth]{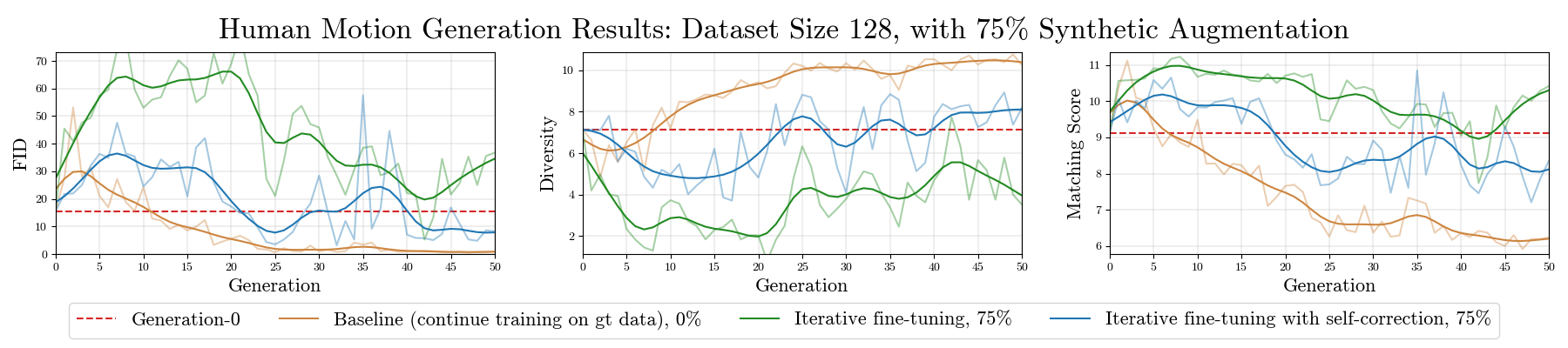}}
\centerline{\includegraphics[width=0.9\columnwidth]{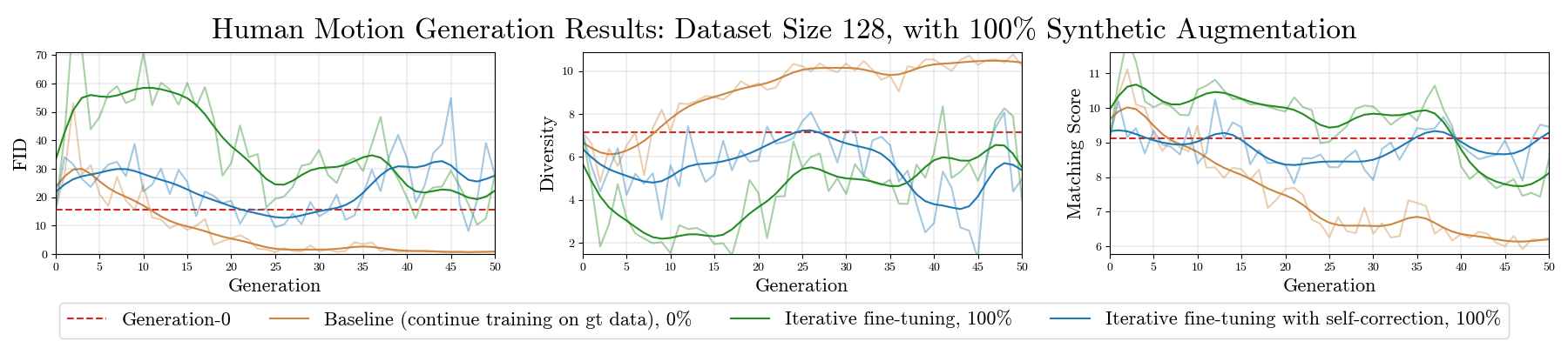}}
\caption{Results from our human motion experiments with iterative fine-tuning with and without self-correction, where the training set has size $128$.
These are graphs for evaluation metrics on the last checkpoint for every generation; this is the checkpoint used for sampling in the self-consuming loop experiments, and it is also the checkpoint where training is resumed with this new partially synthesized dataset. 
These results demonstrate that \textbf{\color{blue_better}iterative fine-tuning with self-correction} generally outperforms \textbf{\color{green_better}iterative fine-tuning}, and is sometimes even competitive with \textbf{\color{orange}baseline} performance.
Notably, the performance gain of \textbf{\color{blue_better}iterative fine-tuning with self-correction} over \textbf{\color{green_better}iterative fine-tuning} is less pronounced than when the dataset size is $n=64$.}
\label{fig:few_shot_0128_latest}
\end{center}
\vskip -0.2in
\end{figure*}

\begin{figure*}
\vskip 0.2in
\begin{center}
\centerline{\includegraphics[width=0.9\columnwidth]{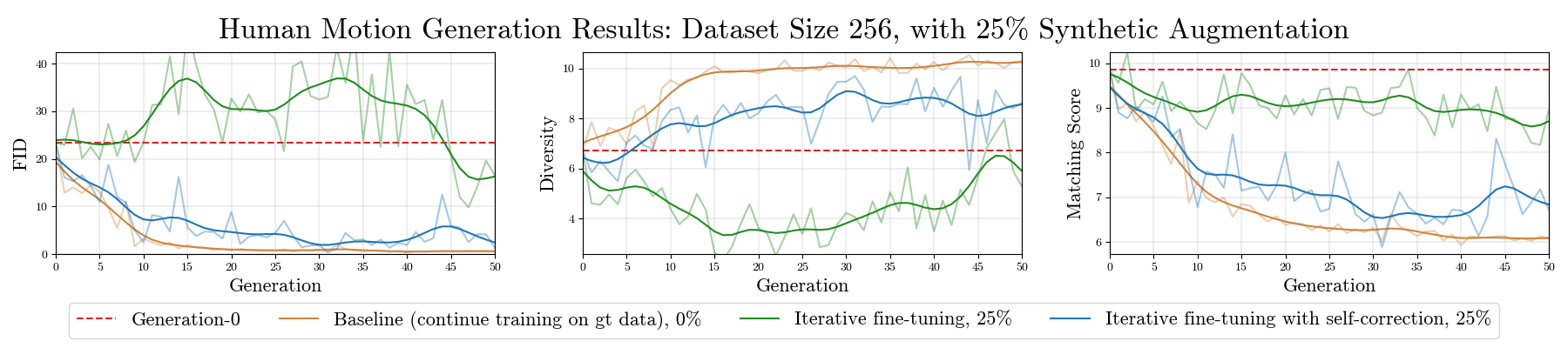}}
\centerline{\includegraphics[width=0.9\columnwidth]{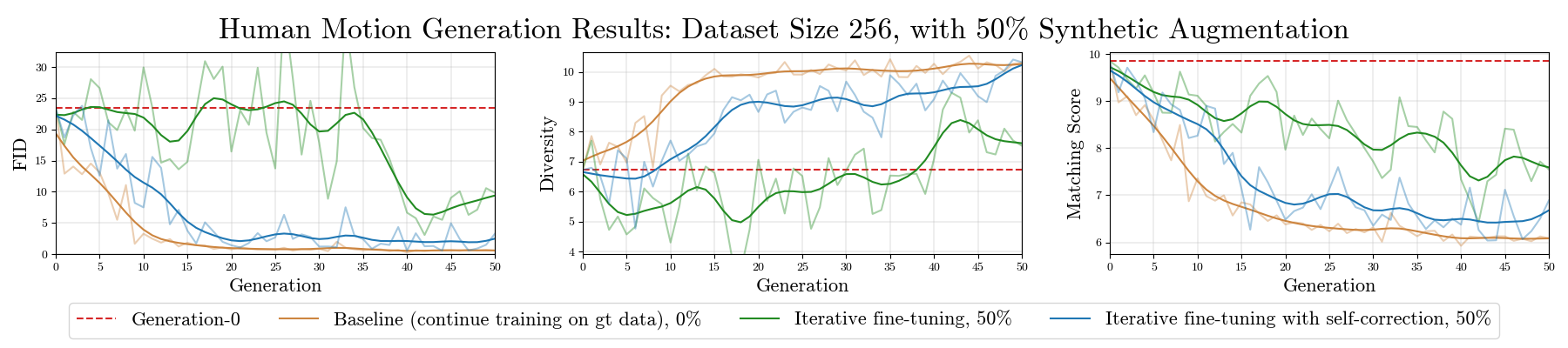}}
\centerline{\includegraphics[width=0.9\columnwidth]{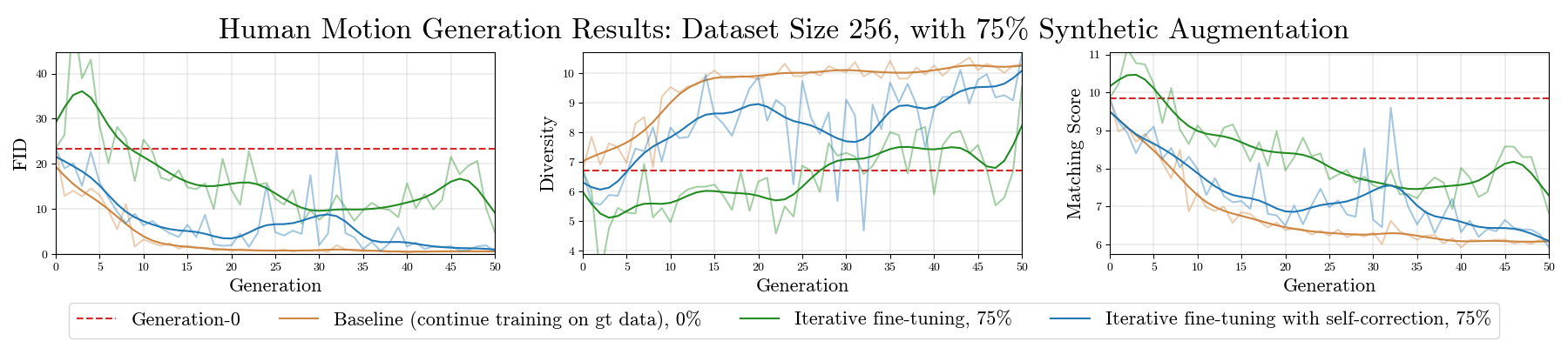}}
\centerline{\includegraphics[width=0.9\columnwidth]{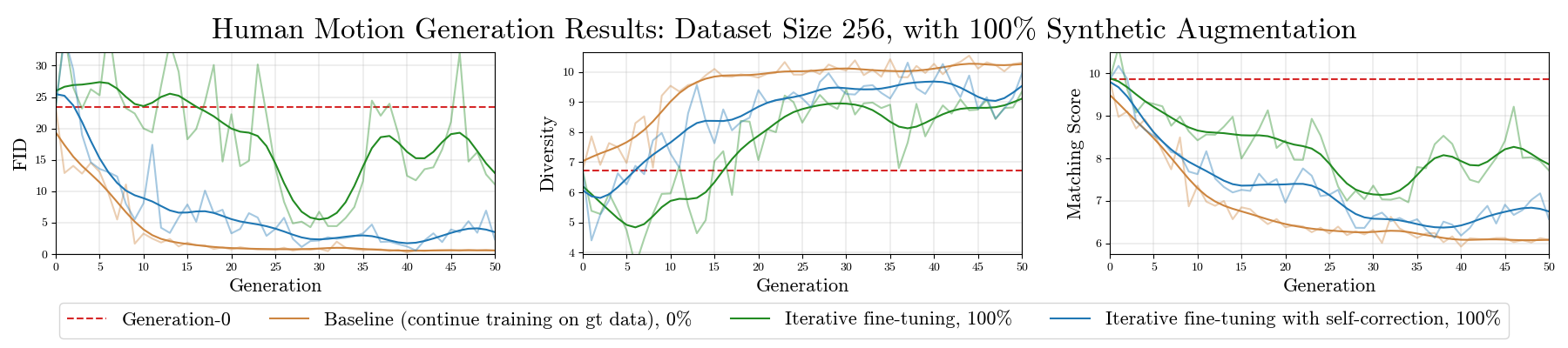}}
\caption{Results from our human motion experiments with iterative fine-tuning with and without self-correction, where the training set has size $256$.
These are graphs for evaluation metrics on the last checkpoint for every generation; this is the checkpoint used for sampling in the self-consuming loop experiments, and it is also the checkpoint where training is resumed with this new partially synthesized dataset. 
These results demonstrate that \textbf{\color{blue_better}iterative fine-tuning with self-correction} generally outperforms \textbf{\color{green_better}iterative fine-tuning}, and is sometimes even competitive with \textbf{\color{orange}baseline} performance.}
\label{fig:few_shot_0256_latest}
\end{center}
\vskip -0.2in
\end{figure*}

\begin{figure*}
\vskip 0.2in
\begin{center}
\centerline{\includegraphics[width=0.9\columnwidth]{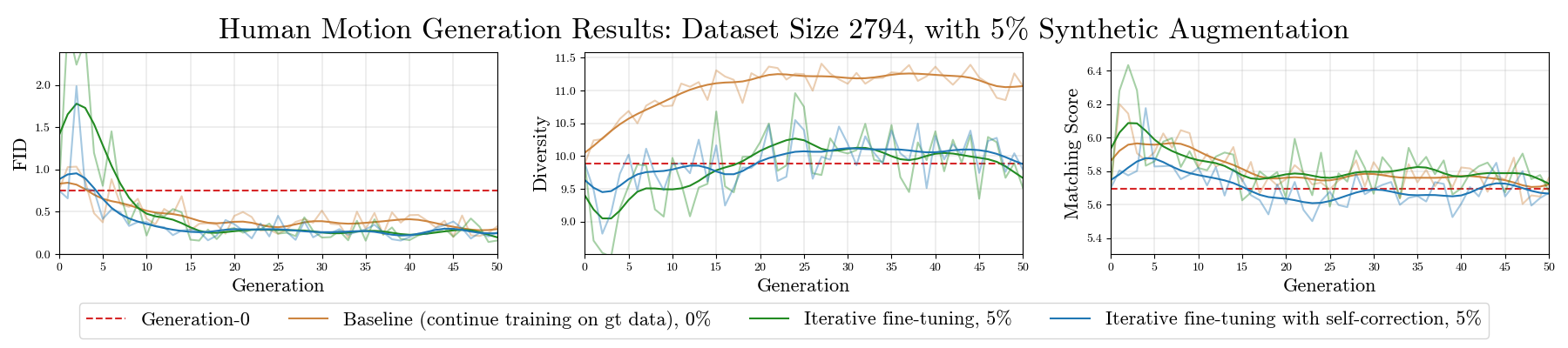}}
\centerline{\includegraphics[width=0.9\columnwidth]{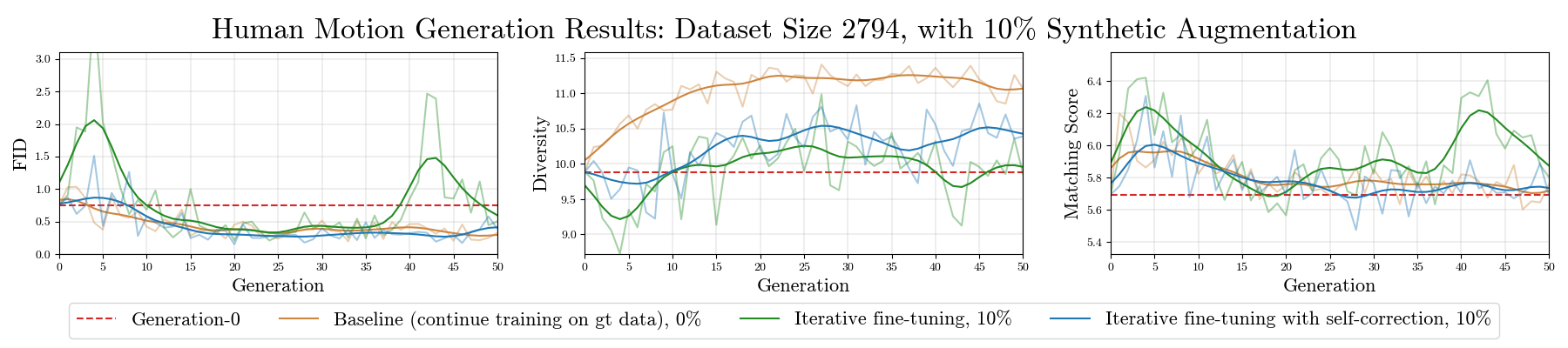}}
\centerline{\includegraphics[width=0.9\columnwidth]{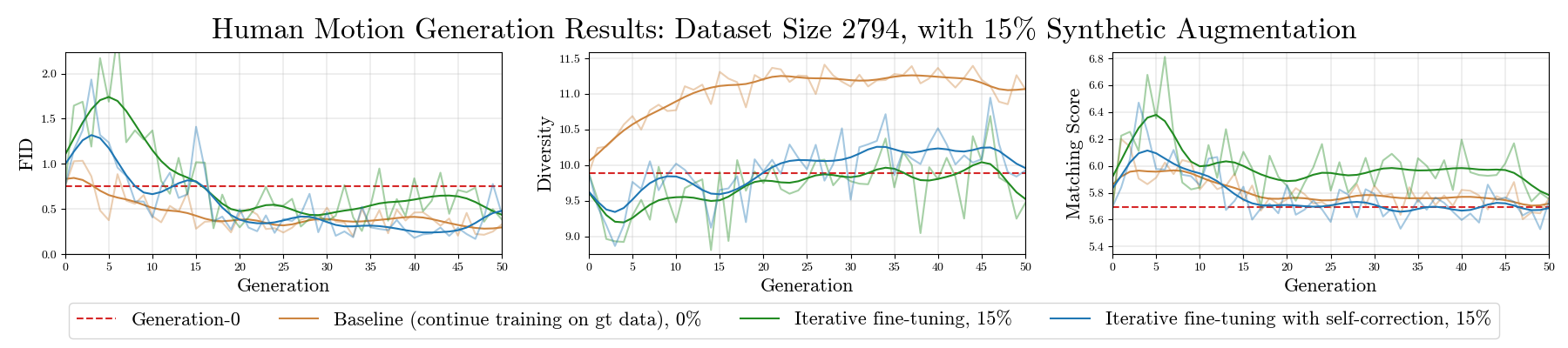}}
\centerline{\includegraphics[width=0.9\columnwidth]{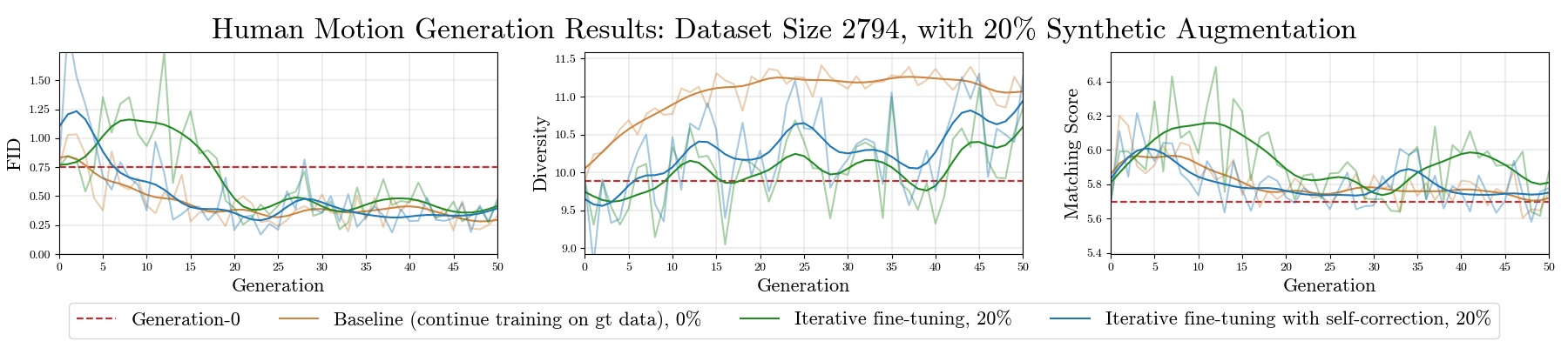}}
\centerline{\includegraphics[width=0.9\columnwidth]{images/graphs_with_smoothed/graphs-iterative-finetuning_40k_results_type_latest/40k_25_iterative_finetuning.png}}
\caption{Results from our human motion experiments on iterative fine-tuning with dataset size $n=2794$.
These are graphs for evaluation metrics on the last checkpoint for every generation; this is the checkpoint used for sampling in the augmentation loop experiments, and it is also the checkpoint where training is resumed with this new synthesized dataset. In these results, it appears as though iterative fine-tuning \textbf{\color{blue_better}with self-correction} has less variance during training than iterative fine-tuning with \textbf{\color{green_better}with no self-correction}, and generally has better FID scores later in training. Notably, the these two curves are closer together than they were in the cases $n\in\{64,128,256\}$.}
\label{fig:large_scale_latest}
\end{center}
\vskip -0.2in
\end{figure*}

\begin{figure*}
\vskip 0.2in
\begin{center}
\centerline{\includegraphics[width=0.9\columnwidth]{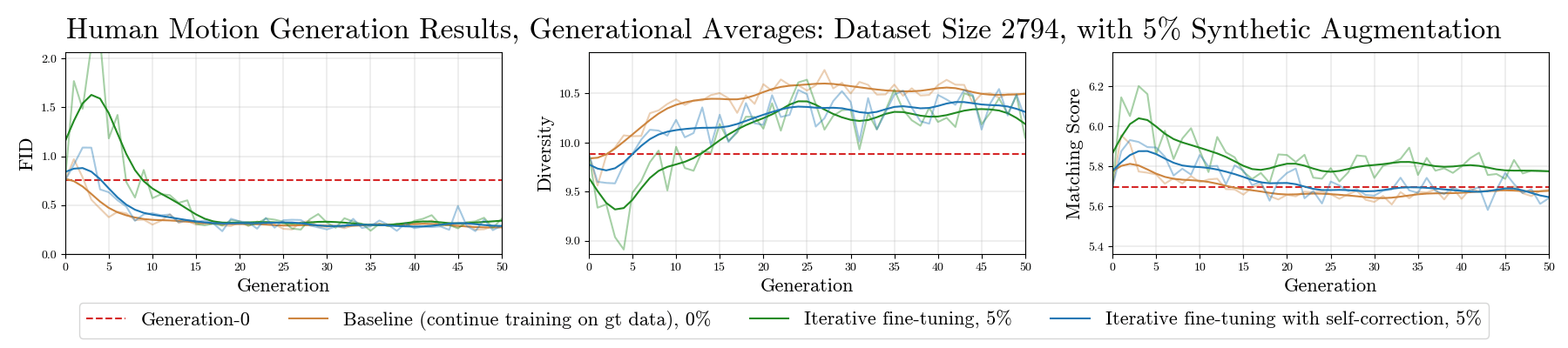}}
\centerline{\includegraphics[width=0.9\columnwidth]{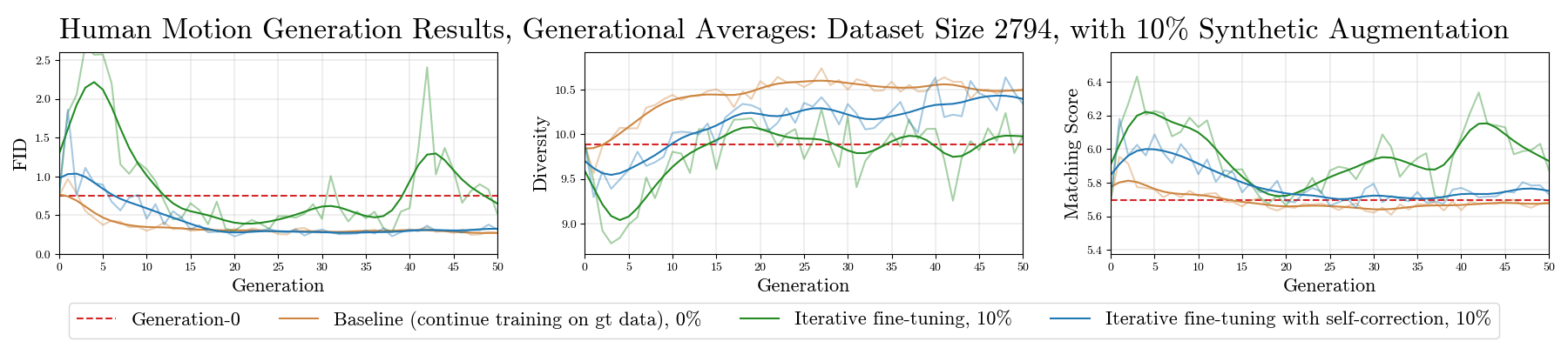}}
\centerline{\includegraphics[width=0.9\columnwidth]{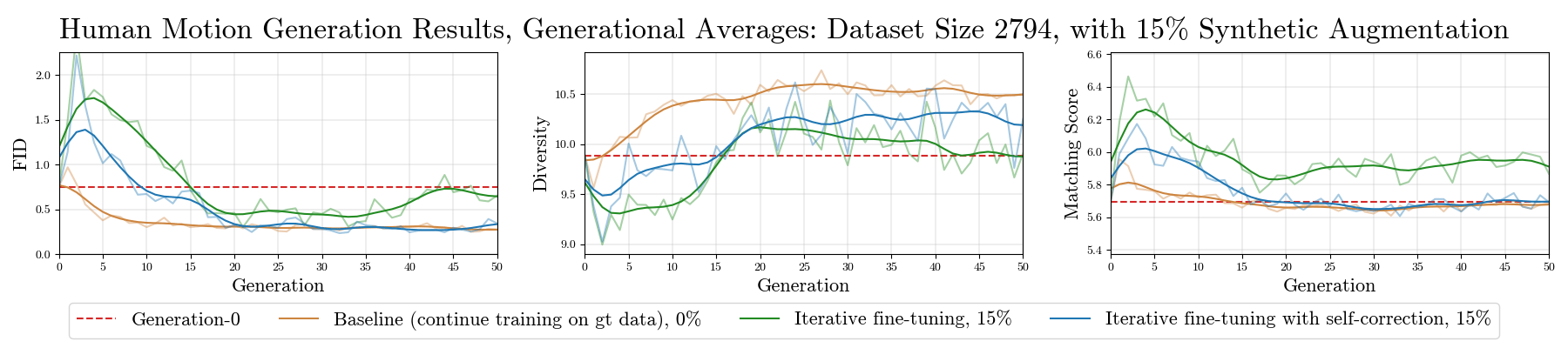}}
\centerline{\includegraphics[width=0.9\columnwidth]{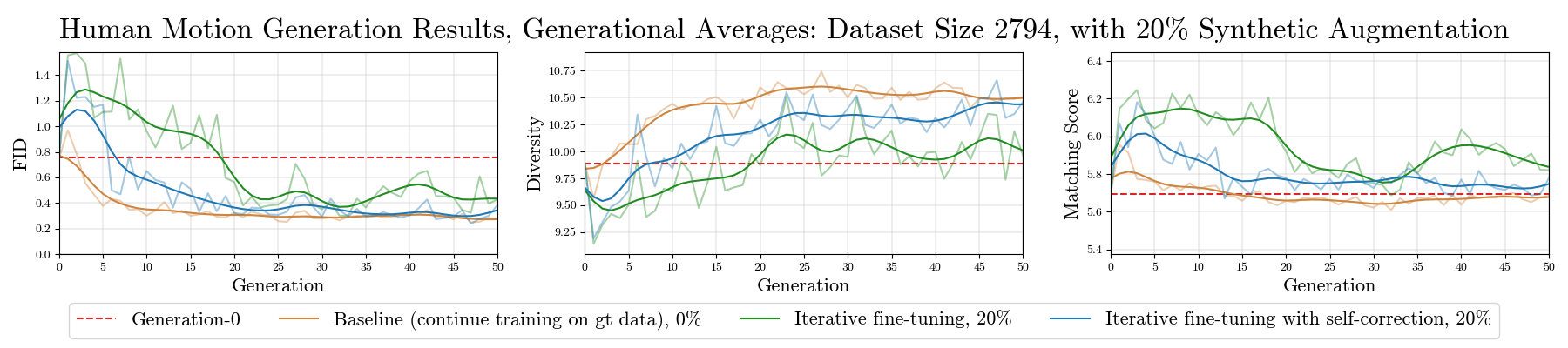}}
\centerline{\includegraphics[width=0.9\columnwidth]{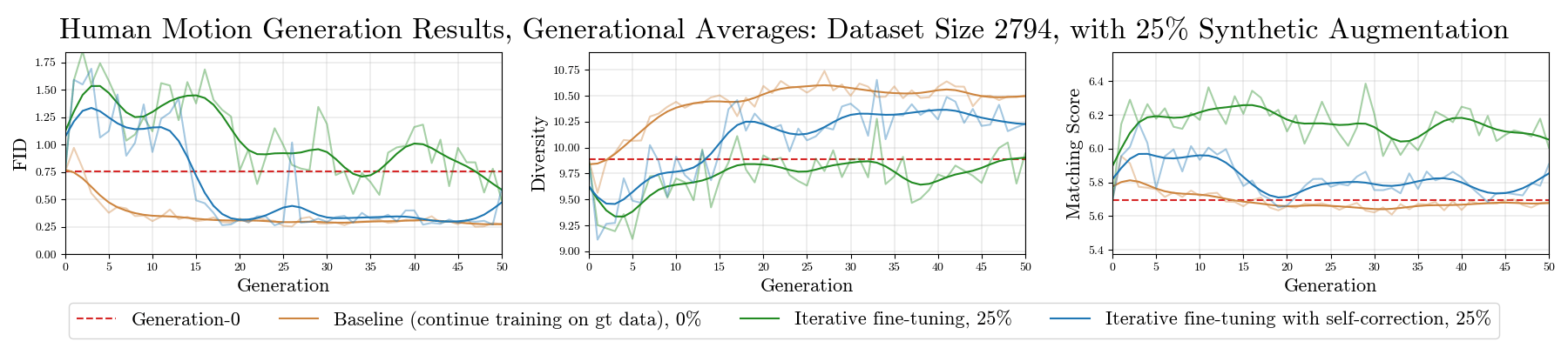}}
\caption{Results from our human motion experiments on iterative fine-tuning with dataset size $n=2794$.
These are graphs of the average evaluation metrics for every generation.
Graphing the average evaluation metrics makes the training dynamics trend over time more clear.
With this additional smoothing, it is more clear that iterative fine-tuning \textbf{\color{blue_better}with self-correction} outperforms iterative fine-tuning \textbf{\color{green_better}with no self-correction}, and is competitive with the \textbf{\color{orange}baseline} after many generations; in fact, it appears to converge to the baseline (on average) for every synthetic augmentation percentage that we considered.}
\label{fig:large_scale_average}
\end{center}
\vskip -0.2in
\end{figure*}

\newpage

\section{Consistency Across Seeds: Additional Human Motion Generation Quantitative Results}\label{appendix:multiple_seeds}

In Figures~\ref{fig:0064_025_seeds},~\ref{fig:0064_050_seeds},~\ref{fig:0064_075_seeds}, and~\ref{fig:0064_100_seeds}, we present experimental results from runs across three more seeds for our human motion experiments when the dataset size is $n=64$.
We find that the self-correction technique consistently yields improved training dynamics over iterative
fine-tuning without correction.

\begin{figure*}
\vskip 0.2in
\begin{center}
\centerline{\includegraphics[width=0.7\columnwidth]{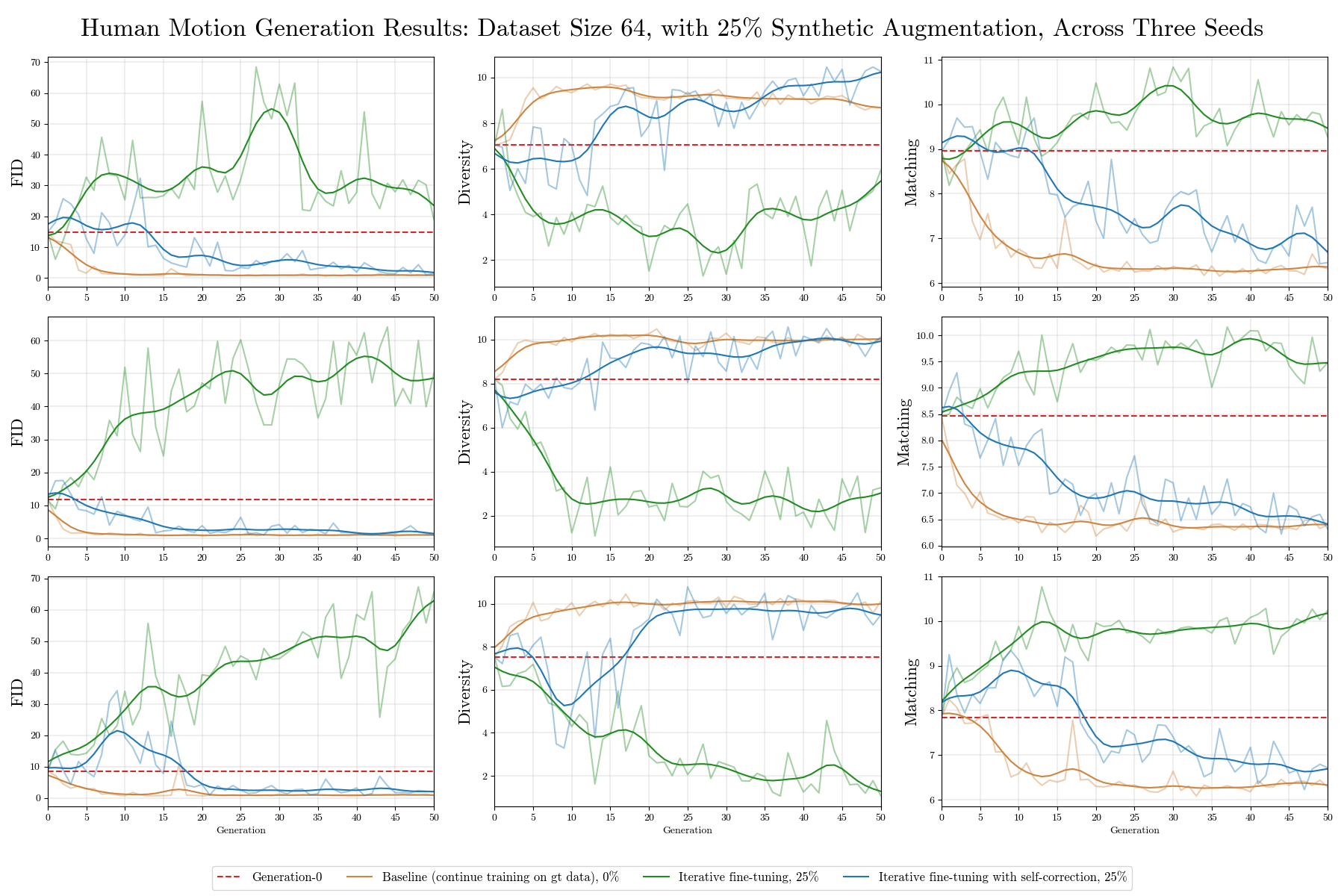}}
\caption{Results from our human motion experiments on iterative fine-tuning, with dataset size $n=64$ and $25\%$ augmentation percentage.
Each row corresponds to a different random seed.
We can see that iterative fine-tuning \textbf{\color{blue_better}with self-correction} consistently outperforms iterative fine-tuning \textbf{\color{green_better}with no self-correction}, and the FID score appears to converge to the \textbf{\color{orange}baseline} after many generations.}
\label{fig:0064_025_seeds}
\end{center}
\vskip -0.2in
\end{figure*}

\begin{figure*}
\vskip 0.2in
\begin{center}
\centerline{\includegraphics[width=0.7\columnwidth]{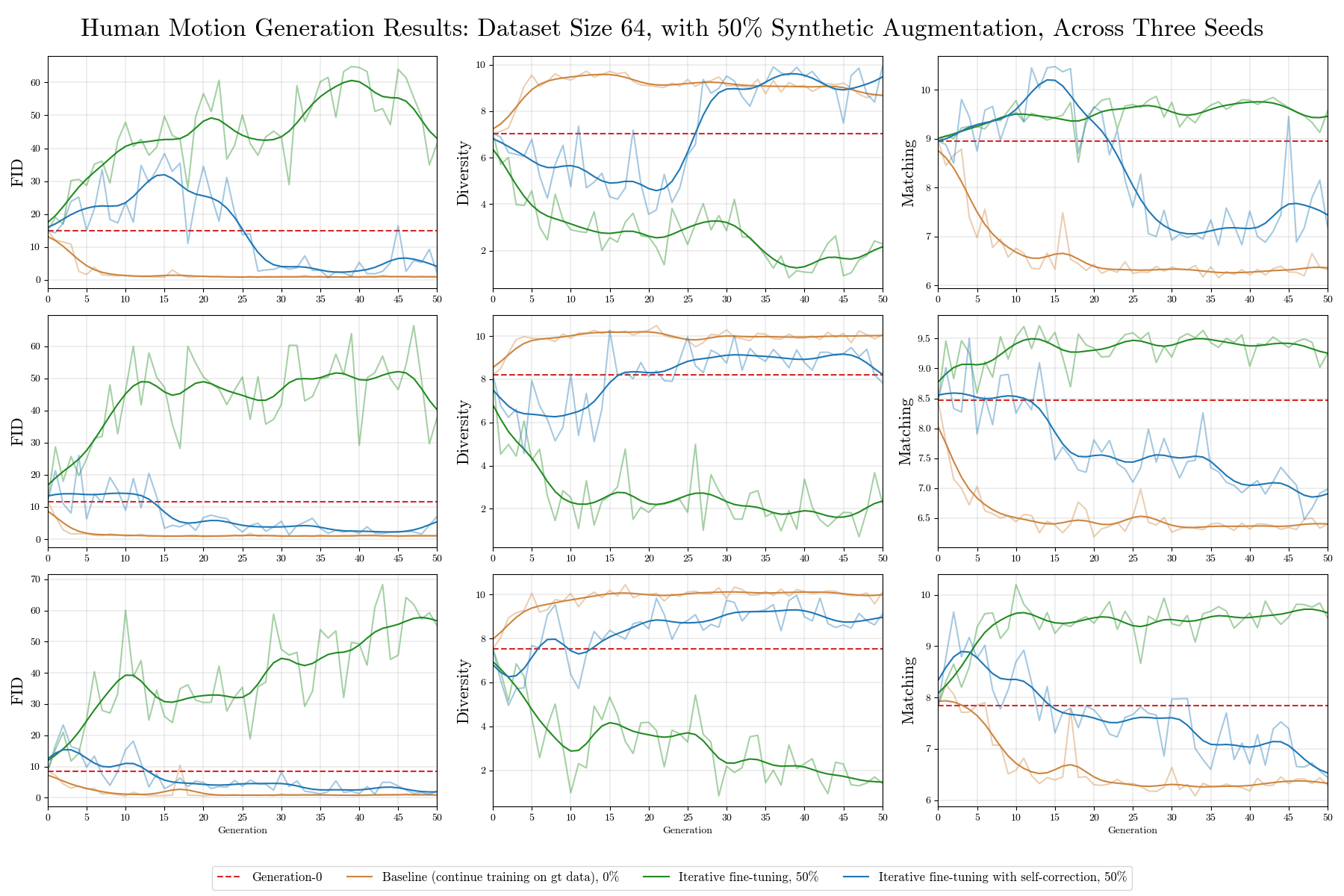}}
\caption{Results from our human motion experiments on iterative fine-tuning, with dataset size $n=64$ and $50\%$ augmentation percentage.
Each row corresponds to a different random seed.
We can see that iterative fine-tuning \textbf{\color{blue_better}with self-correction} consistently outperforms iterative fine-tuning \textbf{\color{green_better}with no self-correction}, and the FID score appears to converge to the \textbf{\color{orange}baseline} after many generations.}
\label{fig:0064_050_seeds}
\end{center}
\vskip -0.2in
\end{figure*}

\begin{figure*}
\vskip 0.2in
\begin{center}
\centerline{\includegraphics[width=0.7\columnwidth]{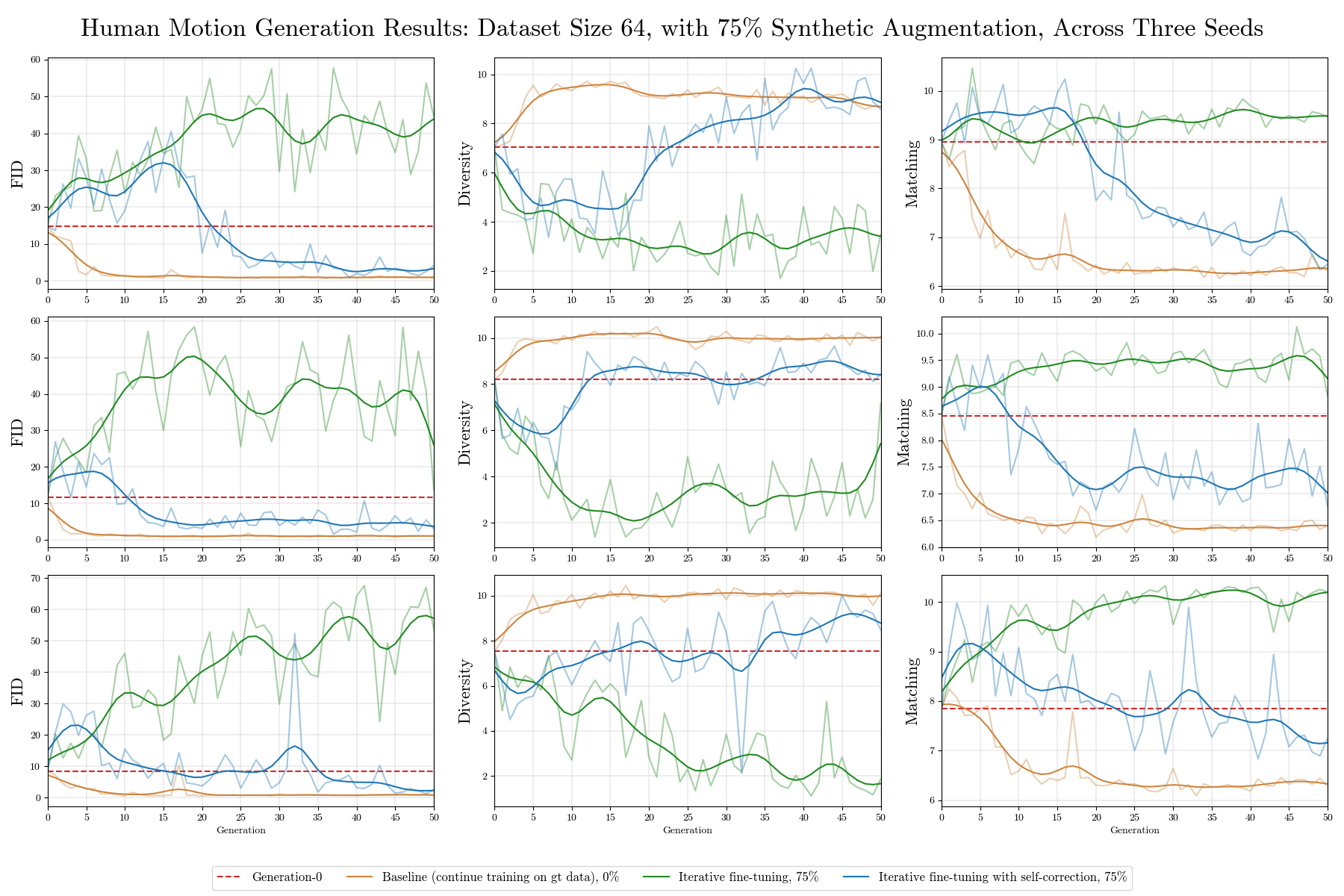}}
\caption{Results from our human motion experiments on iterative fine-tuning, with dataset size $n=64$ and $75\%$ augmentation percentage.
Each row corresponds to a different random seed.
We can see that iterative fine-tuning \textbf{\color{blue_better}with self-correction} consistently outperforms iterative fine-tuning \textbf{\color{green_better}with no self-correction}, and the FID score appears to converge near the \textbf{\color{orange}baseline} after many generations.}
\label{fig:0064_075_seeds}
\end{center}
\vskip -0.2in
\end{figure*}

\begin{figure*}
\vskip 0.2in
\begin{center}
\centerline{\includegraphics[width=0.7\columnwidth]{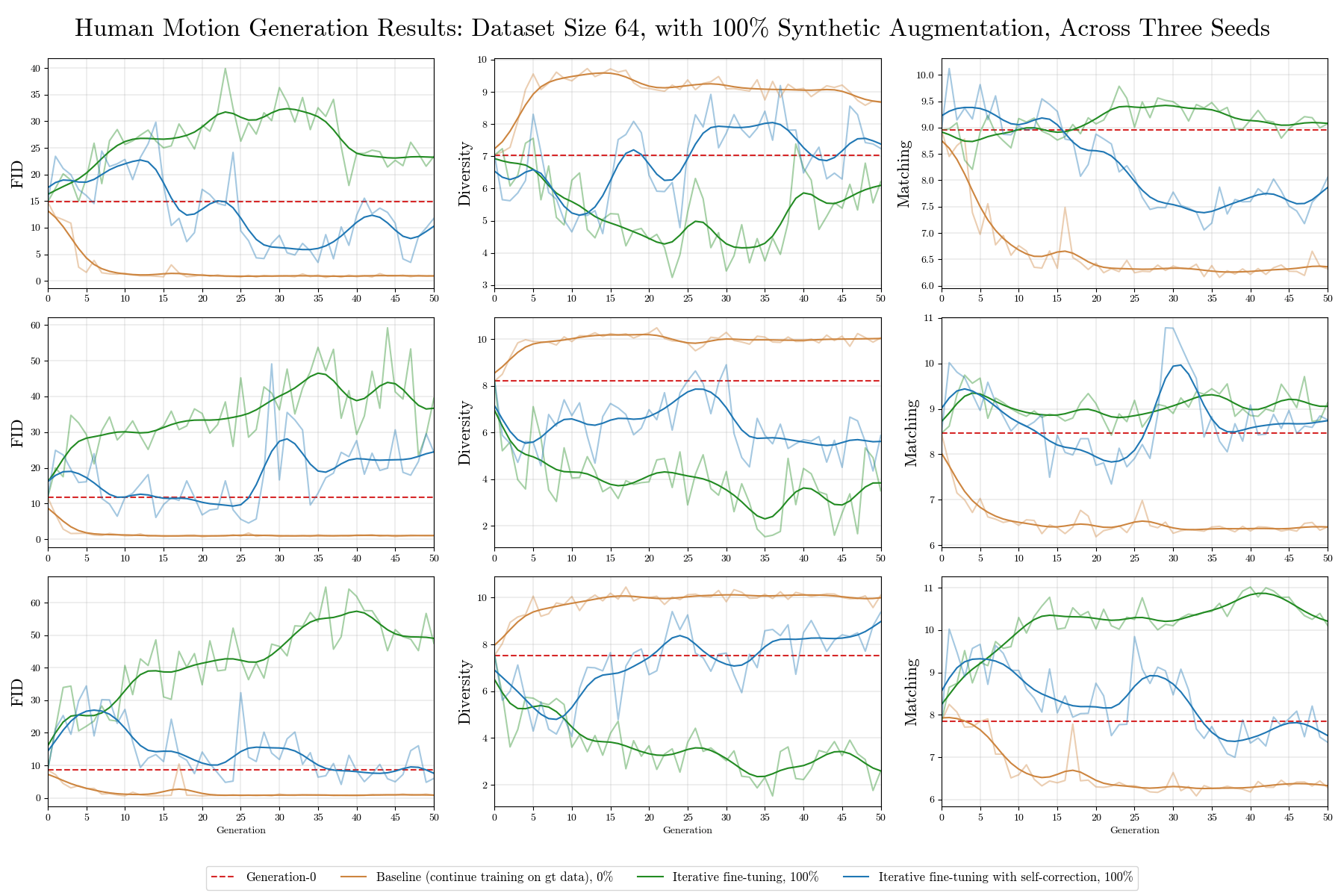}}
\caption{Results from our human motion experiments on iterative fine-tuning, with dataset size $n=64$ and $100\%$ augmentation percentage.
Each row corresponds to a different random seed.
We can see that iterative fine-tuning \textbf{\color{blue_better}with self-correction} consistently outperforms iterative fine-tuning \textbf{\color{green_better}with no self-correction}.
However, we see less stability than in the runs with a lower augmentation percentage. This is in accordance with Theorem~\ref{thm:2_shortened}.}
\label{fig:0064_100_seeds}
\end{center}
\vskip -0.2in
\end{figure*}

\end{document}